\documentclass[12pt]{article}

\usepackage{amssymb}
\usepackage{amsmath}

\interdisplaylinepenalty=2500

\usepackage[dvipsnames]{xcolor}
\usepackage{amsfonts}
\usepackage{scalefnt}

\usepackage[left=2.7cm,right=2.7cm,bottom=3.0cm,top=3.0cm]{geometry}

\usepackage{latexsym}

\usepackage{colortbl}
\usepackage{graphicx}
\usepackage{hyperref}
\usepackage{import}

\usepackage{lscape}

\usepackage[numbers]{natbib}

\usepackage{booktabs}

\usepackage[T1]{fontenc}
\usepackage{url}
\usepackage{nicefrac}
\usepackage{microtype}
\usepackage{amsthm}
\usepackage{paralist}
\usepackage{bm}
\usepackage[version=3]{mhchem}
\usepackage{pgfpages}
\usepackage{scalefnt}
\usepackage{subcaption}
\usepackage{lscape}
\usepackage{rotating}
\usepackage{mathtools}
\usepackage{multirow}

\usepackage{textcomp}

\usepackage{diagbox}

\newtheorem{dfn}{Definition}
\newtheorem{thm}{Theorem}
\newtheorem{lem}{Lemma}
\newtheorem{rmk}{Remark}

\newtheorem{prop}{Proposition}

\usepackage[utf8]{inputenc}
\usepackage{caption}
\captionsetup[table]{font={footnotesize}}
\captionsetup[figure]{font={footnotesize}}
\captionsetup[figure]{font={footnotesize}}
\usepackage{amsmath,algorithm,tabularx}
\usepackage[noend]{algpseudocode}

\makeatletter
\newcommand{\multiline}[1]{%
  \begin{tabularx}{\dimexpr\linewidth-\ALG@thistlm}[t]{@{}X@{}}
    #1
  \end{tabularx}
}
\makeatother

\newcommand{\argmin}{\mathop{\rm arg~min}\limits}

\begin{document}

\title{\Large Denoising Cosine Similarity: A Theory-Driven Approach for Efficient Representation Learning}

\author{\normalsize Takumi Nakagawa$^{\textup{a},\textup{d},}$\footnote{Equally Contributions} , Yutaro Sanada$^{\textup{b},*,}$\footnote{Part of this work was done by Y. Sanada while he was a master's student at the University of Tokyo.} , Hiroki Waida$^{\textup{a},*}$, Yuhui Zhang$^{\textup{a},*}$, \\ \normalsize Yuichiro Wada$^{\textup{c},\textup{d}}$, K\={o}saku Takanashi$^{\textup{d}}$, Tomonori Yamada$^{\textup{b}}$, Takafumi Kanamori$^{\textup{a},\textup{d},}$\footnote{Corresponding author <\texttt{kanamori@c.titech.ac.jp}>}}

\date{%
    \small
    $^{\textup{a}}$ Department of Mathematical and Computing Science, Tokyo Institute of Technology, Tokyo, Japan\\
    \small
    $^{\textup{b}}$ Graduate School of Engineering, The University of Tokyo, Tokyo, Japan\\
    \small
    $^{\textup{c}}$ Fujitsu Limited, Kanagawa, Japan\\
    \small
    $^{\textup{d}}$ RIKEN AIP, Tokyo, Japan
}

\maketitle

%%%%%%%%%%%%%%%%%%%%%%%%%%%%%%%%%%%%%%%%%%%%%%%%%%%%%%%%%%%%%%%%%%%%%%%%%%%%%%%%%%%%%%%%%%%%%%%%%

\begin{abstract}
Representation learning has been increasing its impact on the research and practice of machine learning, since it enables to learn representations that can apply to various downstream tasks efficiently. However, recent works pay little attention to the fact that real-world datasets used during the stage of representation learning are commonly contaminated by noise, which can degrade the quality of learned representations. This paper tackles the problem to learn robust representations against noise in a raw dataset. To this end, inspired by recent works on denoising and the success of the cosine-similarity-based objective functions in representation learning, we propose the denoising Cosine-Similarity (dCS) loss. The dCS loss is a modified cosine-similarity loss and incorporates a denoising property, which is supported by both our theoretical and empirical findings. To make the dCS loss implementable, we also construct the estimators of the dCS loss with statistical guarantees. Finally, we empirically show the efficiency of the dCS loss over the baseline objective functions in vision and speech domains.\\
Keywords: Unsupervised Representation Learning, Robust Representation Learning, Self-supervised Learning
\end{abstract}

%%%%%%%%%%%%%%%%%%%%%%%%%%%%%%%%%%%%%%%%%%%%%%%%%%%%%%%%%%%%%%%%%%%%%%%%%%%%%%%%%%%%%%%%%%%%%%%%%%%%%%%%%%%%%%%%%%%%%%%%%%%%%%%%%%%%%%%%%%

%sec 1
%\section{Introduction}
\section{INTRODUCTION}
\label{sec:intro}
Representation Learning (RL) is one of the most popular fields in machine learning research since it improves performance in downstream tasks, e.g., supervised learning and clustering. Many RL methods have been proposed in various domains, such as vision~\citep{chen2020simple,grill2020bootstrap,he2020momentum,henaff2020data,caron2020unsupervised,chen2021exploring,li2021self,dwibedi2021little}, speech~\citep{liu2021tera}, and language~\citep{giorgi2021declutr,gao2021simcse}.
In RL, an encoder is trained in order to extract useful information from raw data. However, it is pointed out by~\citep{boncelet2009image} that raw data obtained through sensors or other devices can be noisy. In addition, it is shown by~\citep{kim2020learning} that such noisy data tends to interfere with a Neural Network (NN)-based encoder learning useful representations for downstream tasks.

To tackle representation learning in the presence of noise in data, in this study, we focus on the \textit{Denoising RL} (DRL) setting: 
\begin{itemize}
    \item An unlabeled training set of raw data is given, and the raw data is noisy. The goal is to build an efficient NN-based encoder for downstream tasks using only the noisy training set.
\end{itemize}

While denoising and RL have been studied separately by many previous works, the study of the combination of these problems is little investigated despite its importance. Here, to consider how to construct an efficient algorithm under the DRL setting, we review these problems separately:

\paragraph{1) Denoising}
Denoising methods have been proposed in various domains, such as vision~\citep{lehtinen2018noise2noise, krull2019noise2void, batson2019noise2self, moran2020noisier2noise, quan2020self2self, zhussip2019extending, huang2021neighbor2neighbor,kim2021noise2score} and speech~\citep{zhang2019deep,9616166,kashyap21_interspeech,sanada22interspeech}. The purpose of these methods is to predict the clean data of noisy data. Typically, using a noisy dataset, an AutoEncoder (AE) is trained by minimizing a loss, and then the trained AE is used as the predictor. In the vision (resp. speech) domain, for example, the AE is defined by a U-Net~\citep{ronneberger2015u} (resp. a Wave U-Net~\citep{stoller2018wave}). As for the loss, in the vision domain, it is commonly defined via the Mean Squared Error (MSE)~\citep{lehtinen2018noise2noise,krull2019noise2void}. On the other hand, in speech, the Cosine-Similarity (CS) loss is often employed~\citep{kashyap21_interspeech,sanada22interspeech}. We emphasize that \emph{a trained encoder obtained under denoising purpose is not necessarily efficient for downstream tasks}, as shown in our numerical experiments.

\paragraph{2) RL}
There are two popular methods: the AE-based methods~\citep{vincent2010stacked,he2021masked,liu2021tera} and the self-supervised learning methods that use data augmentation~\citep{chen2020simple,grill2020bootstrap,chen2021exploring}. In the AE-based methods,  given an unlabeled dataset, an AE is trained by minimizing a loss, and then the trained encoder is used to extract the representation. The loss function is usually defined via the MSE~\citep{vincent2010stacked,he2021masked,liu2021tera}. On the other hand, in the self-supervised representation learning methods, the CS is often employed to define the objective function. In recent years, self-supervised representation learning has been studied actively in many domains, such as the vision domain~\citep{chen2020simple,chen2020improved,grill2020bootstrap,caron2020unsupervised,henaff2020data,chen2021exploring,li2021self,zbontar2021barlow,haochen2021provable,dwibedi2021little} and the language domain~\citep{giorgi2021declutr,gao2021simcse}, because of its high performance.

As seen above, for denoising and RL, the CS plays an important role in many domains. Thus, it is worthwhile to use CS for learning representation from noisy data. A naive approach is to learn an AE-based model by minimizing the CS between the reconstruction $\hat{\bm{s}}$ from a noisy data $\bm{x}$ and the corresponding clean data $\bm{s}$, as some similar approach with the MSE-based loss has been investigated by the previous study~\citep{zhang2017beyond}. However, this naive approach does not work in the DRL setting since the clean $\bm{s}$ is supervised data. Inspired by the recent denoising methods that require only noisy data~\citep{krull2019noise2void,sanada22interspeech}, we aim to propose a modified CS loss, which can enhance the efficiency of the CS in the DRL setting. Towards achieving this goal, we propose the denoising CS (dCS) loss with the theoretically guaranteed denoising property~\footnote{This study is an extension of the denoising method proposed in \citet{sanada22interspeech}; see the last paragraph of Section~\ref{subsec:relation to dCS} for the comparison with~\citet{sanada22interspeech}.}. The dCS loss is defined without any clean data. Remarkably, the minimization of the dCS loss is closely related to that of the minimization of the 
%\textcolor{blue}{averaged} 
CS loss defined with $\bm{b}\odot\bm{s}$ and the masked reconstruction $\bm{b}\odot\hat{\bm{s}}$ from the noisy $\tilde{\bm{x}}$, where $\bm{b}$ is a Bernoulli random vector and $\odot$ denotes the Hadamard product. Thus, the dCS loss has the potential to obtain good representation for downstream tasks of many domains, where only noisy raw data is available.

Our main contributions are summarized as follows: Firstly, we propose the dCS loss (Section~\ref{subsec:def of dcs loss}) based on the theoretical background (Section~\ref{subsec:Theory behind Our Objective}). Secondly, we investigate the practical implementation of the dCS loss from the statistical estimation viewpoints (Section~\ref{subsubsec:estimation for k}). Thirdly, in our numerical experiments, we show that the proposed loss can enhance the efficiency of representation learning in multiple DRL settings (Section~\ref{sec:numerical experiments}).

At the end of this section, we summarize the structure of the rest of this paper. In Section~\ref{sec:relatedworks}, we introduce details of the aforementioned existing methods. Then, we discuss the connection between those methods and our dCS loss. In Section~\ref{sec:proposed-method}, we present the definition of the dCS loss and its theoretical properties. In Section~\ref{sec:numerical experiments}, we demonstrate the efficiency of the proposed loss in multiple DRL settings using standard real-world datasets. Finally, we conclude this study and discuss the future work in Section~\ref{sec:conclusion}.

%%%%%%%%%%%%%%%%%%%%%%%%%%%%%%%%%%%%%%%%%%%%%%%%%%%%%%%%%%%%%%%
\section{RELATED WORK}
\label{sec:relatedworks}
First, in Section~\ref{subsec:Review of Denoising Methods}, details of denoising methods listed in 1) Denoising of Section~\ref{sec:intro} are introduced. Then, in Section~\ref{subsec:Review of Representation Learning Methods}, details of self-supervised learning methods listed in 2) RL of Section~\ref{sec:intro} are introduced. For details of the AE-based RL methods listed in 2) RL of Section~\ref{sec:intro}, see Appendix~\ref{append:Further Details with Representation Learning}.
In Section~\ref{subsec:relation to dCS}, the relations between those methods and the dCS loss are discussed.

\subsection{Denoising Methods}
\label{subsec:Review of Denoising Methods}

\paragraph{Vision Domain}
\citet{lehtinen2018noise2noise} proposed Noise2Noise (N2N). N2N uses a set of pairs of noisy images to train a U-Net~\citep{ronneberger2015u} with the MSE-based loss. Here, the paired noisy images share the same clean image. After the training, the trained U-Net is used to predict the clean image of the noisy image. \citet{krull2019noise2void} proposed Noise2Void (N2V), which also employs a U-Net for predicting clean image, and the loss is defined via the MSE. In contrast to N2N, N2V requires only a single noisy image. Note that N2N and N2V are self-supervised denoising methods, i.e., the minimization of these losses is equivalent to that of an MSE-based loss defined via the clean data; for the detailed mathematical arguments of N2N, see~\citet{zhussip2019extending}. For completeness, we give an overview of the mathematical equivalence of N2V in~Appendix~\ref{append:Review of Noise2Void}.

\paragraph{Speech Domain}
\citet{kashyap21_interspeech} proposed a denoising method in the speech domain, partially inspired by N2N. In this method, a denoiser model is trained by a pair of noisy speech data using an objective defined via the
CS loss. Although the denoising performance is competitive, the theoretical guarantee is not sufficiently discussed. \citet{sanada22interspeech} proposed a variant of N2V (SDSD) in the speech domain, which is also based on the CS loss. In addition, they provided a theoretical guarantee to SDSD.

\subsection{Self-supervised Representation Learning Methods}
\label{subsec:Review of Representation Learning Methods}

\subsubsection{An Overview of Recent Self-supervised Representation Learning Methods}
\label{subsubsec:an overview of recent ssl methods}
\paragraph{Vision Domain}
Recent self-supervised representation learning has utilized the data augmentation techniques: pairs of positive samples are generated by applying data augmentation to raw data \citep{chen2020simple,chen2020improved}.
Several recent works~\cite{chuang2020debiased,robinson2021contrastive,chuang2022robust} tackle the problems where data augmentation techniques cause some inefficient effects on producing pairs of similar or dissimilar data. The learning criterion is diversifying. For instance, \citet{chen2020simple,he2020momentum,henaff2020data} 
proposed contrastive learning methods based on InfoNCE~\citep{oord2018representation}, which is a lower-bound of mutual information; see~\citet{poole2019variational}. On the other hand, \citet{grill2020bootstrap} proposed BYOL, whose objective is to make feature vectors of similar data points close in the feature space, a.k.a. minimization of the positive loss. Then, \citet{chen2021exploring} introduced a simplified variant of BYOL named SimSiam. Although several works~\citep{caron2020unsupervised,li2021self,dwibedi2021little,zbontar2021barlow,haochen2021provable} have shed light on contrastive learning from various perspectives, the CS  is a popular choice for the similarity measure~\citep{chen2020simple,grill2020bootstrap,he2020momentum,chen2021exploring,dwibedi2021little}.

\paragraph{Language Domain}
The self-supervised representation learning also has been studied in the language domain. For instance, \citet{giorgi2021declutr} proposed a self-supervised learning objective for sentence embedding tasks, where the objective does not require labels for the training. \citet{gao2021simcse} proposed a method of contrastive learning termed SimCSE, which utilizes dropout as data augmentation.

\subsubsection{SimSiam Revisit}
\label{append:simsiam}
We revisit SimSiam~\citep{chen2021exploring} to give an instance of recent self-supervised representation learning methods.
\citet{chen2021exploring} have observed that a simplified framework of BYOL \citep{grill2020bootstrap}, i.e., SimSiam, still achieved competitive performance. The loss function of SimSiam consists of the cosine similarity between pairs of similar data points. Moreover, maximization of the similarity makes similar data aligned in the feature space. In order to prevent features from being collapsed, SimSiam also employs the stop-gradient operation as BYOL does.

Here, we formulate the original framework of SimSiam, following~\citet{chen2021exploring}. Let $f_{\psi}$ be an encoder, $g_{\xi}$ an additional prediction MLP, and $f_{\Hat{\psi}}$ an encoder with the stop-gradient technique. Note that in the initial stage of each step in training, $f_{\Hat{\psi}}$ is created by freezing the parameters of the encoder $f_\psi$. 
Let $\ell_{\rm{CS}}(\bm{u},\bm{v})$ denote the CS loss between $\bm{u}\in\mathbb{R}^{D}$ and $\bm{v}\in\mathbb{R}^{D}$: 
\begin{equation}
\label{eq:cosine-similarity loss}
    %\ell_{\rm{CS}}(\bm{u},\bm{v})= -\langle \bm{u},\bm{v}\rangle / (\|\bm{u}\|_{2}\|\bm{v}\|_{2}).
    \ell_{\rm{CS}}(\bm{u},\bm{v}) = -\frac{\langle \bm{u},\bm{v}\rangle}{\|\bm{u}\|_{2}\|\bm{v}\|_{2}}.
\end{equation}
Then, the loss function of SimSiam is defined as
\begin{equation}
\label{eq:simsiam loss}
        L_{\rm SimSiam}(\psi,\xi) = \frac{1}{2}\mathbb{E}_{\bm{x}^\prime,\bm{x}^{\prime\prime}}[\ell_{\rm{CS}}(g_{\xi}({f_{\psi}}(\bm{x}^\prime)), f_{\Hat{\psi}}(\bm{x}^{\prime\prime})) +\ell_{\rm{CS}}(g_{\xi}(f_{\psi}(\bm{x}^{\prime\prime})), f_{\Hat{\psi}}(\bm{x}^\prime))],
\end{equation}
where both $\bm{x}^\prime,\bm{x}^{\prime\prime}$ are constructed from a raw data $\bm{x}$ via data-augmentation techniques.

Unlike MoCo \citep{he2020momentum} and BYOL, the SimSiam framework does not use a momentum encoder. Furthermore, \citet{chen2021exploring} report that SimSiam competes with the other state-of-the-art frameworks even if the batch sizes during training are small, e.g., 256. In contrast, several other methods~\citep{chen2020simple,grill2020bootstrap,caron2020unsupervised} often require much larger batch size.

\subsection{Relations to Our dCS Loss}
\label{subsec:relation to dCS}
The dCS is partially inspired by N2V~\citep{krull2019noise2void}. The dCS and N2V have theoretical guarantees and require only single noisy data. The differences between the two methods are summarized as follows. Firstly, the dCS is based on the CS, while N2V is based on the MSE. Secondly, the noise assumption of dCS is relatively stronger than that of N2V; For the noise assumption of N2V, see (A7) in Appendix~\ref{append:Review of Noise2Void}. Here, the noise assumption of dCS is summarized below:
\begin{itemize}
    \item[(A0)] Noise is modeled by a zero-mean light-tailed isotropic distribution. 
\end{itemize}
In our numerical experiments, despite the relatively stronger assumption, the dCS can be more advantageous than N2V with multiple DRL settings.

Regarding the relation between the self-supervised learning methods (e.g., \citep{chen2020simple,grill2020bootstrap,he2020momentum,chen2021exploring,dwibedi2021little}) and the dCS, many of them attach importance to the CS as the similarity measurement. Therefore, the dCS loss can potentially collaborate with these self-supervised learning methods. In our numerical experiments, we demonstrate that the performance of SimSiam~\citep{chen2021exploring} is enhanced by adding the dCS loss as a regularizer, compared to several baseline regularizers under the DRL setting.

We note that the recent work of~\citet{dong2021residual} addresses a similar problem to DRL by incorporating denoising into contrastive learning. However, the two methods have the following significant difference. \citet{dong2021residual} focus on the \textit{residual term} to propose their heuristic method, while we focus on the cosine-similarity to propose the theoretically guaranteed method: dCS.

At last, we present the differences between the prior work~\citep{sanada22interspeech} and this study since this work is an extension of the prior. The differences are summarized as follows:
\begin{enumerate}
    \item The method SDSD of~\citet{sanada22interspeech} is proposed for speech denoising, while this work deals with DRL.
    
    \item The dCS is based on a weaker noise assumption than SDSD: in \citet{sanada22interspeech}, the noise is modeled by a sequence of independent and identically distributed (iid) Gaussian random variables with zero means; see Proposition~1 of~\citet{sanada22interspeech}. Thus, the assumption of this study (see (A0)) is weaker than that of \citet{sanada22interspeech}.
    
    \item The objective of SDSD (see Eq.(1) in the prior work) is not fully justified by Proposition~1 of~\citet{sanada22interspeech}, since the random subset $\tau$ is not discussed in the proposition. On the other hand, our objective based on the dCS is fully justified by our theory.
    
    \item \citet{sanada22interspeech} do not provide the statistical estimator of the weight $k$ in Eq.(7) of the prior work. On the other hand, this study provides the estimators with theoretical guarantees.
    
    \item \citet{sanada22interspeech} investigate the empirical performance of SDSD in the speech domain. On the other hand, since the main focus of this work is DRL, the dCS loss can apply to a broader range of domains. Moreover, we verify the empirical performance of dCS in no only speech but also vision domain. Furthermore, \citet{sanada22interspeech} utilize several measurements that quantify the degree of noise removal from speech data, while this work utilizes the linear evaluation protocol~\cite{chen2020simple} and clustering protocol~\cite{mcconville2021n2d} to evaluate the quality of learned representations.
\end{enumerate}

%%%%%%%%%%%%%%%%%%%%%%%%%%%%%%%%%%%%%%%%%%%%%%%%%%%%%%%%%%%%%%%%%% Section 3
% \section{Proposed Method}
\section{PROPOSED METHOD}
\label{sec:proposed-method}
In this section, we propose the dCS loss, which is defined by only a noisy dataset. The definition is given in Section~\ref{subsec:def of dcs loss}. Figure~\ref{fig:fig-Ldcs-arch} shows the process of computing the loss. Let $\mathcal{D}$ denote the noisy dataset. At first, another noisy data $\tilde{\bm{x}}$ is constructed from a noisy data $\bm{x} \in \mathcal{D}$ via a domain-specific masking technique, e.g., Blind-Spot Masking (BSM) of~\citep{krull2019noise2void} for the vision domain and $\tau$-Amplitude Masking via Neighbors ($\tau$-AMN) of~\citep{sanada22interspeech} for speech. Then, the estimator $\hat{k}$ of the weight $k$ in the dCS is computed using $\bm{x}$ and $\tilde{\bm{x}}$. The dCS loss is computed from $\bm{x}$, $\hat{k}$, and $\tilde{f}_\zeta \circ f_\psi (\tilde{\bm x})$, which is the output of the AE for $\tilde{\bm{x}}$. 

In Section~\ref{subsec:Theory behind Our Objective}, theoretical properties of the dCS are presented under the assumption that $(\bm{x}, \tilde{\bm{x}})$ is observed. We theoretically guarantee the statistical validity of the inference with the dCS loss. In Section~\ref{subsubsec:estimation for k}, the estimators $\hat{k}$ in Figure~\ref{fig:fig-Ldcs-arch} are presented with statistical guarantees.

\begin{figure}[!t]
    \captionsetup{format=plain}
    \centering
    \includegraphics[scale=1.0]{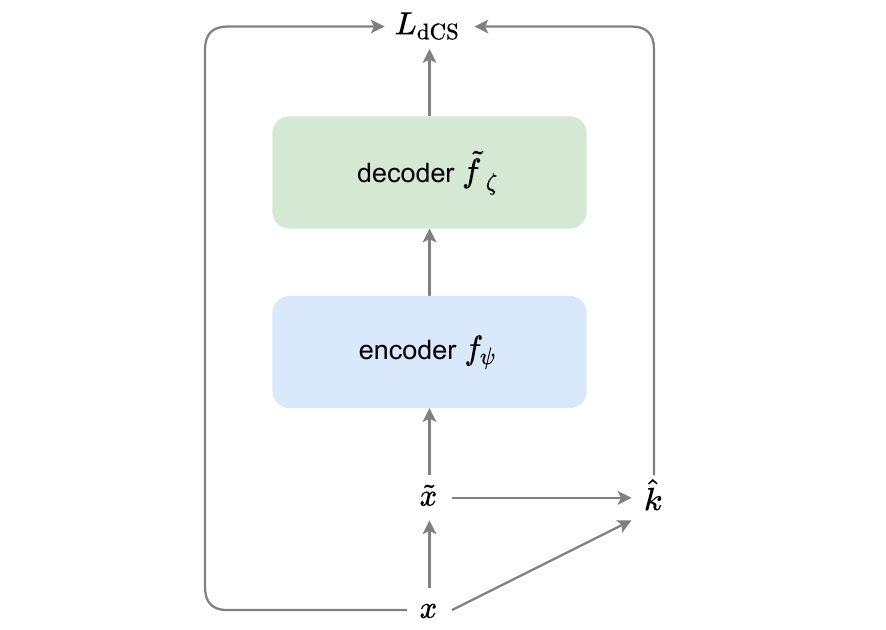}
    \caption{
    The process to compute the dCS loss in our scenario. Here, $\tilde{\bm x}$ is constructed from an original data $\bm x$. The estimator $\hat{k}$ for a weight $k$ in the dCS is computed via $\bm x$ and $\tilde{\bm x}$.
    }
    \label{fig:fig-Ldcs-arch}
\end{figure}

\subsection{Preliminary}
\label{subsec:Preliminary}
In our scenario, we have an unlabeled set $\mathcal{D} = \{\bm{x}^{(i)}\}_{i=1}^n$, where $n$ denotes the size of the set $\mathcal{D}$, and $\bm{x}^{(i)},i\in\{1,2,...,n\}$ are iid noisy data. Each $\bm{x}^{(i)}$ is sampled from a distribution that satisfies the following two assumptions (A1) and (A2).
\begin{enumerate}
    \item[(A1)] Let us define $\bm{s}$ as clean data. Each sample is expressed by a feature vector. The dimension of each data is not necessarily the same\footnote{For example, in the speech domain, the dimension of each data can be different from the others~\citep{kashyap21_interspeech}.}. The dimension of $\bm{s}$ is denoted by ${\rm dim}(\bm{s})$.
    
    \item[(A2)] For a fixed $\bm{s} \in \mathbb{R}^{{\rm dim}(\bm{s})}$, the noisy data $\bm{x}$ is expressed  by $\bm{x} = \bm{s} + \bm{\epsilon}$, where $\bm{\epsilon}=(\epsilon_1,..,\epsilon_d,..,\epsilon_{\dim(\bm{s})})\in\mathbb{R}^{{\rm dim}(\bm{s})}$ is the noise vector. In addition, $\bm{\epsilon}$ has the isotropic distribution with zero mean, and the variance of each element is $\sigma^2$, i.e., the probability density is expressed as the function of $\|\bm{\epsilon}\|_2$ with $\mathbb{V}[\epsilon_d]=\sigma^2$ for all $d$. The noise intensity $\sigma$ can vary for each clean data $\bm{s}$. 
\end{enumerate}
The typical example of $\bm{\epsilon}$ satisfying (A2) is the multivariate normal distribution $\mathcal{N}_D(\bm{0},\sigma^2 I_D)$ with $D={\rm dim}(\bm{s})$. Under the above assumption, our goal is to propose an efficient loss function that can assist DRL.

Here, we review the mathematical definition of BSM:
\begin{dfn}[Blind-Spot Masking, Figure~3 of \citet{krull2019noise2void}]
\label{dfn:blind-spot masking}
    Consider a Bernoulli random vector $\bm{b} = (b_1,\ldots,b_D)^\top \in \{0,1\}^D$, where ${\rm Pr}(b_d =1) = \rho \in [0,1]$. Let $d_i, i=1,...,\|\bm{b}\|_1$ denote an index satisfying $b_{d_i} = 1$, where $d_1 < ,...,<d_{\|\bm{b}\|_1}$. Then, for each $d_i$-th ($i=1,...,\|\bm{b}\|_1$) pixel of a fixed noisy image $\bm{x} \in \mathbb{R}^D$, replace the $d_i$-th pixel with the random neighbor pixel. Here, the neighbor region of $d_i$-th pixel is defined as the mini-patch, whose center is the $d_i$-th pixel.
\end{dfn}

Additionally, we review the mathematical definition of $\tau$-AMN in Definition~\ref{dfn:amn}. 
\begin{dfn}[Random subset $\tau$, Definition~1 of \citet{sanada22interspeech}]
\label{dfn:random set tau}
    Assume $b_t$, $t \in \{1,2,...,T\}$ are iid random variables, and $b_t$ has the Bernoulli distribution with $p(b_t = 1) = \rho \in (0,1]$. Let $\tau$ denote a random subset of $\{1,2,...,T\}$, and it is constructed by the following two steps:
    \begin{enumerate}
        \item Generate a Bernoulli vector $\bm{b} = (b_1,b_2,...,b_T)^\top$.
        \item Set $\emptyset$ as $\tau$, and
        repeat below for all $t \in \{1,2,...,T\}$: if $b_t = 1$ then $\tau \leftarrow \tau \cup \{t\}$. Otherwise, $\tau \leftarrow \tau$.
    \end{enumerate}
\end{dfn}

\begin{dfn}[$\tau$-AMN, Definition~1 of~\citet{sanada22interspeech}]
\label{dfn:amn}
    Let $\bm{x} \in \mathbb{R}^T$ denote a noisy speech data, and $x_t$ denote its $t$-th element. For $t \in \{1,2,...,T\}$, we define a time interval $\mathcal{I}_t$ by $\mathcal{I}_t = \left\{q\in\mathbb{N}\mid q \in [t-\Delta,t+\Delta]\setminus\{t\}\right\}$, where $\Delta \in \mathbb{N}$ is fixed for all $t$. Then, based on $\bm{x}$, another noisy speech data $\Tilde{\bm{x}} \in \mathbb{R}^T$ is constructed by $\tau$-AMN, whose procedure is as follows:
    \begin{enumerate}
        \item Generate a random subset $\tau \subseteq \{1,2,...,T\}$ as described in Definition~\ref{dfn:random set tau}, and set an arbitrary $T$ dimensional vector as $\Tilde{\bm{x}}$.
        \item Repeat the procedure below for all $t \in \{1,2,...,T\}$: 
        if $t \in \tau$, sample $t^\prime$ from $\mathcal{I}_t$ at random, and then $\Tilde{x}_t \leftarrow x_{t^\prime}$. Otherwise, $\Tilde{x}_t \leftarrow x_{t}$.
    \end{enumerate}
    We refer to $\Tilde{\bm{x}}$ as the masked $\bm{x}$. 
\end{dfn}

\subsection{Theory behind dCS Loss}
\label{subsec:Theory behind Our Objective}
In this section, we show a theoretical background of our approach. Given a noisy data $\tilde{\boldsymbol{x}}$ and the clean data $\boldsymbol{s}$, ideally we aim at minimizing the supervised loss $\mathbb{E}_{\boldsymbol{s}, \tilde{\boldsymbol{\epsilon}}}\left[\ell_{\mathrm{CS}}\left(\hat{\boldsymbol{s}}, \boldsymbol{s}\right)\right]$, 
where $\hat{\boldsymbol{s}}=h_{\theta}(\tilde{\boldsymbol{x}})$ is the output of an AE $h_{\theta}$, and $\theta$ is a set of trainable parameters. For $\ell_{\rm CS}$, see Eq.\eqref{eq:cosine-similarity loss}. However, the estimation using the supervised loss is not feasible, since 
1) the clean data $\boldsymbol{s}$ is not available, and 
2) we can access only single noisy data $\boldsymbol{x}$.
This section is devoted to provide a way to circumvent these difficulties.

Let us consider the assumption for a pair of two noisy data $(\bm{x}, \Tilde{\bm{x}})$. 
\begin{enumerate}
 \item[(A3)] For a fixed $\bm{s} \in \mathbb{R}^{{\rm dim}(\bm{s})}$, 
    $\bm{x}$ and $\Tilde{\bm{x}}$ are expressed by 
    $\bm{x} = \bm{s} + \bm{\epsilon}$ and $\Tilde{\bm{x}} = \bm{s} + \Tilde{\bm{\epsilon}}$ respectively, 
    where $\bm{\epsilon}, \Tilde{\bm{\epsilon}} \in \mathbb{R}^{{\rm dim}(\bm{s})}$ 
    are independent random vectors.  
    The noise vectors, $\bm{\epsilon}$ and 
    $\widetilde{\bm{\epsilon}}$, satisfy (A2).  
\end{enumerate}
In Section~\ref{subsec:def of dcs loss}, we show how to imitate the situation of (A3) using a single noisy data $\bm{x}$. In addition, under the assumption (A2) with $D=\dim(\bm{s})$, let us define the function $k_{D,\sigma}(t),\,t\geq0$ by the expectation 
\begin{equation}
\label{eq:k_x}
     k_{D,\sigma}(t)= 
     \mathbb{E}_{\bm\epsilon}\bigg[
     \frac{\epsilon_1+t}{\|\bm{\epsilon}+t\bm{e}_1\|_2}
    \bigg]
\end{equation}
 for ${\bm e}_1=(1,0,\ldots,0)^\top\in\mathbb{R}^D$. 
\begin{thm}
\label{thm:theorem1}
    Assume (A1) and (A3).
    Consider a fixed clean data $\bm{s}$, and let $D$ denote the dimension of $\bm{s}$. Fix the Bernoulli vector $\bm{b} = (b_1,\ldots,b_D)^\top \in \{0,1\}^D$. Let $(\bm{x}, \Tilde{\bm{x}})$ be a pair of two random noisy data satisfying (A3). Let $h_{\theta}: \mathbb{R}^{D} \to \mathbb{R}^{D}, \bm{x}\mapsto h_{\theta}(\bm{x})$ be a function parameterized by $\theta$. Then, the following holds:
    \begin{equation}
    \label{eq: sufficient condition for theorem 1}
        \mathbb{E}_{\Tilde{\bm{\epsilon}}}\left[\ell_{\rm CS}(\bm{b}\odot\bm{s},\bm{b}\odot\hat{\bm{s}}) \vert \bm{s}, \bm{b} \right]
        \!=\!
        \frac{\mathbb{E}_{\bm{\epsilon},\Tilde{\bm{\epsilon}}}\left[\ell_{\rm CS}(\bm{b}\odot\bm{x},\bm{b}\odot\hat{\bm{s}}) \vert \bm{s}, \bm{b} \right]}{k_{\|\bm{b}\|_1,\sigma}\left(\|\bm{b}\odot\bm{s}\|_2\right)},
    \end{equation}
    where $\odot$ denotes Hadamard product, and $\hat{\bm{s}} = h_{\theta}\left(\Tilde{\bm{x}}\right)$. 
    The definition of $\ell_{\rm CS}$ is shown in Eq.\eqref{eq:cosine-similarity loss}, 
    and the weight function $k_{\|\bm{b}\|_1,\sigma}$ is given by Eq.\eqref{eq:k_x} with the $\|\bm{b}\|_1$-dimensional marginal distribution of $\bm{\epsilon}$. 
\end{thm}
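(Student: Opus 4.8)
The plan is to reduce the identity \eqref{eq: sufficient condition for theorem 1} to a single geometric fact about the expectation of a normalized noisy vector, exploiting (i) the independence of $\bm{\epsilon}$ and $\Tilde{\bm{\epsilon}}$ built into (A3) and (ii) the rotational symmetry from (A2). Write $S=\{d:b_d=1\}$ for the support of $\bm{b}$, put $m=\|\bm{b}\|_1=|S|$, and for $\bm{v}\in\mathbb{R}^D$ let $\bm{v}_S\in\mathbb{R}^m$ be its restriction to the coordinates in $S$. Since coordinates outside $S$ contribute nothing to an inner product or a Euclidean norm, $\ell_{\rm CS}(\bm{b}\odot\bm{u},\bm{b}\odot\bm{v})=\ell_{\rm CS}(\bm{u}_S,\bm{v}_S)$, so the whole statement lives in $\mathbb{R}^m$ and it suffices to prove $\mathbb{E}_{\bm{\epsilon},\Tilde{\bm{\epsilon}}}[\ell_{\rm CS}(\bm{x}_S,\hat{\bm{s}}_S)\mid\bm{s},\bm{b}]=k_{m,\sigma}(\|\bm{s}_S\|_2)\,\mathbb{E}_{\Tilde{\bm{\epsilon}}}[\ell_{\rm CS}(\bm{s}_S,\hat{\bm{s}}_S)\mid\bm{s},\bm{b}]$, where throughout $\bm{b}\odot\bm{s}\neq\bm{0}$ (needed for $\ell_{\rm CS}$ to be defined and for $k_{m,\sigma}$ to be nonzero) and $\bm{x}_S,\hat{\bm{s}}_S\neq\bm{0}$ almost surely.

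First I would condition on $\bm{s},\bm{b},\Tilde{\bm{\epsilon}}$, which freezes $\hat{\bm{s}}=h_\theta(\Tilde{\bm{x}})$; by (A3) the vector $\bm{\epsilon}$ remains independent of this conditioning. Writing $\ell_{\rm CS}(\bm{x}_S,\hat{\bm{s}}_S)=-\langle \bm{x}_S/\|\bm{x}_S\|_2,\ \hat{\bm{s}}_S/\|\hat{\bm{s}}_S\|_2\rangle$ and moving the $\bm{\epsilon}$-expectation inside the inner product (legitimate since $\bm{x}_S/\|\bm{x}_S\|_2$ has norm one), everything comes down to computing $\mathbb{E}_{\bm{\epsilon}}[(\bm{s}_S+\bm{\epsilon}_S)/\|\bm{s}_S+\bm{\epsilon}_S\|_2\mid\bm{s}]$.

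The key step is to show this expectation equals $k_{m,\sigma}(\|\bm{s}_S\|_2)\,\bm{s}_S/\|\bm{s}_S\|_2$, in two parts. (a) The $m$-dimensional coordinate marginal of an isotropic $\bm{\epsilon}$ is again isotropic: integrating a density of the form $f(\|\bm{\epsilon}\|_2)$ over the coordinates outside $S$ yields a density depending only on $\|\bm{\epsilon}_S\|_2$, and this marginal, with per-coordinate variance $\sigma^2$, is exactly the law used to define $k_{m,\sigma}$ via \eqref{eq:k_x}. (b) By rotational invariance of $\bm{\epsilon}_S$, for every orthogonal $m\times m$ matrix $Q$ one has $\mathbb{E}[(\bm{s}_S+\bm{\epsilon}_S)/\|\bm{s}_S+\bm{\epsilon}_S\|_2]=Q^{-1}\mathbb{E}[(Q\bm{s}_S+\bm{\epsilon}_S)/\|Q\bm{s}_S+\bm{\epsilon}_S\|_2]$; taking $Q$ in the stabilizer of $\bm{s}_S$ forces the expectation to be a scalar multiple of $\bm{s}_S$. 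Choosing $Q$ with $Q\bm{s}_S=\|\bm{s}_S\|_2\bm{e}_1$, that scalar is read off as the first coordinate $\mathbb{E}_{\bm{\epsilon}_S}[(\epsilon_1+\|\bm{s}_S\|_2)/\|\bm{\epsilon}_S+\|\bm{s}_S\|_2\bm{e}_1\|_2]=k_{m,\sigma}(\|\bm{s}_S\|_2)$, the other coordinates vanishing by the reflection symmetry $\epsilon_j\mapsto-\epsilon_j$ for $j\geq2$. Substituting back gives, for the conditioned quantity, $\mathbb{E}_{\bm{\epsilon}}[\ell_{\rm CS}(\bm{x}_S,\hat{\bm{s}}_S)\mid\bm{s},\bm{b},\Tilde{\bm{\epsilon}}]=k_{m,\sigma}(\|\bm{s}_S\|_2)\,\ell_{\rm CS}(\bm{s}_S,\hat{\bm{s}}_S)$, since the weight does not depend on $\Tilde{\bm{\epsilon}}$.

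Finally I would take the $\Tilde{\bm{\epsilon}}$-expectation of both sides (tower property on the left), use $\|\bm{s}_S\|_2=\|\bm{b}\odot\bm{s}\|_2$ and $m=\|\bm{b}\|_1$, undo the restriction-to-$S$ notation, and divide by $k_{\|\bm{b}\|_1,\sigma}(\|\bm{b}\odot\bm{s}\|_2)$ to obtain \eqref{eq: sufficient condition for theorem 1}. I expect the main obstacle to be making part (a) airtight — that isotropy is inherited by coordinate marginals, so that defining $k_{m,\sigma}$ through the marginal law in \eqref{eq:k_x} is the correct object — together with a clean handling of the measure-zero and integrability caveats; the symmetry computation in (b) and the conditioning bookkeeping are routine.
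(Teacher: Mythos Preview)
Your proposal is correct and follows essentially the same route as the paper: the paper also passes to the $\|\bm{b}\|_1$-dimensional marginal (noting it is again isotropic), proves the key lemma $\mathbb{E}_{\bm{\epsilon}}[\bm{x}/\|\bm{x}\|_2]=k_{D,\sigma}(\|\bm{s}\|_2)\,\bm{s}/\|\bm{s}\|_2$ via a rotation sending $\bm{s}$ to $\|\bm{s}\|_2\bm{e}_1$ together with the odd-function/reflection argument for the remaining coordinates, and then uses the independence of $\bm{\epsilon}$ and $\tilde{\bm{\epsilon}}$ to pull the $\bm{\epsilon}$-expectation inside the inner product defining $\ell_{\rm CS}$. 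Your stabilizer-subgroup phrasing of the symmetry step is a slightly slicker packaging of exactly the same computation.
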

The proof is shown in Appendix~\ref{append:proof of theorem1}.

We consider the parameter learning in unsupervised scenario. 
Let $\theta^\ast$ be a minimizer of 
$\mathbb{E}_{\bm{s},\Tilde{\bm{\epsilon}},\bm{b}}\left[\ell_{\rm CS}(\bm{b}\odot\bm{s},\bm{b}\odot\hat{\bm{s}}) \right]$. 
Once $k_{\|\bm{b}\|_1,\sigma}\left(\|\bm{b}\odot\bm{s}\|_2\right)$ is computed, Eq.\eqref{eq: sufficient condition for theorem 1} enables us to obtain $\theta^\ast$ by minimizing 
\begin{equation}
\label{eq:unsupervised denosing loss}
    \mathbb{E}_{\bm{s},\bm{b}}\left[ \frac{\mathbb{E}_{\bm{\epsilon},\Tilde{\bm{\epsilon}}}\left[\ell_{\rm CS}(\bm{b}\odot\bm{x},\bm{b}\odot\hat{\bm{s}}) \vert \bm{s}, \bm{b} \right]}{k_{\|\bm{b}\|_1,\sigma}\left(\|\bm{b}\odot\bm{s}\|_2\right)}
     \right].
\end{equation}
In Section~\ref{subsubsec:estimation for k}, we show an estimator of $k_{\|\bm{b}\|_1,\sigma}\left(\|\bm{b}\odot\bm{s}\|_2\right)$ using $\bm{x}$ and $\tilde{\bm{x}}$. 

Since the clean data $\boldsymbol{s}$ is unknown, direct minimization of the supervised loss is not feasible. However, Proposition~\ref{prop:tightness of Eq 9} below implies that the minimization of Eq.\eqref{eq:unsupervised denosing loss} can also contribute to minimizing the supervised loss tightly. To this end, let us introduce the following condition:
\begin{enumerate}
    \item[\rm{(A4)}] 
    The range of each random variable $\tilde{\bm{x}}$, $\bm{s}$, denoted by $\mathcal{X}$ and $\mathcal{Y}$ respectively, is
    compact subset of $\mathbb{R}^{D} \backslash\{\mathbf{0}\}$.
    Moreover, $h_{\theta}$ is continuous on $\mathcal{X}$, and the Euclidean norm of the vector $\hat{\boldsymbol{s}} =
    h_{\theta}(\tilde{\boldsymbol{x}})$ is positive for every $\tilde{\boldsymbol{x}} \in \mathcal{X}$.
\end{enumerate}

\begin{prop}
\label{prop:tightness of Eq 9}
    Assume the condition (A4) holds. Suppose that the probability $\rho$ in Definition~\ref{dfn:blind-spot masking} satisfies $\rho\in(0,1)$. Then, the following inequality holds for each parameter $\theta$:
    \begin{equation}
        \label{eq: approx prop1}
         \mathbb{E}_{\boldsymbol{s},\boldsymbol{b} }\left[\mathbb{E}_{\tilde{\boldsymbol{\epsilon}}}\left[\ell_{\mathrm{CS}}\left( \boldsymbol{b}\odot \hat{\boldsymbol{s}}, \boldsymbol{b}\odot\boldsymbol{s}\right) |\boldsymbol{s},\boldsymbol{b}\right]\right]=\Theta(\mathbb{E}_{\boldsymbol{s}, \tilde{\boldsymbol{\epsilon}}}\left[\ell_{\mathrm{CS}}\left(\hat{\boldsymbol{s}}, \boldsymbol{s}\right)\right]),
    \end{equation}
    where $a=\Theta(b)$ means that there exist some $M_{1}\leq M_{2}$ such that $M_{1}b \leq a\leq M_{2}b$.
\end{prop}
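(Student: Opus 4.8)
The goal is to sandwich the fully supervised cosine-similarity risk $\mathbb{E}_{\boldsymbol{s},\tilde{\boldsymbol{\epsilon}}}[\ell_{\mathrm{CS}}(\hat{\boldsymbol{s}},\boldsymbol{s})]$ between constant multiples of the masked risk $\mathbb{E}_{\boldsymbol{s},\boldsymbol{b}}[\mathbb{E}_{\tilde{\boldsymbol{\epsilon}}}[\ell_{\mathrm{CS}}(\boldsymbol{b}\odot\hat{\boldsymbol{s}},\boldsymbol{b}\odot\boldsymbol{s})\mid\boldsymbol{s},\boldsymbol{b}]]$. Since both quantities are negatives of expectations of cosine similarities, and cosine similarity lies in $[-1,1]$, I would first reduce everything to studying the pointwise ratio
\[
R(\boldsymbol{s},\tilde{\boldsymbol{\epsilon}},\boldsymbol{b}) \;=\; \frac{\langle \boldsymbol{b}\odot\hat{\boldsymbol{s}},\,\boldsymbol{b}\odot\boldsymbol{s}\rangle}{\|\boldsymbol{b}\odot\hat{\boldsymbol{s}}\|_2\,\|\boldsymbol{b}\odot\boldsymbol{s}\|_2}\Bigg/ \frac{\langle \hat{\boldsymbol{s}},\boldsymbol{s}\rangle}{\|\hat{\boldsymbol{s}}\|_2\,\|\boldsymbol{s}\|_2},
\]
and show that this ratio, as well as its reciprocal, is bounded by absolute constants depending only on $\mathcal{X}$, $\mathcal{Y}$, $D$, and $\rho$. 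The key structural observation is that for the two extreme Bernoulli draws $\boldsymbol{b}=\mathbf{1}$ (which occurs with probability $\rho^D>0$) and any $\boldsymbol{b}$ with $\|\boldsymbol{b}\|_1\geq 1$, the masked cosine similarity is itself a genuine cosine similarity between nonzero sub-vectors, hence bounded in $[-1,1]$; this already gives the upper bound direction fairly cheaply, because $\mathbb{E}_{\boldsymbol{s},\boldsymbol{b}}[-\cos(\cdot)] \geq \rho^D\cdot\mathbb{E}_{\boldsymbol{s}}[-\cos_{\mathbf 1}(\cdot)] + (1-\rho^D)\cdot(-1)$ and one needs to control the "$-1$" slack term.

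The plan is then: (i) use compactness from (A4) — $\mathcal{X},\mathcal{Y}$ compact and bounded away from $\mathbf{0}$, and $h_\theta$ continuous with $\|\hat{\boldsymbol{s}}\|_2>0$ — to obtain uniform constants $0<c\leq \|\boldsymbol{s}\|_2,\|\hat{\boldsymbol{s}}\|_2\leq C$ and, crucially, $\|\boldsymbol{b}\odot\boldsymbol{s}\|_2\geq c'>0$ whenever $\|\boldsymbol{b}\|_1\geq 1$ (since each coordinate of $\boldsymbol{s}$ restricted to the compact set $\mathcal{Y}$ cannot vanish identically — here I'd need $\mathcal{Y}\subseteq\mathbb{R}^D\setminus\{\mathbf 0\}$ together with a coordinatewise non-degeneracy argument, or more safely bound $\|\boldsymbol{b}\odot\boldsymbol{s}\|_2$ from below by the single largest-coordinate contribution); (ii) expand both inner products and norms as sums over the active coordinate set $S=\{d:b_d=1\}$ versus the full set, and bound cross terms using Cauchy–Schwarz together with the uniform norm bounds, so that $R$ and $1/R$ are each pinched between fixed positive constants; (iii) integrate the pointwise two-sided bound against the joint law of $(\boldsymbol{s},\tilde{\boldsymbol{\epsilon}},\boldsymbol{b})$, noting that the $\tilde{\boldsymbol{\epsilon}}$-expectation and $\boldsymbol{b}$-expectation are both finite and that taking expectations preserves the inequalities $M_1 b\leq a\leq M_2 b$, which is exactly the $\Theta(\cdot)$ claim.

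The main obstacle I anticipate is step (ii): the cosine similarity can be negative, so the ratio $R$ need not be positive, and a naive "pinch" of $R$ between positive constants is false in general (e.g. if $\hat{\boldsymbol{s}}\perp\boldsymbol{s}$ but $\boldsymbol{b}\odot\hat{\boldsymbol{s}}\not\perp\boldsymbol{b}\odot\boldsymbol{s}$). The correct move is therefore not to bound the pointwise ratio but to bound the two \emph{expectations} directly: show $\mathbb{E}[\ell_{\mathrm{CS}}(\boldsymbol{b}\odot\hat{\boldsymbol{s}},\boldsymbol{b}\odot\boldsymbol{s})]$ and $\mathbb{E}[\ell_{\mathrm{CS}}(\hat{\boldsymbol{s}},\boldsymbol{s})]$ have the same sign behavior and comparable magnitude. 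Concretely, I would (a) prove a pointwise Lipschitz-type estimate $\bigl|\ell_{\mathrm{CS}}(\boldsymbol{b}\odot\hat{\boldsymbol{s}},\boldsymbol{b}\odot\boldsymbol{s}) - \ell_{\mathrm{CS}}(\hat{\boldsymbol{s}},\boldsymbol{s})\bigr| \leq \kappa$ is \emph{not} what we want either, but rather (b) use that when $\boldsymbol{b}=\mathbf 1$ the two agree exactly and when $\|\boldsymbol{b}\|_1<D$ the masked similarity stays in $[-1,1]$, combined with the condition $\rho\in(0,1)$ to ensure $\mathrm{Pr}(\boldsymbol{b}=\mathbf 1)=\rho^D$ is bounded away from both $0$ and $1$; then argue that the unsupervised risk is, up to the fixed factor $\rho^D$ and a bounded additive perturbation of size at most $(1-\rho^D)\cdot 2$, equal to the supervised risk, and since the supervised risk is itself bounded away from its extreme value $-1$ on the compact configuration space (by an Arzelà–Ascoli / continuity-plus-compactness argument giving $\mathbb{E}_{\boldsymbol{s},\tilde{\boldsymbol{\epsilon}}}[\ell_{\mathrm{CS}}(\hat{\boldsymbol{s}},\boldsymbol{s})]$ a uniform gap), the additive perturbation can be absorbed into a multiplicative constant. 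Making this last absorption rigorous — turning "equal up to bounded additive error" into the clean $\Theta(\cdot)$ statement — is the delicate point, and it is exactly where the compactness hypothesis (A4) and the strict inequality $\rho\in(0,1)$ are both indispensable.
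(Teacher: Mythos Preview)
Your diagnosis of why the pointwise-ratio approach fails is correct, but your fallback plan (b) has a genuine gap. Isolating the event $\boldsymbol{b}=\mathbf{1}$ gives you
\[
\mathbb{E}_{\boldsymbol{b}}\bigl[\ell_{\mathrm{CS}}(\boldsymbol{b}\odot\hat{\boldsymbol{s}},\boldsymbol{b}\odot\boldsymbol{s})\bigr]
=\rho^{D}\,\ell_{\mathrm{CS}}(\hat{\boldsymbol{s}},\boldsymbol{s})+\text{(term bounded in }[-1,1]\text{ of mass }1-\rho^{D}),
\]
i.e.\ an \emph{additive} perturbation of size up to $1-\rho^{D}$. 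To absorb this into a multiplicative constant you would need $\lvert\mathbb{E}_{\boldsymbol{s},\tilde{\boldsymbol{\epsilon}}}[\ell_{\mathrm{CS}}(\hat{\boldsymbol{s}},\boldsymbol{s})]\rvert$ to be bounded \emph{away from zero}, uniformly in $\theta$. Compactness in (A4) only keeps the risk away from the extreme value $-1$; it does nothing to prevent the supervised risk from being $0$ (take any $\theta$ for which $\hat{\boldsymbol{s}}$ is, on average, orthogonal to $\boldsymbol{s}$). Since the proposition is asserted for \emph{each} $\theta$, the additive-to-multiplicative step cannot be carried out in general, and the argument breaks precisely where you flagged it as ``delicate''.

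The paper avoids any additive error by a different mechanism. First it replaces $\ell_{\mathrm{CS}}$ by the bare inner product, using that (A4) pins $\|\hat{\boldsymbol{s}}\|_2\|\boldsymbol{s}\|_2$ (and the masked analogues) between fixed positive constants; this step is purely multiplicative. The crucial second step exploits the algebraic identity
\[
\langle \hat{\boldsymbol{s}}\odot\boldsymbol{b},\,\boldsymbol{s}\odot\boldsymbol{b}\rangle
+\langle \hat{\boldsymbol{s}}\odot(\mathbf{1}-\boldsymbol{b}),\,\boldsymbol{s}\odot(\mathbf{1}-\boldsymbol{b})\rangle
=\langle \hat{\boldsymbol{s}},\boldsymbol{s}\rangle,
\]
together with the symmetrization $\boldsymbol{b}\leftrightarrow\mathbf{1}-\boldsymbol{b}$ in the Bernoulli sum. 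Pairing each mask with its complement and replacing the two probabilities $\rho^{\|\boldsymbol{b}\|_1}(1-\rho)^{D-\|\boldsymbol{b}\|_1}$ and $\rho^{D-\|\boldsymbol{b}\|_1}(1-\rho)^{\|\boldsymbol{b}\|_1}$ by whichever of the two makes the inequality go the right way (this is where $\rho\in(0,1)$ enters, to keep both weights strictly positive) yields
\[
-\mathbb{E}_{\boldsymbol{b}}\bigl[\langle \hat{\boldsymbol{s}}\odot\boldsymbol{b},\boldsymbol{s}\odot\boldsymbol{b}\rangle\bigr]
\ \gtrless\ -\tfrac{1}{2}\Bigl(\textstyle\sum_{\boldsymbol{b}}\beta_{\boldsymbol{b}}\Bigr)\langle\hat{\boldsymbol{s}},\boldsymbol{s}\rangle,
\]
a genuinely multiplicative two-sided bound with no additive slack. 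Passing back from inner products to cosine similarities (again multiplicatively, via the norm bounds) finishes the proof. The piece you are missing is this complementary-mask pairing; once you have it, the sign issues you worried about are handled by choosing $\beta_{\boldsymbol{b}}$ according to the sign of each paired term, and no ``gap from zero'' hypothesis is needed.
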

The proof is given in Appendix~\ref{append:proof for upper-bound of our loss}. Note that it is a natural idea that the minimization of the left hand in Eq.\eqref{eq: approx prop1} can contribute to minimizing the supervised loss $\mathbb{E}_{\boldsymbol{s}, \tilde{\boldsymbol{\epsilon}}}\left[\ell_{\mathrm{CS}}\left(\hat{\boldsymbol{s}}, \boldsymbol{s}\right)\right]$. 
Intuitively, if the misalignment of the directions of the vectors $\hat{\boldsymbol{s}} \odot \boldsymbol{b}$ and $\boldsymbol{s} \odot \boldsymbol{b}$ for a Bernoulli random vector $\boldsymbol{b}$ is reduced sufficiently, then the directions of $\boldsymbol{s}$ and $\hat{\boldsymbol{s}}$ should probably be the same.

\begin{rmk}
    Assumption (A4) can be relaxed. Indeed, the cosine similarity function $\ell_{\mathrm{CS}}$ is usually defined as $\ell_{\mathrm{CS}}(\bm{u}, \bm{v})=-\langle \bm{u}, \bm{v} \rangle / \max \{\|\bm{u}\|_{2}\|\bm{v}\|_{2}, \eta\}$, where $\eta>0$ is a sufficiently small value (e.g., PyTorch \citep{paszke2019pytorch} supports the class torch.nn.CosineSimilarity with such a small value, where the default one is set $1.0\times 10^{-8}$~\citep{torch20xxcossim}). Using this practical definition, these assumptions can be replaced with the conditions that both $h_{\theta}(\mathcal{X})$ and $\mathcal{Y}$ are bounded for any $\theta$.
\end{rmk}

\subsection{Estimation of the Weight \texorpdfstring{$k_{D,\sigma}$}{TEXT}}
\label{subsubsec:estimation for k}
% === seems nothing to change from the current version ===
Let us consider how to estimate $k_{D,\sigma}(\|\bm{s}\|_2)$ from the $D$-dimensional paired data $(\bm{x},\Tilde{\bm{x}})$. We assume the following assumptions to evaluate the estimation accuracy. 
\begin{enumerate}
 \item[(A5)]
    The distribution of $\bm{\epsilon}$ and $\widetilde{\bm{\epsilon}}$ is sub-Gaussian, $\bm{\epsilon}, \widetilde{\bm{\epsilon}}\sim \mathrm{subG}(\bar{\sigma}^2)$, where $\sigma^2\leq \bar{\sigma}^2$, 
 \item[(A6)]
    The distribution of the centered random variables, $\|\bm{\epsilon}\|_2^2-D\sigma^2$ and $\|\widetilde{\bm{\epsilon}}\|_2^2-D\sigma^2$, are sub-exponential $\mathrm{subE}(4\bar{\sigma}^4 D,4\bar{\sigma}^2)$. 
\end{enumerate}
The definition of the sub-Gaussian and sub-exponential is shown in Appendix~\ref{append:ConvergenceProof_hatc}. To be exact, (A5) and (A6) are assumptions for the sequence of distributions indexed by $D$. In general, each element of $\bm{\epsilon}$ is not independent each other under the isotropic distribution. When $\bm{\epsilon}$ and $\widetilde{\bm{\epsilon}}$ are both the multivariate normal distribution $\mathcal{N}_D(\bm{0},\sigma^2 I_D)$, (A5) and (A6) are satisfied with $\bar{\sigma}^2=\sigma^2$. 
\begin{rmk}
\label{rmk:sufficient_cond_eps}
    Let us consider a sufficient condition of (A5) and (A6). Let $\psi_{D,s}$ be the probability density of 
    $\mathcal{N}_D(\bm{0},s^2 I_D)$ and $G_D$ be a distribution function on the set of positive numbers. Suppose that the isotropic probability density of $\bm{\epsilon}$ and $\widetilde{\bm{\epsilon}}$ is expressed by $\int \psi_{D,s}(\bm{\epsilon})G_D(ds)$~\citep{eaton1981projections}. Then, (A5) and (A6) hold for a large $\bar{\sigma}^2$ when 
    i) $S\sim G_D$ is uniformly bounded for any $D$, ii) $\mathbb{E}[S^2]=\sigma^2$, and iii) $\sqrt{D}(S^2-\sigma^2)$ has a sub-Gaussian distribution with the parameter independent of $D$. Note that $\bm{\epsilon}\sim \mathcal{N}_D(\bm{0},\sigma^2 I_D)$ corresponds to $G_D$ such that $P(S=\sigma)=1$. 
\end{rmk}

We show an approximate calculation of $k_{D,\sigma}$. Let us consider the decomposition, $\bm{\epsilon} = r \bm{u}$, where
$\bm{u}$ is the unit vector  and $r=\|\bm{\epsilon}\|_2$ is the length of $\bm{\epsilon}$. Since $\bm{\epsilon}$ is isotropic, $\bm{u}$ is uniformly distributed on the unit sphere and the length $r$ is independent of $\bm{u}$. For $c=\|\bm{s}\|_2/\sigma\sqrt{D}$, Eq.\eqref{eq:k_x} leads to 
\begin{align}
 \label{eqn:exp-k_D}
     k_{D,\sigma}(\|\bm{s}\|_2) = \mathbb{E}_{r,\bm{u}}
     \left[
      \frac{ru_1/\sqrt{D}+c}{\|r\bm{u}/\sqrt{D}+c\bm{e}_1\|_2}
     \right], 
\end{align}
where $\bm{u}=(u_1,\ldots,u_D)$ is the unit vector uniformly distributed on $D-1$ dimensional unit sphere 
and $r$ is the positive random variable such that $\mathbb{V}[r u_1]=1$. Once an estimator of $c$ is obtained, $k_{D,\sigma}(\|\bm{s}\|_2)$ is estimated by the Monte Carlo approximation. The sampling of $\bm{u}$ is given by the normalized vector of the multivariate normal distribution $\mathcal{N}_D(\bm{0},I_D)$. For the sampling of one-dimensional random variable $r$, a number of efficient methods are available. 

When $\bm{\epsilon}$ has the multivariate normal distribution, $\mathcal{N}_D(\bm{0}, \sigma^2I_D)$, the Monte Carlo approximation becomes simpler. Let $\kappa$ and $\nu$ be independent random variables such that $\kappa\sim \mathcal{N}(0,1)$ and $\nu\sim\chi_{D-1}^2$, where $\chi_{D-1}^2$ is the chi-square distribution with the degree of freedom $D-1$. Then, a brief calculation yields that 
\begin{align*}
     k_{D,\sigma}(\|\bm{s}\|_2) = \mathbb{E}_{\kappa,\nu}
     \left[
     \frac{\kappa/\sqrt{D}+c}
     {\sqrt{(\kappa/\sqrt{D}+c)^2 + \nu/D}}
     \right]. 
\end{align*}
The sampling of two random variables, $\kappa$ and $\nu$, provides an accurate 
Monte Carlo approximation for the expectation.

We show a simple estimate of $c=\|\bm{s}\|_2/\sigma\sqrt{D}$ using the paired data $\bm{x},\Tilde{\bm{x}}$. 
The assumptions on $\bm{x}$ and $\Tilde{\bm{x}}$ leads to 
$\mathbb{E}[\bm{x}^{\top}\Tilde{\bm{x}}] = \|\bm{s}\|_2^2$, and $\mathbb{E}[\|\bm{x}-\Tilde{\bm{x}}\|_2^2] = 2D\sigma^2$. Hence, we have $\frac{\|\bm{s}\|_2}{\sigma\sqrt{D}} = \sqrt{\frac{2\mathbb{E}[\bm{x}^{\top}\Tilde{\bm{x}}]}{\mathbb{E}[\|\bm{x}-\Tilde{\bm{x}}\|_2^2]}}$. As a naive estimate of $\|\bm{s}\|_2/\sigma\sqrt{D}$ using only the pair $(\bm{x}, \Tilde{\bm{x}})$, we propose the estimator $\widehat{c}=\sqrt{2\max\{\bm{x}^{\top}\Tilde{\bm{x}},0\}/\|\bm{x}-\Tilde{\bm{x}}\|_2^2}$. Since the expectation, $\mathbb{E}[\bm{x}^{\top}\Tilde{\bm{x}}]=\|\bm{s}\|_2^2$, is positive, 
the cut-off at $0$ is introduced in the estimator. 

As a result, the estimator of $k_{D,\sigma}(\|\bm{s}\|_2)$ is constructed via the following two steps: 
\begin{enumerate}
    \item 
        Compute $\widehat{c} = \frac{\sqrt{2 \left[ \bm{x}^\top \Tilde{\bm{x}}\right]_{+}}}{\left\| \bm{x}-\Tilde{\bm{x}} \right\|_2}$, where $[a]_{+}=\max\{a, 0\}$.
    \item 
        Compute the Monte Carlo approximation of the right-hand side of Eq.\eqref{eqn:exp-k_D} with $\widehat{c}$ instead of $c$.
\end{enumerate}

When the probability density of $\bm{\epsilon}$ and $\widetilde{\bm{\epsilon}}$ is $\mathcal{N}_D(\bm{0},\sigma^2 I)$, the estimator of the weight function is expressed by 
\begin{equation}
\label{eq:estimation k_x}
     \Hat{k}_{D,\sigma}(\|\bm{s}\|_2) = \frac{1}{n_s} \sum_{i=1}^{n_s}\frac{\widehat{c}
    +\frac{\kappa_i}{\sqrt{D}}}{\sqrt{\left(\widehat{c}+\frac{\kappa_i}{\sqrt{D}}\right)^{2}+\frac{\nu_i}{D}}},
\end{equation}
where $\{\kappa_i\}_{i=1}^{n_s}$ and $\{\nu_i\}_{i=1}^{n_s}$ are iid samples from $\mathcal{N}(0,1)$ and $\chi_{D-1}^{2}$ respectively.

Let us evaluate the statistical accuracy of the estimator $\widehat{c}$ to the SN-ratio $c=\|\bm{s}\|_2/\sigma\sqrt{D}$. 
For a finite $D$, we derive an upper bound of the error $|c-\widehat{c}|$. Also, we see that the error converges to zero as $D$ goes to infinity if the SN-ratio, $c$, is not extremely small. 
\begin{thm}
\label{thm:estimation_error_bd_of_hatc}
    Assume (A3), (A5), and (A6). Then, there exists $\delta_{c}$ and $D_{c,\bar{\sigma}^2,\sigma^2,\delta}$ such that for $\delta\in(0,\delta_{c})$ and $D\geq D_{c,\bar{\sigma}^2,\sigma^2,\delta}$, the inequality 
    \begin{align*}
         |c-\widehat{c}|
         \leq 12
         \frac{\bar{\sigma}^2}{\sigma^2}
         \bigg(c+\frac{1}{c}\bigg) \frac{\log(12/\delta)}{\sqrt{D}}
    \end{align*}
     holds with probability greater than $1-\delta$. 
\end{thm}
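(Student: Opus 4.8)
The plan is to control the numerator and the denominator of $\widehat{c}^2$ separately and then pass to $\widehat{c}$ itself. Write $A=\bm{x}^\top\widetilde{\bm{x}}$ and $B=\|\bm{x}-\widetilde{\bm{x}}\|_2^2$, so that $\widehat{c}^2=2[A]_+/B$, and recall from the computation just before the theorem that $\mathbb{E}[A]=\|\bm{s}\|_2^2$ and $\mathbb{E}[B]=2D\sigma^2$, hence $c^2=\|\bm{s}\|_2^2/(D\sigma^2)=2\,\mathbb{E}[A]/\mathbb{E}[B]$. First I would show that on a high-probability event the relative deviations $|A-\|\bm{s}\|_2^2|/\|\bm{s}\|_2^2$ and $|B-2D\sigma^2|/(2D\sigma^2)$ are both at most $1/2$; in particular $A>0$, so $[A]_+=A$, $\widehat{c}^2=2A/B$, and $B\ge D\sigma^2$. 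On this event I would use the identity $\widehat{c}-c=(\widehat{c}^2-c^2)/(\widehat{c}+c)$, the crude bound $\widehat{c}+c\ge c$, and the rewriting
\[
  2A-\frac{\|\bm{s}\|_2^2}{D\sigma^2}\,B \;=\; 2\bigl(A-\|\bm{s}\|_2^2\bigr)-\frac{\|\bm{s}\|_2^2}{D\sigma^2}\bigl(B-2D\sigma^2\bigr)
\]
to reduce the claim to $|\widehat{c}-c|\le \frac{2\,|A-\|\bm{s}\|_2^2|}{c\,D\sigma^2}+\frac{c\,|B-2D\sigma^2|}{D\sigma^2}$.

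Next I would estimate these two terms via the decompositions $A-\|\bm{s}\|_2^2=\bm{s}^\top\bm{\epsilon}+\bm{s}^\top\widetilde{\bm{\epsilon}}+\bm{\epsilon}^\top\widetilde{\bm{\epsilon}}$ and $B-2D\sigma^2=(\|\bm{\epsilon}\|_2^2-D\sigma^2)+(\|\widetilde{\bm{\epsilon}}\|_2^2-D\sigma^2)-2\,\bm{\epsilon}^\top\widetilde{\bm{\epsilon}}$. By (A5), $\bm{s}^\top\bm{\epsilon}$ and $\bm{s}^\top\widetilde{\bm{\epsilon}}$ are sub-Gaussian with parameter $\|\bm{s}\|_2^2\bar{\sigma}^2$, so each is $\lesssim\|\bm{s}\|_2\bar{\sigma}\sqrt{\log(1/\delta)}$ with probability $\ge 1-\delta$; by (A6), each of $|\|\bm{\epsilon}\|_2^2-D\sigma^2|$ and $|\|\widetilde{\bm{\epsilon}}\|_2^2-D\sigma^2|$ is $\lesssim\bar{\sigma}^2\bigl(\sqrt{D\log(1/\delta)}+\log(1/\delta)\bigr)$ with probability $\ge 1-\delta$. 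The cross term $\bm{\epsilon}^\top\widetilde{\bm{\epsilon}}$ is the delicate one, since it is neither sub-Gaussian nor directly covered by (A6): I would condition on $\widetilde{\bm{\epsilon}}$, observe that conditionally $\bm{\epsilon}^\top\widetilde{\bm{\epsilon}}$ is sub-Gaussian with parameter $\|\widetilde{\bm{\epsilon}}\|_2^2\bar{\sigma}^2$, and intersect with the (A6)-event $\|\widetilde{\bm{\epsilon}}\|_2^2\le 2D\sigma^2$ (valid once $D$ is large) to get $|\bm{\epsilon}^\top\widetilde{\bm{\epsilon}}|\lesssim\bar{\sigma}\sqrt{D\sigma^2\log(1/\delta)}\le\bar{\sigma}^2\sqrt{D\log(1/\delta)}$, using $\sigma\le\bar{\sigma}$. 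A union bound over this constant number of events (with $\delta$ rescaled by their count) delivers all estimates simultaneously.

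Substituting into the reduction above and using $\|\bm{s}\|_2=c\sigma\sqrt{D}$, the first term is $\lesssim\frac{\bar{\sigma}}{\sigma}\sqrt{\tfrac{\log(1/\delta)}{D}}+\frac{\bar{\sigma}^2}{c\sigma^2}\bigl(\sqrt{\tfrac{\log(1/\delta)}{D}}+\tfrac{\log(1/\delta)}{D}\bigr)$ and the second is $\lesssim\frac{c\bar{\sigma}^2}{\sigma^2}\bigl(\sqrt{\tfrac{\log(1/\delta)}{D}}+\tfrac{\log(1/\delta)}{D}\bigr)$. Since $\bar{\sigma}\ge\sigma$ and $\max\{1,1/c\}\le c+1/c$, both are $\lesssim\frac{\bar{\sigma}^2}{\sigma^2}\bigl(c+\tfrac1c\bigr)\bigl(\sqrt{\tfrac{\log(1/\delta)}{D}}+\tfrac{\log(1/\delta)}{D}\bigr)$, and for $\delta$ below an absolute threshold $\delta_c$ (ensuring $\log(1/\delta)\ge 1$) the parenthesis is $\le 2\log(1/\delta)/\sqrt{D}$. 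Finally $D_{c,\bar{\sigma}^2,\sigma^2,\delta}$ is chosen large enough that (a) the two relative deviations are genuinely below $1/2$ — which forces $D\gtrsim\frac{\bar{\sigma}^4}{\sigma^4}\bigl(1+\tfrac1{c^2}\bigr)\log(1/\delta)$, so $[A]_+=A$ and $B\ge D\sigma^2$ — and (b) $\|\widetilde{\bm{\epsilon}}\|_2^2\le 2D\sigma^2$ holds on its event. Carrying the explicit constants through the sub-Gaussian/sub-exponential tail bounds, the union bound, and the handful of factors of $2$ above yields the stated constant $12$ and the argument $\log(12/\delta)$.

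The main obstacle is the cross term $\bm{\epsilon}^\top\widetilde{\bm{\epsilon}}$; the conditioning argument sketched above is the cleanest route, though one could instead use the polarization identity $4\,\bm{\epsilon}^\top\widetilde{\bm{\epsilon}}=\|\bm{\epsilon}+\widetilde{\bm{\epsilon}}\|_2^2-\|\bm{\epsilon}-\widetilde{\bm{\epsilon}}\|_2^2$ together with sub-exponential control of the two squared norms, or a Hanson--Wright-type inequality. The secondary difficulty is bookkeeping the powers of $c$: the $\bm{\epsilon}^\top\widetilde{\bm{\epsilon}}$ contribution to $A-\|\bm{s}\|_2^2$ produces the $1/c$ factor after division by $c\,D\sigma^2$ whereas the linear terms produce only $1$, and one must ensure the high-probability radius controlling $|A-\|\bm{s}\|_2^2|/\|\bm{s}\|_2^2$ stays below $1$ — precisely the constraint $D\gtrsim 1/c^2$ hidden in $D_{c,\bar{\sigma}^2,\sigma^2,\delta}$, and the reason the bound is vacuous for an extremely small signal-to-noise ratio. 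Everything else is routine sub-Gaussian/sub-exponential tail bookkeeping.
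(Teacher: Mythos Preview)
Your proposal is correct and follows the same overall strategy as the paper: decompose the numerator $A=\bm{x}^\top\widetilde{\bm{x}}$ and denominator $B=\|\bm{x}-\widetilde{\bm{x}}\|_2^2$, control the linear pieces $\bm{s}^\top\bm{\epsilon},\bm{s}^\top\widetilde{\bm{\epsilon}}$ by (A5), the centred squared norms by (A6), and then pass from $|\widehat{c}^2-c^2|$ to $|\widehat{c}-c|$ after ensuring $A>0$ and $B\gtrsim D\sigma^2$ on the good event.

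The one genuine difference is the treatment of the cross term $\bm{\epsilon}^\top\widetilde{\bm{\epsilon}}$. You condition on $\widetilde{\bm{\epsilon}}$ and intersect with the (A6)-event $\{\|\widetilde{\bm{\epsilon}}\|_2^2\le 2D\sigma^2\}$. The paper instead proves once and for all that $\frac{1}{\sqrt{D}}\bm{\epsilon}^\top\widetilde{\bm{\epsilon}}\sim\mathrm{subE}(5\bar{\sigma}^4,\sqrt{2}\bar{\sigma}^2)$ by a short MGF computation: condition on $\bm{\epsilon}$, use (A5) for $\widetilde{\bm{\epsilon}}$, then bound $\mathbb{E}\exp(\lambda^2\bar{\sigma}^2\epsilon_d^2/2)$ via the standard ``sub-Gaussian squared is sub-exponential'' fact---so only (A5) is needed here, not (A6). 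This buys the paper a single clean sub-exponential tail that it merges with the sub-Gaussian tail into one bound of the form $e^{-b\sqrt{D}/(\mathrm{const}\cdot\bar{\sigma}^2)}$; that merging step is precisely where the constant $12$ and the $c$-dependent threshold $\delta_c=1\wedge 8e^{-c^2/4}$ emerge (the sub-Gaussian exponent dominates the sub-exponential one only when $\log(8/\delta)\gtrsim c^2$). Your route keeps $\sqrt{\log(1/\delta)}$ and $\log(1/\delta)$ terms separate and discards the former at the end, which is perfectly valid and even gives an absolute $\delta$-threshold, but your closing claim that the explicit constants work out to exactly $12$ and $\log(12/\delta)$ is optimistic---that specific form is an artifact of the paper's consolidation, not of your bookkeeping.
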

The proof is shown in Appendix~\ref{append:ConvergenceProof_hatc}. For $c^2$ less than $26$, $\delta$ can take any value in the interval $(0,\,0.01)$. When the order of $c$ is greater than $D^{-1/4}$, it holds that $D_{c,\bar{\sigma}^2,\sigma^2,\delta}=o(D)$. The explicit expressions of $\delta_{c}$ and $D_{c,\bar{\sigma}^2,\sigma^2,\delta}$ are presented in the proof.

Let us consider the asymptotic property of the estimator.
 \begin{itemize}
  \item Suppose $c=\frac{\|\bm{s}\|_2}{\sigma\sqrt{D}}\rightarrow0$ and 
	$\|\bm{s}\|_2/D^{1/4}\rightarrow\infty$ for $\bm{s}\in\mathbb{R}^D$ as $D\rightarrow\infty$. 
	In this case, 
	$c$ is greater than the order of $D^{-1/4}$ and 
	the condition on $D$ is asymptotically satisfied. 
	The estimation error, $|c-\widehat{c}|$, is bounded above by $o_p(D^{-1/4})$. 
  \item Suppose $c=\frac{\|\bm{s}\|_2}{\sigma\sqrt{D}}\rightarrow c_{\infty}\in(0,\infty)$ as $D\rightarrow\infty$. 
	Then, we have 
	$|c-\widehat{c}|=O_p(D^{-1/2})$
	and 
	$|c_{\infty}-\widehat{c}|\leq |c_{\infty}-c|+O_p(D^{-1/2})$. 
	We see that $\widehat{c}\rightarrow c_{\infty}$ holds in probability. 
 \end{itemize}
The above analysis means that if the average intensity of the pixel-wise signal, $\|\bm{s}\|_2/\sqrt{D}$, is not ignorable in comparison to $\sigma$, i.e., the order of $\|\bm{s}\|_2$ is greater than $D^{1/4}$ for $\bm{s}\in\mathbb{R}^D$, one can accurately estimate the SN ratio $c$ using $\widehat{c}$. 

The following theorem ensures that an approximation of $k_{D,\sigma}(\|\bm{s}\|_2)$ for large $D$ does not require the Monte Carlo sampling. 
\begin{thm}
\label{thm:weight-approximation}
     Assume (A3), (A5), and (A6). 
     Then, it holds that 
     $k_{D,\sigma}(\|\bm{s}\|_2)=\frac{c}{\sqrt{c^2+1}}+O(D^{-1/2})$  for large $D$.  
\end{thm}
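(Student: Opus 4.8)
The plan is to start from the exact expression for $k_{D,\sigma}(\|\bm{s}\|_2)$ in Eq.\eqref{eqn:exp-k_D}, namely
\[
  k_{D,\sigma}(\|\bm{s}\|_2) = \mathbb{E}_{r,\bm{u}}\!\left[\frac{r u_1/\sqrt{D}+c}{\|r\bm{u}/\sqrt{D}+c\bm{e}_1\|_2}\right],
\]
and show that the random quantity inside the expectation concentrates around $c/\sqrt{c^2+1}$ with fluctuations of size $O(D^{-1/2})$. Write $R = r/\sqrt{D} = \|\bm{\epsilon}\|_2/\sqrt{D}$ and note $R^2 = \|\bm{\epsilon}\|_2^2/D$; by (A2) we have $\mathbb{E}[R^2] = \sigma^2$, and by (A6) the centered variable $\|\bm{\epsilon}\|_2^2 - D\sigma^2$ is $\mathrm{subE}(4\bar\sigma^4 D, 4\bar\sigma^2)$, so $R^2 - \sigma^2$ concentrates at scale $D^{-1/2}$ (i.e.\ $\mathbb{E}[|R^2-\sigma^2|]=O(D^{-1/2})$ and $R^2$ has all the moments one needs, uniformly in $D$). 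Meanwhile the denominator expands as $\|r\bm{u}/\sqrt{D}+c\bm{e}_1\|_2^2 = R^2\|\bm{u}\|_2^2 + 2cRu_1 + c^2 = R^2 + 2cRu_1 + c^2$, since $\bm{u}$ is a unit vector. The cross term $R u_1$ is the only genuinely random "direction" contribution: because $\bm{u}$ is uniform on the sphere, $\mathbb{E}[u_1]=0$ and $\mathbb{E}[u_1^2] = 1/D$, so $u_1 = O_p(D^{-1/2})$, hence $R u_1 = O_p(D^{-1/2})$ as well.

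The next step is a Taylor/perturbation argument. Set $A := R^2 + c^2$ (which concentrates at $\sigma^2 + c^2\cdot$, wait — here I must be careful: actually with the normalization $\mathbb{V}[r u_1]=1$ we have $\mathbb{E}[R^2] = \mathbb{E}[r^2]/D$ and $\mathbb{E}[r^2 u_1^2] = 1$ means $\mathbb{E}[r^2]/D = 1$ in that normalization, so $R^2$ concentrates at $1$; equivalently, going back to the $\bm\epsilon$ scaling, $R^2 = \|\bm\epsilon\|_2^2/(D\sigma^2)$ concentrates at $1$). So write the integrand as
\[
  \frac{R u_1 + c}{\sqrt{R^2 + 2cR u_1 + c^2}},
\]
and substitute $R^2 = 1 + \xi$ with $\xi = O_p(D^{-1/2})$ and $R u_1 = \eta = O_p(D^{-1/2})$. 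The numerator is $c + \eta$; the denominator is $\sqrt{1 + c^2 + \xi + 2c\eta} = \sqrt{1+c^2}\,(1 + O_p(D^{-1/2}))$. A first-order expansion of $x\mapsto (c+x)/\sqrt{1+c^2+\text{(small)}}$ around the point $(\xi,\eta)=(0,0)$ gives $\frac{c}{\sqrt{1+c^2}}$ plus a remainder that is linear in $\xi,\eta$ with bounded coefficients (bounded because $c$ is fixed and $1+c^2+\cdots$ stays bounded away from $0$, since $R^2 \ge 0$ always and the cut-off / positivity keeps things nonzero). Taking expectations: the linear term in $\eta$ vanishes by symmetry (or is $O(D^{-1/2})$ in magnitude anyway), $\mathbb{E}[\xi] = 0$ exactly, and the second-order remainder is controlled by $\mathbb{E}[\xi^2] + \mathbb{E}[\eta^2] = O(D^{-1})$. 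One has to also bound the tail contribution where $\xi$ is large (no longer "small"): since the integrand $(Ru_1+c)/\|\cdots\|$ is bounded in absolute value by $1$ (it is a cosine similarity — indeed $|{\rm num}| \le \|R\bm u + c\bm e_1\|_2$ by Cauchy–Schwarz, so the ratio is at most $1$ in modulus), the contribution of the event $\{|\xi| > \text{threshold}\}$ is at most its probability, which decays sub-exponentially in $\sqrt{D}$ by (A6) and is therefore $o(D^{-1/2})$. Assembling these pieces yields $k_{D,\sigma}(\|\bm{s}\|_2) = \frac{c}{\sqrt{c^2+1}} + O(D^{-1/2})$.

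The main obstacle — and the only place that needs real care — is making the perturbation argument uniform and legitimate near the boundary: the denominator $\sqrt{R^2 + 2cRu_1 + c^2}$ could in principle be small if $R$ is near $0$ and $c$ is small, so the Lipschitz/Taylor bound on the map is not uniformly good over the whole sample space. This is handled by splitting the expectation into a "good" event (where $R^2$ is within, say, a constant factor of $1$, which by (A6) has probability $1 - O(e^{-c'\sqrt D})$) and its complement; on the good event the denominator is bounded below by a positive constant depending only on $c$, so the Taylor expansion with $O(D^{-1})$ remainder goes through, and on the bad event one uses the crude $|{\rm integrand}| \le 1$ bound together with the sub-exponential tail. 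A secondary (but routine) technical point is translating between the "$\bm\epsilon$-scaling" and the "unit normalization $\mathbb{V}[ru_1]=1$" used in Eq.\eqref{eqn:exp-k_D}, and keeping track that $\mathbb{E}[\|\bm u\|^2]=1$ exactly so the $\|\bm u\|_2^2$ factor causes no trouble. Since (A5) and (A6) are precisely the sub-Gaussian/sub-exponential hypotheses that deliver the $O(D^{-1/2})$-scale concentration of $\|\bm\epsilon\|_2^2/D$ and the $D^{-1/2}$-scale control of $u_1$, everything fits together, and the constant in the $O(D^{-1/2})$ can be read off as a fixed polynomial in $c$, $1/c$, $\bar\sigma^2/\sigma^2$.
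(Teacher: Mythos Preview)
Your proposal is correct and follows essentially the same route as the paper: both proofs write the integrand as a perturbation of $c/\sqrt{c^2+1}$ governed by two small quantities (the first noise coordinate $\epsilon_1/(\sigma\sqrt{D})$ and the deviation of $\|\bm{\epsilon}\|_2^2/(D\sigma^2)$ from $1$), then use the sub-Gaussian/sub-exponential moment bounds from (A5)--(A6) to control the expected error at order $D^{-1/2}$. The only noteworthy difference is in the grouping of the denominator: the paper writes it as $(\varepsilon+c)^2 + (1+\delta)$ with $1+\delta=\sum_{d\geq 2}\epsilon_d^2/(\sigma^2 D)\geq 0$, which keeps the denominator manifestly bounded below by $1+\delta$ for \emph{all} values of $\varepsilon$, whereas your grouping $1+c^2+\xi+2c\eta$ loses this automatic positivity---so your ``good event'' should also constrain $u_1$ (equivalently $\eta$), not just $R^2$, to guarantee a uniform lower bound on the denominator; this is an easy fix (e.g.\ intersect with $\{|u_1|\leq 1/2\}$, whose complement has probability decaying exponentially in $D$), but it is the one place where your sketch is slightly looser than the paper's decomposition.
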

The proof is shown in Appendix~\ref{append:ConvergenceProof_hatc}. From Theorem~\ref{thm:weight-approximation} and Theorem~\ref{thm:estimation_error_bd_of_hatc}, we have $\widehat{c}/\sqrt{\widehat{c}^2+1}=k_{D,\sigma}(\|\bm{s}\|_2)+O_p(D^{-1/2})$.

\subsection{Definition of dCS Loss}
\label{subsec:def of dcs loss}
From Theorem~\ref{thm:theorem1}, the proposed dCS loss with the noisy data $\bm{x}$ is given by
\begin{equation}
\label{eq:dcs loss}
    \ell_{\rm dCS}\left(\bm{x}\right) = \mathbb{E}_{\bm{b}}\left[  \frac{\ell_{\rm CS}(\bm{b}\odot\bm{x},\bm{b}\odot h_\theta\left(\tilde{\bm{x}}\right))}{\Hat{k}_{\|\bm{b}\|_1,\sigma}\left(\|\bm{b}\odot\bm{s}\|_2\right)} 
    \right],
\end{equation}
where $\tilde{\bm{x}}$ is constructed from $\bm{x}$, $\bm{b}$, and domain specific masking technique (e.g., BSM of Definition~\ref{dfn:blind-spot masking} for the vision domain and $\tau$-AMN of Definition~\ref{dfn:amn} for speech). In addition, $\Hat{k}_{\|\bm{b}\|_1,\sigma}\left(\|\bm{b}\odot\bm{s}\|_2\right)$ is computed without knowing $\bm{s}$; see Eq.\eqref{eq:estimation k_x}. Furthermore, the empirical risk for dCS over $\mathcal{D}$ is given by 
\begin{equation}
\label{eq:empirical dcs loss}
    L_{\rm dCS}(\theta) = \frac{1}{n} \sum_{i=1}^n \ell_{\rm dCS}(\bm{x}^{(i)}).
\end{equation}
For a mini-batch $\mathcal{B} \subseteq \mathcal{D}$, we compute $L_{\rm dCS}$ as described in
Algorithm~\ref{alg:computation of proposed dcs loss}, which is constructed under the Gaussian assumption.

\begin{rmk}
    An approximation of $\Hat{k}_{\|\bm{b}\|_1,\sigma}\left(\|\bm{b}\odot\bm{s}\|_2\right)$ leads to 
    $\ell_{\rm dCS}(\bm{x})
    = \frac{\sqrt{\hat{c}_{\bm x}^2 + 1}}{\hat{c}_{\bm x}+\eta}\ell_{\mathrm{CS}}(\bm{b}\odot\bm{x}, \bm{b}\odot h_{\theta}(\tilde{\bm x}))$, where 
    $\hat{c}_{\bm x}=\frac{\sqrt{2[(\bm{b}\odot\bm{x})^T (\bm{b}\odot \tilde{\bm{x}})]_+}}{
    \|\bm{b}\odot\bm{x}-\bm{b}\odot \tilde{\bm{x}}\|_2}$ and $\eta$ is a small positive constant. In Algorithm~\ref{alg:computation of proposed dcs loss}, however, we propose the loss function based on Monte Carlo sampling defined from Eq.\eqref{eq:estimation k_x} and~\eqref{eq:dcs loss} in order to deal with data with not only a large $D$ but a small $D$. 
\end{rmk}

\begin{figure}[!t]
  \begin{algorithm}[H]
    \caption{{\bf :} Computation of $L_{\rm dCS}$ for a mini-batch $\mathcal{B}$}
    \label{alg:computation of proposed dcs loss}
    \begin{algorithmic}[1]
     \Statex {\bf Input} Mini-batch (a subset of unlabeled noisy dataset $\mathcal{D}$): $\mathcal{B}=\left\{\bm{x}^{(i)}\right\}_{i=1}^m$ , AutoEncoder: $h_{\theta}$, Mean of Bernoulli distribution: $\rho\in [0,1]$.
      \Statex \textbf{Output} Empirical dCS risk for  $\mathcal{B}$.
      \For{$i=1,\cdots,m$}
        \State \multiline{Generate a Bernoulli vector $\bm{b}^{(i)} \in \{0,1\}^{\textrm{dim}\left(\bm{x}^{(i)}\right)}$ based on $\rho$. Then, construct another noisy data $\tilde{\bm{x}}^{(i)}$ by using $\bm{x}^{(i)}$, $\bm{b}^{(i)}$, and domain-specific masking technique, such as BSM of Definition~\ref{dfn:blind-spot masking} and $\tau$-AMN of Definition~\ref{dfn:amn}.
        }
        \State \multiline{Using $\bm{x}^{(i)}$, $\bm{b}^{(i)}$, $\tilde{\bm{x}}^{(i)}$, and Eq.\eqref{eq:estimation k_x}, compute the following estimated weight in Eq.\eqref{eq:dcs loss}: 
        $\Hat{k}_{\|\bm{b}^{(i)}\|_1,\sigma}\left(\left\|\bm{b}^{(i)}\odot\bm{s}^{(i)}\right\|_2\right)$,
        where $\bm{s}^{(i)}$ means the clean data of $\bm{x}^{(i)}$ and $\tilde{\bm{x}}^{(i)}$. }
      \EndFor
      \State Compute $L_{\rm dCS}$ of Eq.\eqref{eq:empirical dcs loss} for $\mathcal{B}$ by
      \begin{equation*}
          \frac{1}{m} \sum_{i=1}^m \frac{\ell_{\rm CS}\left(\bm{b}^{(i)}\odot\bm{x}^{(i)},\bm{b}^{(i)}\odot h_\theta \left(\tilde{\bm{x}}^{(i)}\right)\right)}{\Hat{k}_{\|\bm{b}^{(i)}\|_1,\sigma}\left(\left\|\bm{b}^{(i)}\odot\bm{s}^{(i)}\right\|_2\right)}.
      \end{equation*}
    \end{algorithmic}
  \end{algorithm}
\end{figure}

\begin{rmk}
    Suppose that we use a small $\rho$ (mean of Bernoulli distribution), say $\rho=0.1$, for a domain-specific masking technique such as blind-spot masking. Let $\bm{\epsilon}^{(i)}$ and $\Tilde{\bm{\epsilon}}^{(i)}$ be the noise to the data. One can observe that the correlation between $\bm{b} \odot \bm{\epsilon}^{(i)}$ and $\bm{b} \odot \Tilde{\bm{\epsilon}}^{(i)}$ is weakened by using $\bm{b}$ with a small $\rho$. Under this condition, the formula in 
    Theorem~\ref{thm:theorem1} will hold approximately. Furthermore, small $\rho$ makes the computation 
    of Eq.\eqref{eq:dcs loss} efficient. On the other hand, if $\rho$ is close to one, their correlation remains. A choice of $\rho$ is important in practice. 
\end{rmk}

%%%%%%%%%%%%%%%%%%%%%%%%%%%%%%%%%%%%%%%%%%%%%%%%%%%%%%% Section 4
% \section{Numerical Experiments}
\section{NUMERICAL EXPERIMENTS}
\label{sec:numerical experiments}
In this section, we evaluate our dCS loss on multiple DRL settings. We conduct four kinds of experiments: \textsf{Expt0}, \textsf{Expt1}, \textsf{Expt2}, and \textsf{Expt3}, where we focus on the vision domain in \textsf{Expt0}, \textsf{Expt1}, and \textsf{Expt2} and the speech domain in \textsf{Expt3}. Throughout this section, for a vision dataset (resp. for a speech dataset), we compute the dCS loss via Algorithm~\ref{alg:computation of proposed dcs loss} with BSM of Definition~\ref{dfn:blind-spot masking} (resp. with $\tau$-AMN of Definition~\ref{dfn:amn}). Regarding the evaluation, we employ 1) the test accuracy (\%) of linear evaluation protocol~\citep{chen2020simple} and 2) the clustering accuracy (\%) of clustering protocol~\citep{mcconville2021n2d}; see Section~\ref{append: lep and cp}. For the environmental setups and details of the hyper-parameters used in our experiments, see Appendix~\ref{append:Details of Hyper-Parameters} and~Appendix~\ref{append: details of computatioanl environment} respectively.

\subsection{Evaluation Protocol}
\label{append: lep and cp}
In this subsection, a set of the features and the corresponding true label set for training are denoted by $\mathcal{D}$ and $\bm y$. Similarly, a set of the features and the corresponding true label set for testing are denoted by $\mathcal{D}_{\rm tst}$ and $\bm{y}_{\rm tst}$. Let $f_{\psi}$ be an encoder with a set of trainable parameters $\psi$. The trained set is defined by $\psi^\ast$.

\paragraph{1) Linear Evaluation Protocol~\citep{chen2020simple}} We follow the standard evaluation protocol of self-supervised representation learning~\citep{chen2020simple}: at first, train an encoder $f_{\psi}$ using $\mathcal{D}$. After the training, freeze the trained parameters of the encoder, then attach a trainable linear head. After that, train the linear head using $\mathcal{D}$ and $\bm{y}$. At last, using the trained encoder and the trained linear-head,  compute the test accuracy (\%) for $\mathcal{D}_{\rm tst}$ and $\bm{y}_{\rm tst}$.

\paragraph{2) Clustering Protocol~\citep{mcconville2021n2d}} We follow the evaluation protocol introduced by~\citet{mcconville2021n2d}.
For convenience, we call this protocol \textit{clustering protocol}. For completeness, we overview the evaluation protocol based on \citet{mcconville2021n2d}. Let $\tilde{\mathcal{D}} = \mathcal{D} \cup \mathcal{D}_{\rm tst}$ and $\tilde{\bm{y}} = \bm{y} \cup \bm{y}_{\rm tst}$, where $|\tilde{\mathcal{D}}| = \tilde{n}$. Let $\bm{x}^{(i)}$ (resp. $y^{(i)}$) denote the $i$-th data point in $\tilde{\mathcal{D}}$ (resp. true label of $\bm{x}^{(i)}$ in $\tilde{\bm{y}}$). In this protocol, firstly, train an encoder $f_{\psi}$ for the dataset $\tilde{\mathcal{D}}$. After the training, compute $\bm{z}^{(i)} = f_{\psi^\ast}(\bm{x}^{(i)})$. Then, use UMAP~\citep{mcinnes2018umap} to transform $\{\bm{z}^{(i)}\}_{i=1}^{\tilde{n}}$ into $C$-dimensional feature vectors, where $C$ is the number of classes. After this, perform Gaussian Mixture Model Clustering (GMMC)~\citep{day1969estimating} on the transformed feature vectors to estimate cluster labels. Here, $C$ is set as the number of components in GMMC. At last, compute the clustering accuracy (\%) as follows:
\begin{equation*}
    100\times\max _{\iota} \frac{\sum_{i=1}^{\tilde{n}} \mathbb{I}\left[y^{(i)}=\iota\left(\hat{y}^{(i)}\right)\right]}{\tilde{n}},
\end{equation*}
where $\hat{y}^{(i)},i=1,...,\tilde{n}$ denote the estimated cluster labels, $\iota$ is a permutation of cluster labels, and $\mathbb{I}[\;\cdot\;]$ is the indicator function. Note that for the computation of the best permutation of the cluster labels, following the standard approach of~\citet{yang2010image}, we use the Kuhn-Munkres algorithm~\citep{kuhn1955hungarian}.

\subsection{\textsf{Expt0}: Performance Evaluation for Gaussian Noise on Vision Dataset, using AutoEncoder}
\label{subsubsec:expt0}
In \textsf{Expt0}, using an AE, we evaluate the performance of the dCS when the noise on a dataset satisfies the assumption (A2) of Section~\ref{subsec:Preliminary}, while comparing it with baseline methods.

\subsubsection{Setting in \textsf{Expt0}}
\label{subsubsec:setting in expt0}
We construct Noisy-MNIST from the original MNIST~\citep{lecun1998gradient}. Let $\bm{x}$ denote an image in MNIST, whose pixels are normalized within $[0, 1]$ range. Then, the noisy MNIST image is defined as $\bm{x} + \bm{\epsilon}$, where $\bm{\epsilon} \sim \mathcal{N}(\bm{0}, \sigma^2 I)$ with $\sigma = 0.01, 0.1, 0.3, 0.5,\;\text{and}\;0.7$.

Let $h_\theta$ denote an MLP-based AE, whose encoder and decoder are $f_\psi$ and $\tilde{f}_\zeta$, respectively (i.e., $h_\theta = \tilde{f}_\zeta \circ f_\psi$). Let $\theta := \psi \cup \zeta$ be a set of trainable parameters in the AE.
We employ the common structure $D$-500-500-2000-$C$ for the encoder $f_{\psi}$ as~\citet{mcconville2021n2d} do, where $D$ and $C$ denote the dimension of data and the number of classes, respectively.
Note that using the $f_{\psi}$, the structure of the AE $h_\theta$ is $D$-500-500-2000-$C$-2000-500-500-$D$; $D=784,C=10$ for Noisy-MNIST.

In this experiment, using Noisy-MNIST, we compare 1) MSE, 2) CS loss, 3) N2V loss, 4) SURE loss, and 5) dCS loss. Let $\mathcal{D}=\{\bm{x}^{(i)}\}_{i=1}^n, \bm{x}^{(i)}\in \mathbb{R}^D$ be Noisy-MNIST. The objectives with 
1) MSE and 2) the CS loss are defined respectively as follows:
\begin{equation*}
    \arg \min_{\theta} \frac{1}{n}\sum_{i=1}^n\left\| \bm{x}^{(i)} - h_\theta\left(\bm{x}^{(i)}\right)\right\|_2^2\;\text{and}\;\arg \min_{\theta} \frac{1}{n}\sum_{i=1}^n\ell_{\rm CS}\left(\bm{x}^{(i)}, h_\theta\left(\bm{x}^{(i)}\right)\right),
\end{equation*}
where $\ell_{\rm CS}$ is given in Eq.\eqref{eq:cosine-similarity loss}. In addition, the objectives with 3) N2V loss, 4) SURE loss, and 5) dCS loss are defined as Eq.(10) of~\citet{krull2019noise2void} (see also Eq.\eqref{eq:empirical objective n2v image denoise}), Eq.(6) of~\citet{zhussip2019extending} ($\sigma$ is estimated using~\citet{chen2015efficient}),  and Eq.\eqref{eq:empirical dcs loss}, respectively. For details of the hyper-parameters, see Appendix~\ref{append:Details of Hyper-Parameters}.

The comparing procedure is as follows: Firstly, using Noisy-MNIST and each loss, train the AE $h_\theta$ with the Adam optimizer~\citep{kingma2014adam} for eight hundred epochs. Secondly, evaluate the trained encoder by linear evaluation protocol and clustering protocol.

\begin{table*}[t!]
    \captionsetup{format=plain}
    \caption{
    Results on \textsf{Expt0}. The row with "Clustering" (resp. "Linear Evaluation") shows mean clustering accuracy (\%) with std over twenty trials under clustering protocol (resp. mean test accuracy (\%) with std over ten trials under linear evaluation protocol). The bold (resp. underlined) number means the best (resp. the second-best) accuracy. 
    }
    \label{table:enhanced ex-noisy-mnist}
    \centering
    \scalebox{0.75}{
        \begin{tabular}{cccccc}
            \toprule
            Noise level (Protocol) $\backslash$ Loss & MSE & CS & N2V & SURE & dCS (Ours)\\
            \midrule
            $\sigma = 0.01$ (Clustering) & \bm{$96.83 \pm 2.74$} & $95.13\pm 4.83$ & $93.12 \pm 5.75$ & $89.24 \pm 6.48$ & \underline{$96.27 \pm 4.15$}\\
            $\sigma = 0.01$ (Linear Evaluation) & \underline{$94.67 \pm 0.25$} & $94.11 \pm 0.33$ & $92.96 \pm 0.51$ & $82.15 \pm 2.63$ & \bm{$94.89 \pm 0.29$}\\
            $\sigma = 0.1$ (Clustering)  & \bm{$97.21 \pm 0.14$} & \underline{$96.45 \pm 2.92$} & $92.20 \pm 6.08$ & $90.73 \pm 7.15$ & $94.96 \pm 5.23$ \\
            $\sigma = 0.1$ (Linear Evaluation)  & $93.63 \pm 0.12$ & \underline{$93.69 \pm 0.13$} & $91.88 \pm 0.25$ & $89.04 \pm 0.25$ & \bm{$94.73 \pm 0.12$} \\
            $\sigma = 0.3$ (Clustering) & $94.25\pm 3.38$ & \underline{$94.45\pm 3.08$} & $89.94\pm 6.13$ & $83.28 \pm 5.74$ & \bm{$96.22 \pm 2.91$}\\
            $\sigma = 0.3$ (Linear Evaluation)  & $89.07 \pm 0.41$ & \underline{$90.04 \pm 0.35$} & $88.48 \pm 0.70$ & $85.15 \pm 0.57$ & \bm{$92.99 \pm 0.25$}\\
            $\sigma = 0.5$ (Clustering) & $81.39\pm 4.00$ & $76.05\pm 6.42$ & \underline{$82.20\pm 7.19$} & $74.86\pm 4.78$ & \bm{$91.41\pm 5.47$}\\
            $\sigma = 0.5$ (Linear Evaluation)  & $82.86 \pm 0.51$ & $84.44 \pm 0.72$ & \underline{$85.22 \pm 0.48$} & $79.98 \pm 0.57$ & \bm{$89.13 \pm 0.48$}\\
            $\sigma = 0.7$ (Clustering) & $68.85 \pm 4.63$ & $59.64\pm 4.74$ & \underline{$72.42\pm 5.51$} & $66.72\pm 3.77$ & \bm{$78.76 \pm 4.91$}\\
            $\sigma = 0.7$ (Linear Evaluation)  & $75.88 \pm 0.40$ & $77.30 \pm 0.99$ & \underline{$80.15 \pm 0.57$} & $74.17 \pm 0.69$ & \bm{$82.03 \pm 0.45$}\\
            \bottomrule
        \end{tabular}
        }
\end{table*}

\subsubsection{Results and Discussion for \textsf{Expt0}}
\label{subsubsec:results and discussion for expt0}
The results are shown in Table~\ref{table:enhanced ex-noisy-mnist}. In summary, the dCS loss outperforms the other losses except for the results under clustering protocol when $\sigma=0.01, 0.1$ are employed. In more detail, when $\sigma=0.1, 0.3$, except for the case under the clustering protocol with $\sigma=0.1$, the result of dCS in each case is the best among the five losses.
Here, we note that the results of CS follow those of dCS in most of the cases when $\sigma=0.1, 0.3$. This implies that CS loss can also deal with the noise when the level is relatively low. On the other hand, when $\sigma$ becomes larger, i.e., $\sigma=0.5, 0.7$, CS degrades its performance (and CS is outperformed by N2V). Thus, in high-level noise settings, CS does not work efficiently. However, our dCS still performs the best among them when $\sigma=0.5, 0.7$, indicating that the performance of dCS is robust against both small and large noise.

\begin{figure*}[!t]
    \captionsetup{format=plain}
    \centering
    \includegraphics[scale=0.8]{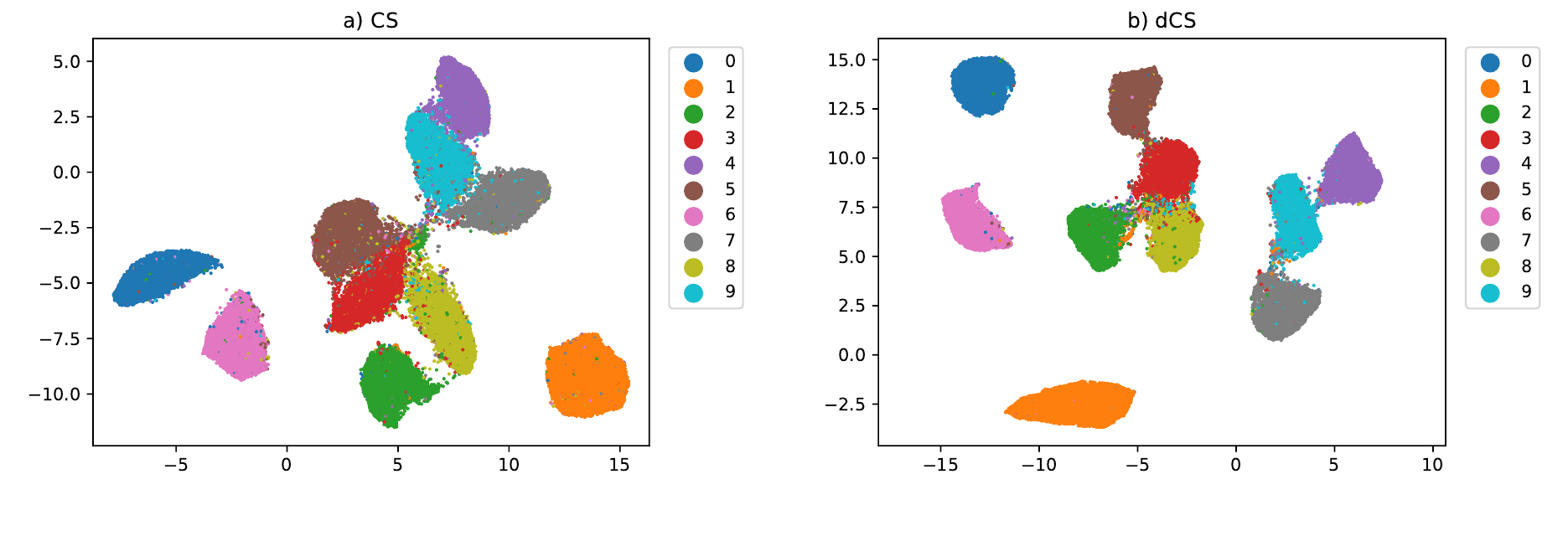}
    \caption{
    Two-dimensional UMAP visualization of the obtained representations on Noisy-MNIST with $\sigma = 0.3$.
    The representation is the output of the trained encoder via minimizing a) CS loss and b) dCS loss. In both figures, the color expresses the class label ranging from zero to nine, as defined in the right side of each figure.
    }
    \label{fig: umap dcs vs cs on noisy mnist}
\end{figure*}

\begin{figure}[t!]
    \captionsetup{format=plain}
    \centering
    \includegraphics[scale=0.4]{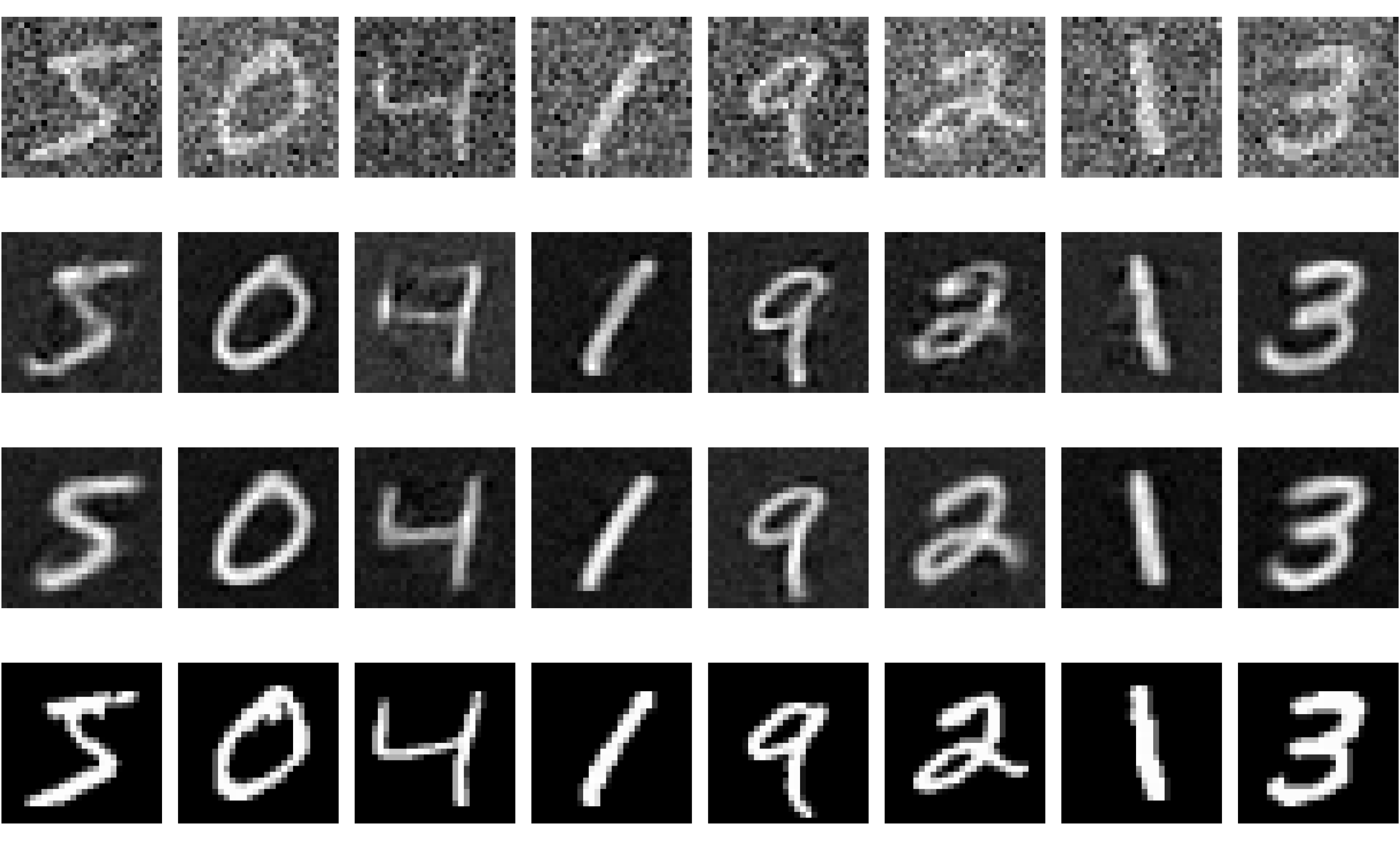}
    \caption{
    The first, second, third, and fourth row show images of Noisy-MNIST with $\sigma=0.3$, predicted corresponding clean images via the CS loss, predicted corresponding clean images via the dCS, and the corresponding clean images, respectively. 
   }
    \label{fig:denoised-noisymnist}
\end{figure}

In Figure~\ref{fig: umap dcs vs cs on noisy mnist} of a) (resp. b)), using Noisy-MNIST with $\sigma=0.3$,
the latent variables obtained by the trained encoder of the CS (resp. dCS) loss are visualized by UMAP~\citep{mcinnes2018umap}; for the details, see~Appendix~\ref{append: details EXPT0}. 
The figures show a clear improvement with the position of the clusters due to the denoising property of the dCS.

At last in Figure~\ref{fig:denoised-noisymnist}, using several images of Noisy-MNIST with $\sigma=0.3$, we show the predicted clean images from the corresponding Noisy-MNIST images via the dCS and CS, although our primary purpose in this study is to obtain a good representation of noisy data for downstream tasks. See~Appendix~\ref{append: details EXPT0} for details of how to make the figure.
The figure indicates 1) the dCS can remove the noise that satisfies (A2), and 2) the denoising ability of the dCS is better than that of the CS in general. Thus, the figure is consistent with our theory in Section~\ref{subsec:Theory behind Our Objective}.

\subsection{\textsf{Expt1}: Performance Evaluation for Real-World Noise on Vision Dataset, using AutoEncoder}
\label{subsubsec:expt1}
In \textsf{Expt1}, we conduct a similar experiment with \textsf{Expt0} except for the condition that the noise may not satisfy (A2). 

\subsubsection{Setting in \textsf{Expt1}}
\label{subsubsec:setting of expt1}
Here, the following four datasets are employed: MNIST, USPS~\citep{hull1994database}, Pendigits~\citep{dua2019uci}, and Fashion-MNIST~\citep{xiao2017fashion}; see details of the datasets in Appendix~\ref{append:details of datasets}. 
For all four datasets, no additive noise is added: $\sigma=0$ for $\bm{\epsilon} \sim \mathcal{N}(\bm{0}, \sigma^2 I)$ of Section~\ref{subsubsec:setting in expt0}. We conduct almost the same experiment as \textsf{Expt0} except for the difference in the dataset and removal of SURE loss.

\begin{table*}[t!]
    \captionsetup{format=plain}
    \caption{
    Results on \textsf{Expt1}. The row with "Clustering" (resp. "Linear Evaluation") shows mean clustering accuracy (\%) with std over twenty trials under clustering protocol (resp. mean test accuracy (\%) with std over ten trials under linear evaluation protocol). The bold (resp. underlined) number means the best (resp. the second-best) accuracy. 
    }
    \label{table:experiment1}
    \centering
    \scalebox{0.75}{
        \begin{tabular}{ccccc}
            \toprule
            Dataset (Protocol) $\backslash$ Loss & MSE & CS & N2V & dCS (Ours)\\
            \midrule
            MNIST (Clustering) & \bm{$96.78\pm 2.88$} & \underline{$95.12\pm 4.86$} & $93.28\pm 5.56$ & $94.79\pm 5.45$\\
            MNIST (Linear Evaluation) & \underline{$94.53\pm 0.33$} & $94.09\pm 0.43$ & $93.05\pm 0.33$ & \bm{$95.07\pm 0.29$}\\
            USPS (Clustering)  & $84.31\pm 6.49$ & \underline{$85.24\pm 8.46$} & $82.78\pm 10.2$ & \bm{$86.94\pm 8.72$}\\
            USPS (Linear Evaluation) & \underline{$87.63\pm 0.47$} & $87.19\pm 0.35$ & $86.58\pm 0.51$ & \bm{$88.08\pm 0.40$}\\
            Pendigits (Clustering) & \underline{$85.71\pm 3.89$} & \bm{$85.84\pm 3.81$} & $85.59 \pm 2.99$ & $82.22 \pm 4.71$\\
            Pendigits (Linear Evaluation) & $82.50\pm 1.21$ & \underline{$82.82\pm 0.85$} & \bm{$86.09\pm 1.89$} & $81.84\pm 1.30$\\
            Fashion-M (Clustering) & \underline{$60.70\pm 3.29$} & $59.60\pm 3.79$ & $60.66\pm 2.97$ & \bm{$62.64\pm 4.50$}\\
            Fashion-M (Linear Evaluation) & \bm{$78.11\pm 0.56$} & $75.53\pm 0.98$ & $76.29\pm 0.69$ & \underline{$76.70\pm 0.49$}\\
            \bottomrule
        \end{tabular}
        }
\end{table*}

\subsubsection{Results and Discussion of \textsf{Expt1}}
\label{subsubsec:results and discussion of expt1}
The results are shown in Table~\ref{table:experiment1}. In summary, our dCS performs the best since it achieves the four highest accuracies and one second-highest accuracy over the eight comparisons. This indicates that the dCS performs well even when the noise assumption (A2) is violated.

On the other hand, 
%with a few comparisons in the table, 
the dCS does not perform well for MNIST under clustering protocol. The result is consistent with the under-performing dCS results of $\sigma=0.01, 0.1$ under clustering protocol in Table~\ref{table:enhanced ex-noisy-mnist}. In addition, the dCS does not perform well for Pendigits in both the protocols. A possible reason for the under-performing results with Pendigits is that BSM with the dCS does not work efficiently for low-dimensional data (the dimension of Pendigits data is only sixteen), unlike BSM with N2V.

At last, we investigate a possible reason why the results of linear evaluation are relatively insignificant compared to those of clustering in Table~\ref{table:experiment1}. Note that this tendency can be also seen in Table~\ref{table:enhanced ex-noisy-mnist}. The tendency could be caused by using true labels when training a linear classification head in the linear evaluation protocol. In the clustering protocol, no true label is used during training. In \textsf{Expt0} and \textsf{Expt1}, the true labels possibly contribute to closing the performance gap between the dCS and baseline methods.

\begin{figure}[!t]
    \captionsetup{format=plain}
    \centering
    \includegraphics[scale=1.0]{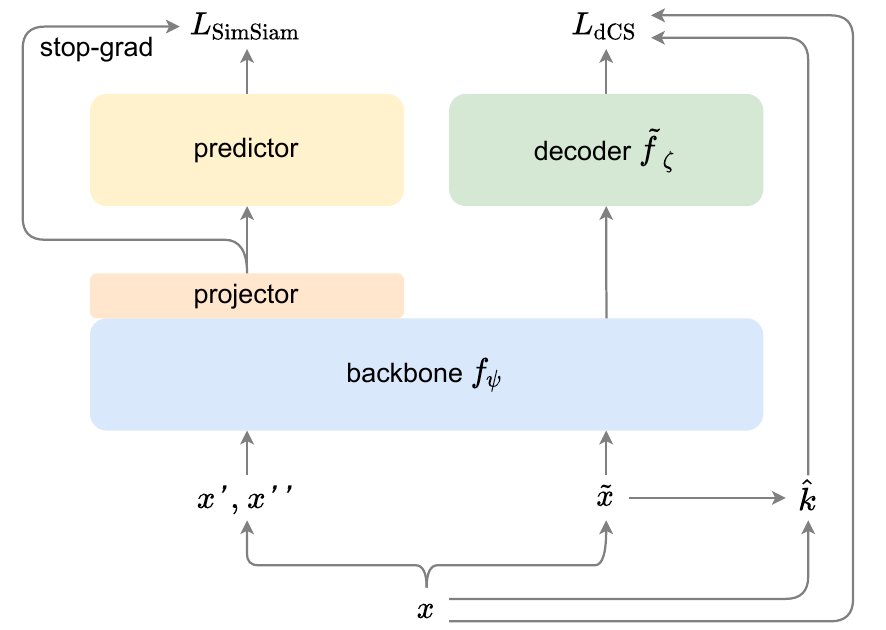}
    \caption{
    Architecture of SimSiam-dCS, where the dCS is being used as a regularizer in addition to the original SimSiam. The symbol $\bm x$ denotes raw data, while $\bm{x}^\prime$ and $\bm{x}^{\prime\prime}$ mean augmented data from $\bm x$. As for $\hat{k}$ and $\tilde{\bm x}$, their definitions are the same as those in Figure~\ref{fig:fig-Ldcs-arch}.
    }
    \label{fig: SimSiam-Ldcs_arch}
\end{figure}

\subsection{\textsf{Expt2}: Performance Evaluation for Real-World Noise on Vision Dataset, using SimSiam}
\label{subsubsec:expt2}
In \textsf{Expt2}, we check how efficiently the dCS loss collaborates with SimSiam~\citep{chen2021exploring} as the regularizer. Here, the noise also may not satisfy (A2). 

\subsubsection{Setting in \textsf{Expt2}}
\label{subsubsec:setting in expt2}
We employ CIFAR10~\citep{krizhevsky2009learning}, CIFAR100~\citep{krizhevsky2009learning}, and Tiny-ImageNet~\citep{le2015tiny}; see  details in~Appendix~\ref{append:details of datasets}. Similar to \textsf{Expt1}, no additive noise is added to all datasets.

We propose to incorporate our dCS objective into the SimSiam framework~\citep{chen2021exploring} by plugging a shallow decoder $\tilde{f}_\zeta$ to the end of backbone encoder $f_\psi$; see Figure~\ref{fig: SimSiam-Ldcs_arch}. Hereafter we refer to this method as SimSiam-dCS. Let $h_\theta = \tilde{f}_\zeta \circ f_\psi$, where $\theta = \zeta \cup \psi$. Let us define an objective of SimSiam with a predictor MLP $g_\xi$ of Figure~\ref{fig: SimSiam-Ldcs_arch} by $L_{\rm SimSiam}(\psi,\xi)$; see Eq.\eqref{eq:simsiam loss}. Then, the SimSiam-dCS objective is written as $$L(\theta, \xi) := L_{\rm SimSiam}(\psi,\xi) + \lambda L_{\rm dCS}(\theta),$$ where $\lambda > 0$ is a hyper-parameter controlling the balance between SimSiam and dCS objectives. Note that, following~\citet{chen2021exploring}, two augmented views $(\bm{x}', \bm{x}'')$ used for $L_{\rm SimSiam}$ are constructed from an original raw image $\bm{x}$.
Besides, the blind-spot masked image (i.e., $\bm{\tilde{x}}$) used for $L_{\rm dCS}$ is also constructed from the same $\bm{x}$. 

We also introduce two more variants of SimSiam, i.e., SimSiam-N2V, and SimSiam with BSM. In SimSiam-N2V, the N2V loss is a regularizer for SimSiam, like SimSiam-dCS. SimSiam with BSM is SimSiam with BSM\footnote{BSM can be interpreted as one of the transformations.} added to the set of transformations with probability one.

For the backbone encoder $f_\psi$, following \citet{chen2021exploring}, we have used the variant of ResNet-18 for CIFAR-10~\citep{he2016deep} when running the experiments for CIFAR-10 and CIFAR-100 datasets. We also use a variant of ResNet-50\footnote{The first maxpool layer is removed due to small image sizes.} for Tiny-ImageNet. For the projection head~\citep{chen2020simple}, we use the two and three-layer MLPs for the ResNet-18 and ResNet-50 model, respectively, where we follow~\citet{chen2021exploring} for the design of these MLPs. The decoder $\tilde{f}_\zeta$ is a single linear layer for CIFAR-10 and CIFAR-100, while a five-layer convolutional decoder with pixel shuffling \citep{DBLP:conf/cvpr/ShiCHTABRW16} was used for Tiny-ImageNet. We followed the hyper-parameter setups of~\citet{chen2021exploring}, where the settings of ImageNet were used for Tiny-ImageNet. We have fixed $\lambda = 0.01$ for CIFAR and $\lambda = 0.02$ for Tiny-ImageNet. 

We have used the \emph{official}\footnote{\url{https://github.com/facebookresearch/simsiam} (Last accessed: 16 April, 2023).} SimSiam package for reproducing the baseline and implementing SimSiam-dCS. See Appendix~\ref{append:Details of Hyper-Parameters} for further details of the hyper-parameters. 

In this experiment, we compare 1) SimSiam, 2) SimSiam with BSM, 3) SimSiam-N2V, and 4) SimSiam-dCS. We follow the standard setting of~\citet{chen2021exploring}: Firstly, train each DNN (Deep NN)-based model for eight hundred epochs. Then, the trained encoder (corresponding to the backbone of Figure~\ref{fig: SimSiam-Ldcs_arch}) is evaluated by the linear evaluation protocol of Section~\ref{append: lep and cp}. 

In \textsf{Expt2}, only one trial is conducted for each method because of the high computing cost of SimSiam. Additionally, we did not employ the clustering protocol, since the output's dimension of the backbone is too large to construct a meaningful k-nearest neighbor graph for UMAP. Moreover, we focus on only SimSiam here, because it is known to perform well even with small batch size, unlike SimCLR~\citep{chen2020simple} and BYOL~\citep{grill2020bootstrap}, which suffer from small batch size; see Section~\ref{subsec:Review of Representation Learning Methods}.

At last, we remark that the dCS regularizer can be a promising way to improve the performance of SimCLR and BYOL, since the two methods use the CS and a similar DNN to SimSiam.

\subsubsection{Results and Discussion of \textsf{Expt2}}
\label{subsubsec:results and discussion of expt2}
The results are shown in Table~\ref{table:enhanced experiment2}. In summary, our SimSiam-dCS performs the best for all datasets, and we observed some margin between them for CIFAR-100. In Table~\ref{table:ex-hp-ex2}, we report the results of SimSiam-dCS with different $\lambda$. The table shows that the performance of SimSiam-dCS is robust against the change of $\lambda$.

\begin{table}[t!]
    \captionsetup{format=plain}
    \caption{
    Results on \textsf{Expt2}. We report test-set accuracy (\%) with one trial under linear evaluation protocol. The bold (resp. underlined) number means the best (resp. the second-best) accuracy.
    }
    \label{table:enhanced experiment2}
    \centering
    \scalebox{0.85}{
    \begin{tabular}{cccc}
        \toprule
        Method $\backslash$ Dataset & CIFAR-10 & CIFAR-100 & Tiny-ImageNet\\
        \midrule
        SimSiam (repro.) & $91.55$ & $63.72$ & $53.61$\\
        SimSiam with BSM & $91.24$ & $64.30$ & $53.21$ \\
        SimSiam-N2V & \underline{$91.68$} & \underline{$64.59$} & \underline{$53.72$} \\
        SimSiam-dCS (Ours) & \bm{$91.73$} & \bm{$65.20$} & \bm{$53.77$}\\
        \bottomrule
    \end{tabular}
    }
\end{table}

\begin{table}[t!]
    \caption{
    Test accuracy (\%) of SimSiam-dCS with different $\lambda$ values under linear evaluation protocol. The symbol "-" means that no result is available.
    }
    \label{table:ex-hp-ex2}
    \centering
    \scalebox{0.85}{
        \begin{tabular}{cccccc}
            \toprule
            Dataset $\backslash$ $\lambda$ & 0.0001 & 0.001 & 0.01 & 0.02 & 0.05 \\
            \midrule
            CIFAR-10      & $91.54$ & $91.30$ & $91.73$ & - & $91.46$ \\
            CIFAR-100     & $64.19$ & $63.98$ & $65.20$ & - & $65.05$ \\
            Tiny-ImageNet & $53.37$ & $53.58$ & $52.71$ & $53.77$ & - \\
            \bottomrule
        \end{tabular}
    }
\end{table}

\begin{rmk}
    The concurrent work by~\citet{baier2023self} proposes SidAE, which is a combination of the following two different RL methods: SimSiam~\citep{chen2021exploring} and a denoising AutoEncoder of~\citet{vincent2008extracting}. They aim to leverage the information that can be learned by one method to make up for the shortcomings of another. Although the motivation of the experiments presented in Section~\ref{subsubsec:expt2} in our work is similar to~\citet{baier2023self}, we remark that the reconstruction loss used in SidAE is defined by the Euclidean norm. Also, \citet{baier2023self} do not compare the performance of their proposed method to that of SimSiam with the CS loss. On the other hand, we propose a modified CS loss that can handle the noise in data and experimentally verify that SimSiam with the dCS regularizer can outperform SimSiam with the N2V regularizer, where the N2V regularizer is also defined by the Euclidean norm. Therefore, our work provides new insights that are not shown by~\citet{baier2023self}.
\end{rmk}

\subsection{\textsf{Expt3}: Performance Evaluation for Real-World Noise on Speech Dataset, using Large AutoEncoder}
\label{subsubsec: expt3}
In \textsf{Expt3}, using a large AE, we evaluate the performance of the dCS when the noise on a speech dataset may not satisfy the assumption (A2).

\subsubsection{Setting in \textsf{Expt3}}
\label{subsubsec:setting in expt3}
Using ESC-50~\citep{piczak2015dataset} dataset, we compare 1) MSE, 2) CS, 3) N2V, and 4) dCS. The dataset contains two thousand data samples with $220500$ dimension, and the number of classes is fifty. Inspired by the recent self-supervised learning methods~\citep{liu2021tera,gong2021ast} that use the Transformer encoder~\citep{vaswani2017attention} or its variants, we use an AE $h_{\theta}$ defined by Vision-Transformer (ViT)~\citep{DBLP:conf/iclr/DosovitskiyB0WZ21}. 
For further details on ESC-50, see Appendix~\ref{append:details of datasets}. We add no additive noise to the dataset. 

The procedure for comparing the four losses is as follows: Firstly, using the training set (the size is sixteen hundred) and each loss, train the ViT-based AE for four thousand epochs. Secondly, evaluate the trained encoder by linear evaluation protocol using the test set (the size is four hundred). For computing the dCS loss, we use Algorithm~\ref{alg:computation of proposed dcs loss} with $\tau$-AMN of Definition~\ref{dfn:amn}. The loss of N2V is also defined via $\tau$-AMN instead of BSM.

\subsubsection{Results and Discussion of \textsf{Expt3}}
\label{subsubsec:results and discussion of expt3}
The results are shown in Table~\ref{table:results of expt3}. For details of hyper-parameters, see Appendix~\ref{append:Details of Hyper-Parameters}. We do not employ the clustering protocol in this experiment due to the same reason with \textsf{Expt2}; see Section~\ref{subsubsec:setting in expt2}. As we can see in the table, our dCS outperforms the other losses by a large margin.

\begin{table}[t!]
    \captionsetup{format=plain}
    \caption{
    Results on \textsf{Expt3}. The row shows mean test accuracy (\%) with std over six trials under linear evaluation protocol. The bold (resp. underlined) number means the best (resp. the second-best) accuracy.
    }
    \label{table:results of expt3}
    \centering
    \scalebox{0.85}{
        \begin{tabular}{ccccc}
            \toprule
            Dataset $\backslash$ Loss & MSE & CS & N2V &dCS (Ours)\\
            \midrule
            ESC-50  & $22.83\pm0.55$ & \underline{$27.92\pm2.34$} & $21.21\pm2.05$ & \bm{$30.29\pm2.21$} \\
            \bottomrule
        \end{tabular}
    }
\end{table}

\subsection{Further Discussion}
\label{subsec:discussion}

\paragraph{Violation of Assumption (A2)}
In practice, the assumption (A2) does not necessarily holds. However, our method outperforms N2V on multiple DRL settings, despite the fact that the noise assumption is relatively stronger than the noise assumption of N2V; see (A7) of Appendix~\ref{append:Review of Noise2Void} for the N2V assumption. This implies that, in the DRL setting, our method could be robust against the case where (A2) is violated.

\paragraph{Running Time} 
In our numerical experiments, when a DNN model is large, our method does not significantly affect the computational time since the parameter optimization dominates the computation of the dCS loss. For example, for CIFAR-10 (resp. Tiny-ImageNet) of Table~\ref{table:enhanced experiment2}, SimSiam costs fifteen hours (resp. forty five hours) while SimSiam-dCS costs fifteen to sixteen hours (resp. forty five to forty six hours).

%%%%%%%%%%%%%%%%%%%%%%%%%%%%%%%%%%%%%%%%%%%%%%%%%%%% Section 5
% \section{Conclusion and Future Work}
\section{CONCLUSION AND FUTURE WORK}
\label{sec:conclusion}
In this paper, we tackle the representation learning problems under the assumption that data are contaminated by some noise. Inspired by the recent work on denoising and representation learning, we propose a modified cosine-similarity loss termed denoising Cosine Similarity (dCS), which can enhance the efficiency of representation learning from noisy data. The dCS loss is motivated by our exploration of the theoretical background around the cosine-similarity loss. Finally, we demonstrate the empirical performance of the dCS loss in multiple experimental settings. We believe that our study motivates the research community involving representation learning to consider more practical settings in which data is contaminated by noise. Note that for the potential negative social impacts of this work, see Appendix~\ref{append: Limitations and Potential Negative Social Impacts}. As a future work, it is worth constructing unsupervised and self-supervised learning algorithms that work under a more general noise assumption.

\section*{Acknowledgments}
This work was partially supported by JSPS KAKENHI Grant Numbers 19H04071, 20H00576, and 23H03460.
\clearpage

%%%%%%%%%%%%%%%%%%%%%%%%%%%%%%%%%%%%%%%%%%%%%%%%%%%%%%%%%%%% Appendix starts %%%%%%%%%%%%%%%%

\appendix

\renewcommand{\thefigure}{\Alph{section}.\arabic{figure}}
\renewcommand{\thetable}{\Alph{section}.\arabic{table}}

\section{POTENTIAL NEGATIVE SOCIAL IMPACTS}
\label{append: Limitations and Potential Negative Social Impacts}
Representation Learning (RL) is empirically verified to be efficient for enhancing several learning strategies, such as supervised learning, semi-supervised learning, transfer learning, clustering, etc. In addition, those learning could play a core part in machine-learning-based artificial intelligence. Although our proposed loss can assist RL, further development of RL may cause some privacy or security issues. Moreover, due to the convenience of technologies in which machine-learning-based artificial intelligence is involved, the replacement with automation may occur in the  industrial world.

\section{FURTHER DETAILS FOR EXISTING METHODS}
\label{append: Review of Popular Denoising Methods}
We introduce further details of existing methods, which is related to our method. First, N2V and its theory are
introduced in Appendix~\ref{append:Review of Noise2Void}. Next, details of AE-based RL methods and existing theories for constrastive learning are introduced in Appendix~\ref{append:Further Details with Representation Learning}.

\subsection{Noise2Void}
\label{append:Review of Noise2Void}
Noise2Void (a.k.a. N2V)~\citep{krull2019noise2void} is proposed in the context of single image denoising. Let $\bm{x} \in \mathbb{R}^D$ denote a noisy image, where $D$ is the dimension. Suppose that $\bm{x} = \bm{s} + \bm{\epsilon}$, where $\bm{s}$ and $\bm{\epsilon}$ are the clean image and its noise, respectively. Let $h_{\theta}$ denote an U-Net~\citep{ronneberger2015u}, where $\theta$ is a set of trainable parameters.

In the Noise2Void algorithm, at first, another noisy image $\Tilde{\bm{x}}$ is constructed by the Blind-Spot Masking (BSM) technique of Definition~\ref{dfn:blind-spot masking} from the noisy image $\bm{x}$. Let $\Tilde{\bm{x}}=\bm{s} + \tilde{\bm \epsilon}$, where $\tilde{\bm \epsilon}$ is the noise of $\Tilde{\bm{x}}$, i.e., $\bm{x}$ and $\Tilde{\bm{x}}$ share the same clean $\bm{s}$. Then, using the pair of two noisy images $(\bm{x}, \Tilde{\bm{x}})$, the objective is defined as follows: 
\begin{equation}
\label{eq:empirical objective n2v image denoise}
    \theta^\ast = \arg \min_{\theta} \mathbb{E}_{\bm{s},\bm{\epsilon},\Tilde{\bm{\epsilon}},\bm{b}} \left[ \left\| \bm{b} \odot \hat{\bm{s}} - \bm{b} \odot \bm{x} \right\|_2^2
    \right],
    %\frac{1}{d^\prime} \sum_{n=1}^{d^\prime} \left( \hat{x}_{i(n)} - y_{i(n)} \right)^2,
\end{equation}
where $\hat{\bm{s}} = h_{\theta}\left(\Tilde{\bm{x}}\right)$, $\odot$ denotes Hadamard product, and $\bm b$ is a Bernoulli random vector related to the BSM. After obtaining $\theta^\ast$, $h_{\theta^\ast}(\bm{x})$ is a  prediction for the clean data of $\bm x$. As shown in Eq.\eqref{eq:empirical objective n2v image denoise}, the loss of N2V can be defined by only single noisy image, unlike N2N~\citep{lehtinen2018noise2noise}. In the following, we review the theory of N2V based on the original paper~\citep{krull2019noise2void}.

Let us define an assumption (A7) as follows:
\begin{enumerate}
    \item [(A7)]
    For a random clean data $\bm{s} \in \mathbb{R}^{{\rm dim}(\bm{s})}$, let $\bm{b} \in \{0,1\}^{{\rm dim}(\bm{s})}$ denote a Bernoulli random vector, which is statistically independent of $\bm s$. For a fixed $\bm s$, a pair of noisy data $(\bm{x}, \Tilde{\bm{x}})$ is modeled via $\bm{x} = \bm{s} + \bm{\epsilon}$ and $\Tilde{\bm{x}} = \bm{s} + \Tilde{\bm{\epsilon}}$. Here, $\bm{\epsilon}, \Tilde{\bm{\epsilon}} \in \mathbb{R}^{{\rm dim}(\bm{s})}$ are the random noises, which are statistically independent conditioning on $\bm s$ and $\bm b$. Additionally,  $\bm{\epsilon}$ satisfies $\mathbb{E}[\bm{\epsilon}|\bm{s},\bm{b}]=\bm{0}$ and $\mathbb{E}[\|\bm{\epsilon}\|_2^2]<+\infty$.
\end{enumerate}
\begin{prop}
\label{prop:n2v-s}
Consider a random clean data $\bm{s}$ satisfying (A1) in Section~\ref{subsec:Preliminary}. Then, consider a Bernoulli random vector $\bm b$ and a pair of noisy data $(\bm{x}, \Tilde{\bm{x}})$, which satisfy (A7). Let $h_{\theta}: \mathbb{R}^{{\rm dim}(\bm{x})} \to \mathbb{R}^{{\rm dim}(\bm{x})}, \bm{x}\mapsto h_{\theta}(\bm{x})$ be an AutoEncoder (e.g., U-Net) parameterized by $\theta$. The following equations hold:
\begin{align}
    \argmin_{\theta}\mathbb{E}_{\bm{s},\bm{\epsilon},\Tilde{\bm{\epsilon}},\bm{b}}\left[ \left\| \bm{b}\odot\hat{\bm{s}} - \bm{b}\odot\bm{x} \right\|_2^2 \right]
    &= \argmin_{\theta}\mathbb{E}_{\bm{s},\Tilde{\bm{\epsilon}},\bm{b}}\left[\left\|\bm{b}\odot\hat{\bm{s}} - \bm{b}\odot\bm{s}\right\|_2^2\right] \label{eq:n2n equal n2c under dim-reduction with l2}\\
    &= \argmin_{\theta}\mathbb{E}_{\bm{s},\Tilde{\bm{\epsilon}}}\left[ \left\|\hat{\bm{s}} - \bm{s} \right\|_2^2\right], \label{eq:n2n equal n2c with l2}
\end{align}
where $\hat{\bm{s}} = h_{\theta}\left(\Tilde{\bm{x}}\right)$.
\end{prop}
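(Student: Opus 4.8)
The plan is to prove the two equalities in \eqref{eq:n2n equal n2c under dim-reduction with l2}--\eqref{eq:n2n equal n2c with l2} separately, both by elementary expansion of the squared Euclidean norm combined with the conditional-independence structure in (A7). Throughout, $\hat{\bm{s}}=h_{\theta}(\Tilde{\bm{x}})$ with $\Tilde{\bm{x}}=\bm{s}+\Tilde{\bm{\epsilon}}$, so $\hat{\bm{s}}$ is a (measurable) function of $\bm{s}$ and $\Tilde{\bm{\epsilon}}$ only, and all expectations are finite because $\mathbb{E}[\|\bm{\epsilon}\|_2^2]<+\infty$ (and we restrict to $\theta$ with $\mathbb{E}[\|\hat{\bm{s}}\|_2^2]<\infty$, outside of which the objectives are $+\infty$ and irrelevant for the argmin).

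\textbf{First equality.} I would write $\bm{b}\odot\bm{x}=\bm{b}\odot\bm{s}+\bm{b}\odot\bm{\epsilon}$ and expand
\begin{equation*}
\|\bm{b}\odot\hat{\bm{s}}-\bm{b}\odot\bm{x}\|_2^2
= \|\bm{b}\odot\hat{\bm{s}}-\bm{b}\odot\bm{s}\|_2^2
-2\langle \bm{b}\odot(\hat{\bm{s}}-\bm{s}),\,\bm{b}\odot\bm{\epsilon}\rangle
+\|\bm{b}\odot\bm{\epsilon}\|_2^2 .
\end{equation*}
Taking $\mathbb{E}[\,\cdot\mid\bm{s},\bm{b}]$: since $\bm{\epsilon}$ and $\Tilde{\bm{\epsilon}}$ are independent conditionally on $(\bm{s},\bm{b})$ by (A7), the vector $\bm{b}\odot(\hat{\bm{s}}-\bm{s})$ is conditionally independent of $\bm{\epsilon}$, so the cross term factorizes as $\langle\bm{b}\odot(\mathbb{E}[\hat{\bm{s}}\mid\bm{s},\bm{b}]-\bm{s}),\,\bm{b}\odot\mathbb{E}[\bm{\epsilon}\mid\bm{s},\bm{b}]\rangle=0$ because $\mathbb{E}[\bm{\epsilon}\mid\bm{s},\bm{b}]=\bm{0}$. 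The term $\mathbb{E}[\|\bm{b}\odot\bm{\epsilon}\|_2^2\mid\bm{s},\bm{b}]$ does not depend on $\theta$. Taking the outer expectation over $(\bm{s},\bm{b})$ (which also integrates over $\Tilde{\bm{\epsilon}}$ via the tower property) yields
\begin{equation*}
\mathbb{E}_{\bm{s},\bm{\epsilon},\Tilde{\bm{\epsilon}},\bm{b}}\big[\|\bm{b}\odot\hat{\bm{s}}-\bm{b}\odot\bm{x}\|_2^2\big]
=\mathbb{E}_{\bm{s},\Tilde{\bm{\epsilon}},\bm{b}}\big[\|\bm{b}\odot\hat{\bm{s}}-\bm{b}\odot\bm{s}\|_2^2\big]+c,
\end{equation*}
with $c$ a $\theta$-free constant, hence the two objectives have the same set of minimizers.

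\textbf{Second equality.} Here I would use $b_d\in\{0,1\}$, so $b_d^2=b_d$ and $\|\bm{b}\odot\bm{u}\|_2^2=\sum_d b_d u_d^2$ for any $\bm{u}$. Since $\bm{b}$ is independent of $(\bm{s},\Tilde{\bm{\epsilon}})$ — and therefore of $\hat{\bm{s}}$, which under (A7) is a function of $(\bm{s},\Tilde{\bm{\epsilon}})$ — with $\mathbb{E}[b_d]=\rho$, integrating over $\bm{b}$ first gives $\mathbb{E}_{\bm{b}}[\|\bm{b}\odot\hat{\bm{s}}-\bm{b}\odot\bm{s}\|_2^2\mid\bm{s},\Tilde{\bm{\epsilon}}]=\rho\,\|\hat{\bm{s}}-\bm{s}\|_2^2$, so $\mathbb{E}_{\bm{s},\Tilde{\bm{\epsilon}},\bm{b}}[\,\cdot\,]=\rho\,\mathbb{E}_{\bm{s},\Tilde{\bm{\epsilon}}}[\|\hat{\bm{s}}-\bm{s}\|_2^2]$. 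As $\rho>0$ is a positive constant, this rescaling leaves the argmin unchanged.

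\textbf{Main obstacle.} The only real care is the conditional-independence bookkeeping in the first equality: one must argue that, conditionally on $(\bm{s},\bm{b})$, the reconstruction residual $\bm{b}\odot(\hat{\bm{s}}-\bm{s})$ is independent of $\bm{\epsilon}$, so the cross term splits and vanishes through $\mathbb{E}[\bm{\epsilon}\mid\bm{s},\bm{b}]=\bm{0}$, and that the discarded term $\mathbb{E}[\|\bm{b}\odot\bm{\epsilon}\|_2^2]$ is genuinely independent of $\theta$. Everything else (integrability, the identity $b_d^2=b_d$, pulling out $\rho$) is routine. I would also note the minor structural point that the claim is equality of \emph{argmin sets} under an affine reparametrization $a\mapsto \rho a+c$ of the objective with $\rho>0$, so no existence or uniqueness of minimizers needs to be invoked.
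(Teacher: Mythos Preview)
Your proposal is correct and follows essentially the same approach as the paper's proof: expand the squared norm, kill the cross term via the conditional mean-zero property of $\bm{\epsilon}$ together with conditional independence of $\bm{\epsilon}$ and $\tilde{\bm{\epsilon}}$ given $(\bm{s},\bm{b})$, and then reduce the masked-to-clean loss to $\rho$ times the unmasked loss using $b_d^2=b_d$ and independence of $\bm{b}$ from $(\bm{s},\tilde{\bm{\epsilon}})$. Your write-up is in fact slightly more careful than the paper's in spelling out the conditional-independence bookkeeping and the integrability caveats, but the argument is the same.
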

\begin{proof}
To prove Eq.\eqref{eq:n2n equal n2c under dim-reduction with l2}, following the way of the rearrangement of the N2N objective presented in Section~3.1 of~\citet{zhussip2019extending}, we have
\begin{align*}
     \mathbb{E}_{\bm{s},\bm{\epsilon},\Tilde{\bm{\epsilon}},\bm{b}}\left[ \left\| \bm{b}\odot\hat{\bm{s}} - \bm{b}\odot\bm{x} \right\|_2^2 \right]
    &= \mathbb{E}_{\bm{s}, \bm{b}}\left[ \mathbb{E}_{\bm{\epsilon},\Tilde{\bm{\epsilon}}}\left[ \|\bm{b}\odot\hat{\bm{s}} - \bm{b}\odot\bm{s}  - \bm{b}\odot\bm{\epsilon}\|_2^2 \;\vert\;\bm{s}, \bm{b}\right] \right] \\
    &= \mathbb{E}_{\bm{s},\bm{b}}[ \mathbb{E}_{\bm{\epsilon},\Tilde{\bm{\epsilon}}}[ \|\bm{b}\odot\hat{\bm{s}} - \bm{b}\odot\bm{s}\|_2^2 + \|\bm{b}\odot\bm{\epsilon}\|_2^2 \\
    &\;\;\;\;\;\;\;\;\;\;\;\;\;\;\;\;\;\;\;\;\;\;\;\;\;\;\;\;\;\;\;\;- 2(\bm{b}\odot\bm{\epsilon})^\top (\bm{b}\odot\hat{\bm{s}} - \bm{b}\odot\bm{s}) \vert\bm{s},\bm{b}] ] \\
    &= \mathbb{E}_{\bm{s},\Tilde{\bm{\epsilon}},\bm{b}}\left[ \|\bm{b}\odot\hat{\bm{s}} - \bm{b}\odot\bm{s}\|_2^2\right] \\
    &\;\;\;\;\;\;\;\;\;\;+\underbrace{\mathbb{E}_{\bm{\epsilon},\bm{b}}[\|\bm{b}\odot\bm{\epsilon}\|_2^2]}_{\leq \mathbb{E}[\|\bm{\epsilon}\|_2^2]<+\infty}\;\;(\mathbb{E}_{\bm{\epsilon}, \tilde{\bm{\epsilon}}}[(\bm{b}\odot\bm{\epsilon})^\top (\bm{b}\odot\hat{\bm{s}} - \bm{b}\odot\bm{s}) \vert\bm{s},\bm{b}]=0)\\
    &= \mathbb{E}_{\bm{s},\Tilde{\bm{\epsilon}},\bm{b}}\left[ \|\bm{b}\odot\hat{\bm{s}} - \bm{b}\odot\bm{s}\|_2^2\right] + \textrm{Constant}.
\end{align*}
This implies Eq.\eqref{eq:n2n equal n2c under dim-reduction with l2}.
Additionally,
to prove Eq.\eqref{eq:n2n equal n2c with l2},
we can have 
\begin{align*}
    \mathbb{E}_{\bm{s},\Tilde{\bm{\epsilon}},\bm{b}}\left[ \|\bm{b}\odot\hat{\bm{s}} - \bm{b}\odot\bm{s}\|_2^2\right]
    &= \mathbb{E}_{\bm{s},\Tilde{\bm{\epsilon}},\bm{b}}\left[ \sum_{d=1}^{\textrm{dim}(\bm{s})} b_d (\hat{s}_d - s_d)^2 \right] \\
    &= \mathbb{E}_{\bm{s},\Tilde{\bm{\epsilon}}}\left[\mathbb{E}_{\bm{b}}\left[ \sum_{d=1}^{\textrm{dim}(\bm{s})} b_d (\hat{s}_d - s_d)^2 \Biggm\vert \bm{s},\Tilde{\bm{\epsilon}} \right] \right] \\
    &= \rho \mathbb{E}_{\bm{s},\Tilde{\bm{\epsilon}}}\left[ \left\|\hat{\bm{s}} - \bm{s} \right\|_2^2\right],
\end{align*}
where $\rho \in (0,1]$ is the mean of Bernoulli distribution, and $\hat{s}_d$ means the $d$-th element of $\hat{\bm{s}}$. 
This implies Eq.\eqref{eq:n2n equal n2c with l2}.
\end{proof}

\subsection{Further Details with Representation Learning}
\label{append:Further Details with Representation Learning}

\paragraph{AE based RL}
Several works~\citep{vincent2010stacked, DBLP:journals/corr/KingmaW13, he2021masked, liu2021tera} have proposed AE-based RL methods, and many of them are applied to the vision domain. \citet{vincent2010stacked} proposed Stacked Denoising AutoEncoder (SDAE). In SDAE, a stacked AE is trained by minimizing an MSE-based loss, and its input is corrupted by an additive Gaussian noise.
The authors empirically observed that representations obtained by the trained encoder were efficient for downstream tasks, possibly due to the denoising property. \citet{ DBLP:journals/corr/KingmaW13} proposed Variational AE (VAE). In VAE, an AE is trained by maximizing a lower bound of the log-likelihood over a training dataset. Unlike a plain AE, VAE has sampling ability in the latent space. \citet{he2021masked} proposed Masked AE (MAE), where an AE is defined via Vision-Transformer (ViT)~\citep{DBLP:conf/iclr/DosovitskiyB0WZ21}. In MAE, at first, construct the masked image by masking most of the mini-patches in an image. Then, a set of the unmasked mini-patches is being inputted to the encoder, which returns the representation. Thereafter, the representation with a set of masked mini-patches is inputted to the decoder, which returns the predicted image. The loss is defined via the MSE using the original image and the predicted image. \citet{liu2021tera} proposed TERA in the speech domain, utilizing the alteration technique to learn latent representations that are useful in downstream tasks.

\paragraph{Existing Theory of Contrastive Learning} Besides the empirical success, theoretical foundations, which explain the efficiency of the methods in contrastive learning, are gradually gathering attention~\citep{haochen2021provable,saunshi2019theoretical,wang2020understanding,tsai2020self,tian2021understanding}.

\section{PROOFS}
\label{append: proofs}

\subsection{Proof of Theorem~\ref{thm:theorem1}}
\label{append:proof of theorem1}
We prepare the following lemma. 
\begin{lem}
\label{lem:t-th element expectation}
For a fixed vector $\bm{s}\in\mathbb{R}^D$, 
let us define the random vector $\bm{x}$ by 
$\bm{x} = \bm{s} + \bm{\epsilon}$, 
where $\bm{\epsilon}$ is the random vector satisfying (A2). Then, we have 
\begin{equation}
    \label{eq:t-th element expectation}
     \mathbb{E}_{\bm{\epsilon}}\bigg[\frac{\bm{x}}{\|\bm{x}\|_2}\bigg]
     %\int_{\mathbb{R}^D} \frac{\bm x}{\|\bm x\|_2} \,\phi_{D,\sigma}({\bm x} - {\bm s}) \,\mathrm{d}\bm x 
     = k_{D,\sigma}(\|\bm{s}\|_2)\frac{{\bm s}}{\|\bm{s}\|_2},
\end{equation}
where $k_{D,\sigma}(t),\,t\geq0$ is the weight function, 
\begin{equation*}
 k_{D,\sigma}(t)= 
 \mathbb{E}_{\bm\epsilon}\bigg[
 \frac{\epsilon_1+t}{\|\bm{\epsilon}+t\bm{e}_1\|_2}
\bigg]
\end{equation*}
 for ${\bm e}_1=(1,0,\ldots,0)^\top\in\mathbb{R}^D$. 
\end{lem}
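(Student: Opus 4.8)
The plan is to prove Lemma~\ref{lem:t-th element expectation} by exploiting the rotational symmetry of the isotropic noise. The left-hand side $\mathbb{E}_{\bm\epsilon}[\bm{x}/\|\bm{x}\|_2] = \mathbb{E}_{\bm\epsilon}[(\bm{s}+\bm\epsilon)/\|\bm{s}+\bm\epsilon\|_2]$ is a vector-valued function of $\bm s$; the first task is to argue that, by symmetry, this vector must be parallel to $\bm s$ (for $\bm s \neq \bm 0$; the case $\bm s = \bm 0$ gives $\bm 0$ on both sides since the integrand is an odd function of $\bm\epsilon$). Concretely, I would pick any orthogonal matrix $R$ fixing $\bm s$ (i.e., $R\bm s = \bm s$); since $\bm\epsilon$ is isotropic, $R\bm\epsilon$ has the same distribution as $\bm\epsilon$, so a change of variables shows $\mathbb{E}_{\bm\epsilon}[\bm{x}/\|\bm{x}\|_2]$ is invariant under $R$. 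A vector invariant under all rotations fixing $\bm s$ must lie in the span of $\bm s$. Hence $\mathbb{E}_{\bm\epsilon}[\bm{x}/\|\bm{x}\|_2] = \lambda(\bm s)\,\bm s/\|\bm s\|_2$ for some scalar $\lambda(\bm s)$.

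Next I would identify the scalar $\lambda(\bm s)$ by taking the inner product of both sides with $\bm s/\|\bm s\|_2$:
\begin{equation*}
 \lambda(\bm s) = \Big\langle \mathbb{E}_{\bm\epsilon}\Big[\frac{\bm s+\bm\epsilon}{\|\bm s+\bm\epsilon\|_2}\Big],\ \frac{\bm s}{\|\bm s\|_2}\Big\rangle
 = \mathbb{E}_{\bm\epsilon}\Big[\frac{\langle \bm s,\bm s+\bm\epsilon\rangle}{\|\bm s\|_2\,\|\bm s+\bm\epsilon\|_2}\Big].
\end{equation*}
The remaining task is to show this equals $k_{D,\sigma}(\|\bm s\|_2)$. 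Write $t = \|\bm s\|_2$ and let $\bm u = \bm s/t$ be the corresponding unit vector. Choose an orthogonal matrix $Q$ with $Q\bm u = \bm e_1$; substituting $\bm\epsilon = Q^\top \bm\epsilon'$ and using isotropy of $\bm\epsilon$ once more, we get $\langle \bm s, \bm s + \bm\epsilon\rangle/\|\bm s + \bm\epsilon\|_2 = (t + \epsilon'_1)t/\|t\bm e_1 + \bm\epsilon'\|_2$ in distribution, so $\lambda(\bm s) = \mathbb{E}_{\bm\epsilon}[(\epsilon_1 + t)/\|\bm\epsilon + t\bm e_1\|_2] = k_{D,\sigma}(t)$, which is exactly Eq.~\eqref{eq:k_x}. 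Combining with the symmetry argument above yields Eq.~\eqref{eq:t-th element expectation}.

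I do not expect a serious obstacle here; the lemma is essentially a symmetry computation. The one point requiring a little care is the integrability needed to justify interchanging expectation with the linear-algebra manipulations and to ensure $\lambda(\bm s)$ is well-defined: since $\|\bm x/\|\bm x\|_2\|_2 = 1$ whenever $\bm x \neq \bm 0$, and the isotropic density puts no mass at any single point (in particular $\Pr(\bm x = \bm 0) = 0$ when $D \geq 1$ and the density is well-behaved), the integrand is bounded and the expectation exists; this also legitimizes the change-of-variables steps. A secondary point is making explicit that ``a vector fixed by every rotation stabilizing $\bm s$ lies in $\mathrm{span}(\bm s)$'' — this is standard (the orthogonal complement of $\bm s$ is an irreducible representation of that stabilizer subgroup, so it contains no nonzero fixed vector), and I would state it briefly rather than belabor it.
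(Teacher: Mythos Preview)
Your proposal is correct and uses the same underlying ingredient as the paper---rotational invariance of the isotropic noise---but the argument is organized differently. The paper picks a single rotation $R$ with $R^\top\bm{s}=\|\bm{s}\|_2\bm{e}_1$, changes variables $\bm{w}=R^\top\bm{x}$, and then computes the $d$-th component directly, killing all terms $R_{d,i}w_i/\|\bm{w}\|_2$ for $i\geq 2$ by the odd-function argument in $w_i$; the surviving $i=1$ term immediately gives $k_{D,\sigma}(\|\bm{s}\|_2)\,s_d/\|\bm{s}\|_2$. Your route instead first establishes parallelism abstractly (via the stabilizer of $\bm{s}$ in $O(D)$) and only then identifies the scalar by projection and a second rotation. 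Both are equally valid; the paper's version is a one-pass coordinate computation, while yours separates the ``direction'' and ``magnitude'' questions and is arguably cleaner conceptually, at the cost of invoking a small representation-theoretic fact that the paper avoids by working componentwise.
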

\begin{proof}
%\textcolor{blue}{
The proof of this lemma is inspired by Proposition~1 of the prior work~\citep{sanada22interspeech}.
However, this lemma extends the previous proposition since we generalize the noise assumption from \citep{sanada22interspeech}.
For the sake of completeness, we give the detailed proof of this lemma.
%}

 The probability density of $\bm{\epsilon}$ is denoted by 
$\phi_{D,\sigma}(\bm{\epsilon})$ that depends only on $\|\bm{\epsilon}\|$. 
Let $R = (\bm r_1,\dotsc, \bm r_D)^{\top} \in \mathbb{R}^{D\times D}$ be a rotation matrix such that 
$R^{\top}\bm s = \|\bm s\|_2 \bm e_1$.  
Using the change of variables, $\bm w = R^{\top} \bm x$, the $d$-th element of Eq.\eqref{eq:t-th element expectation}
is expressed as follows: 
\begin{align}
\mathbb{E}_{\bm{\epsilon}}\left[\frac{x_d}{\|\bm{x}\|_2}\right] 
    &= \int_{\mathbb{R}^D} \frac{\bm r_d^{\top} \bm w}{\|\bm w\|_2} \,\phi_{D,\sigma}(\bm{w} - R^\top \bm{s}) \,\mathrm{d}\bm w \nonumber\\
    &= \int_{\mathbb{R}^D} \frac{\bm r_d^{\top} \bm w}{\|\bm w\|_2} \,\phi_{D,\sigma}(\bm{w} - \|\bm s\|_2 \bm e_1) \,\mathrm{d}\bm{w} \nonumber\\
    &= \int_{\mathbb{R}^D} \frac{R_{d,1}w_1}{\|\bm w\|_2} \,\phi_{D,\sigma}(\bm{w}-\|\bm{s}\|_2\bm{e}_1) \mathrm{d}\bm{w} \label{eq:odd-func}\\
    &= \frac{s_d}{\|\bm s\|_2}\int_{\mathbb{R}^D} \frac{w_1}{\|\bm w\|_2} \,\phi_{D,\sigma}
      (\bm{w}-\|\bm{s}\|_2\bm{e}_1) \mathrm{d}\bm w. \label{eq:Rx_integ}
\end{align}
In the above, the first equality is derived by the isotropy of the Gaussian. Eq.\eqref{eq:odd-func} is derived from the fact that 
\[
\frac{R_{d,i}w_i}{\|\bm{w}\|_2}\phi_{D,\sigma}\left(\bm{w}-\|\bm{s}\|_2\bm{e}_1\right)
\]
is the odd function in $w_i$ for $i=2,\ldots,D$ and Eq.\eqref{eq:Rx_integ} is obtained from $\bm{s}/\|\bm{s}\|_2 = R\bm{e}_1$. 
Therefore, we see that Eq.\eqref{eq:t-th element expectation} holds. 
\end{proof}

\begin{proof}[Proof of Theorem~\ref{thm:theorem1}]
%\textcolor{blue}{
Part of the proof of this theorem is also inspired by Proposition~1 of the prior work~\cite{sanada22interspeech}.
Since we deal with the random subset $\tau$ as opposed to \citep{sanada22interspeech}, we present the proof of this theorem for the sake of completeness.
%}

Let $d_i, i=1,\ldots,\|\bm{b}\|_1$ denote an index satisfying $b_{d_i} = 1$, 
where $d_1 <\cdots<d_{\|\bm{b}\|_1}$.
%, $i=1,2,...,\|\bm{b}\|_1$
Let $\phi_{\|\bm{b}\|_1,\sigma}$ be the 
$\|\bm{b}\|_1$-dimensional marginal density 
of $\phi_{D,\sigma}$. % for $x_{d_1},\ldots,x_{d_{\|\bm{b}\|_1}}$. 
Then, the $d_i$-th element of 
$\mathbb{E}_{\bm{\epsilon}}\left[\frac{\bm{b}\odot\bm{x}}{\|\bm{b}\odot\bm{x}\|_2}\Bigm\vert \bm{s}, \bm{b}\right]$ 
is given by 
\begin{align*}
    \mathbb{E}_{\bm{\epsilon}}\left[\frac{b_{d_i} x_{d_i}} {\|\bm{b}\odot\bm{x}\|_2}\Bigm\vert \bm{s},
     \bm{b}\right]
     =
     \int_{\mathbb{R}^{\|\bm{b}\|_1}} \frac{x_{d_i}}{\|\bm{x}^\prime\|_2} \,
     \phi_{\|\bm{b}\|_1,\sigma}(\bm{x}^\prime - \bm{s}^\prime) \,\mathrm{d}\bm{x}^\prime,
\end{align*}
where $\bm{x}^\prime = (x_{d_1},x_{d_2},...,x_{d_{\|\bm{b}\|_1}})^\top$ and $\bm{s}^\prime = (s_{d_1},s_{d_2},...,s_{d_{\|\bm{b}\|_1}})^\top$. Since $\phi_{\|\bm{b}\|_1,\sigma}$ is again isotropic, Lemma~\ref{lem:t-th element expectation} leads to 
\begin{equation*}
    \mathbb{E}_{\bm{\epsilon}}\left[\frac{\bm{b}\odot\bm{x}}{\|\bm{b}\odot\bm{x}\|_2}\biggm\vert \bm{s}, \bm{b}\right] 
     = k_{\|\bm{b}\|_1,\sigma}\left(\|\bm{b}\odot\bm{s}\|_2\right) \frac{\bm{b}\odot\bm{s}}{\|\bm{b}\odot\bm{s}\|_2}. 
\end{equation*}
Therefore, 
\begin{equation*}
\begin{split}
    \mathbb{E}_{\bm \epsilon, \Tilde{\bm{\epsilon}}} \left[ \ell_{\rm CS}\left(\bm{b}\odot\bm x, \bm{b}\odot\hat{\bm s}\right) \vert \bm{s}, \bm{b} \right] 
    &= \mathbb{E}_{\Tilde{\bm{\epsilon}}} \left[ -\left\langle \mathbb{E}_{\bm \epsilon} \frac{\bm{b}\odot\bm x}{\|\bm{b}\odot\bm x\|_2}, \frac{\bm{b}\odot\hat{\bm s}}{\|\bm{b}\odot\hat{\bm s}\|_2} \right\rangle \biggm\vert \bm{s}, \bm{b} \right]\\
    &= \mathbb{E}_{\Tilde{\bm{\epsilon}}} \left[ -\left\langle 
     k_{\|\bm{b}\|_1,\sigma}\left(\|\bm{b}\odot\bm{s}\|_2\right)
     \frac{\bm{b}\odot\bm s}{\|\bm{b}\odot\bm s\|_2}, \frac{\bm{b}\odot\hat{\bm s}}{\|\bm{b}\odot\hat{\bm s}\|_2} \right\rangle \biggm\vert \bm{s}, \bm{b} \right]\\
        &= k_{\|\bm{b}\|_1,\sigma}\left(\|\bm{b}\odot\bm{s}\|_2\right)
     \mathbb{E}_{\Tilde{\bm{\epsilon}}} \left[ \ell_{\rm CS}(\bm{b}\odot\bm s, \bm{b}\odot\hat{\bm s}) \vert \bm{s}, \bm{b} \right]. 
\end{split}
\end{equation*}
\end{proof}

\subsection{Proof of Proposition~\ref{prop:tightness of Eq 9} }
\label{append:proof for upper-bound of our loss}

\begin{lem}
\label{lemma:upper-bound}
    Assume the condition (A4) in Proposition~\ref{prop:tightness of Eq 9} holds. 
    Suppose that the probability $\rho$ in Definition~\ref{dfn:blind-spot masking} satisfies $\rho\in(0,1)$. 
    Then, the following inequality holds for each parameter $\theta$ :
    \begin{equation}
        \label{eq:sup upp bnd prop1}
        \mathbb{E}_{\boldsymbol{s}, \tilde{\boldsymbol{\epsilon}}}\left[\ell_{\mathrm{CS}}\left(\hat{\boldsymbol{s}}, \boldsymbol{s}\right)\right] \lesssim \mathbb{E}_{\boldsymbol{s},\boldsymbol{b} }\left[\mathbb{E}_{\tilde{\boldsymbol{\epsilon}}}\left[\ell_{\mathrm{CS}}\left( \boldsymbol{b}\odot \hat{\boldsymbol{s}}, \boldsymbol{b}\odot\boldsymbol{s}\right) |\boldsymbol{s},\boldsymbol{b}\right]\right],
    \end{equation}
    where $a \lesssim b$ means there exists some constant $M>0$ such that $a \leq M b$.
\end{lem}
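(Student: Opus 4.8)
The plan is to reduce the claimed integral inequality Eq.\eqref{eq:sup upp bnd prop1} to a single \emph{pointwise} comparison and then to exploit the compactness supplied by (A4). Since $\ell_{\mathrm{CS}}$ takes values in $[-1,1]$, every expectation below is finite and Tonelli applies, so the right-hand side equals $\mathbb{E}_{\bm{s},\tilde{\bm{\epsilon}},\bm{b}}[\ell_{\mathrm{CS}}(\bm{b}\odot\hat{\bm{s}},\bm{b}\odot\bm{s})]$. Hence it suffices to produce a single constant $M>0$, independent of $\bm{s}$ and $\tilde{\bm{\epsilon}}$, such that for every admissible pair $\hat{\bm{s}}=h_\theta(\tilde{\bm{x}})$ and $\bm{s}$,
\[
\ell_{\mathrm{CS}}(\hat{\bm{s}},\bm{s})\ \le\ M\,\mathbb{E}_{\bm{b}}\!\left[\ell_{\mathrm{CS}}(\bm{b}\odot\hat{\bm{s}},\bm{b}\odot\bm{s})\right];
\]
integrating this over $(\bm{s},\tilde{\bm{\epsilon}})$ then yields the lemma. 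Writing $\cos(\bm u,\bm v)=-\ell_{\mathrm{CS}}(\bm u,\bm v)$, the target is equivalently a uniform lower bound $\mathbb{E}_{\bm{b}}[\cos(\bm{b}\odot\hat{\bm{s}},\bm{b}\odot\bm{s})]\ge \tfrac{1}{M}\cos(\hat{\bm{s}},\bm{s})$.

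Towards the pointwise bound I would first record the elementary facts that make the two sides comparable. The numerator behaves linearly under masking, $\mathbb{E}_{\bm{b}}[\langle \bm{b}\odot\hat{\bm{s}},\bm{b}\odot\bm{s}\rangle]=\rho\,\langle\hat{\bm{s}},\bm{s}\rangle$, because $b_d\in\{0,1\}$; the masked norms only shrink, $\|\bm{b}\odot\bm{v}\|_2\le\|\bm{v}\|_2$; and by (A4) the sets $h_\theta(\mathcal X)$ and $\mathcal Y$ are compact subsets of $\mathbb{R}^D\setminus\{\bm 0\}$, so there are constants $0<c_0\le c_1<\infty$ with $c_0\le\|\hat{\bm{s}}\|_2,\|\bm{s}\|_2\le c_1$ uniformly. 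Moreover, since $\rho\in(0,1)$, each coordinate is retained by $\bm{b}$ with positive probability, and the fully-retained event $\{\bm{b}=\bm{1}\}$ (on which the masked loss equals $\ell_{\mathrm{CS}}(\hat{\bm{s}},\bm{s})$ exactly) has probability $\rho^D>0$. The aim is to combine the numerator identity with the uniform norm control so as to dominate the averaged masked cosine by a fixed positive multiple of $\cos(\hat{\bm{s}},\bm{s})$; a convenient device for this is to pass to the shifted nonnegative form $1+\ell_{\mathrm{CS}}(\bm u,\bm v)=\tfrac12\bigl\|\tfrac{\bm u}{\|\bm u\|_2}-\tfrac{\bm v}{\|\bm v\|_2}\bigr\|_2^2$, i.e. to compare normalized-direction errors, for which monotonicity under coordinate deletion is more transparent, and then to translate the estimate back.

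The hard part will be precisely this uniform pointwise comparison, because the masked cosine is a ratio whose denominator $\|\bm{b}\odot\hat{\bm{s}}\|_2\|\bm{b}\odot\bm{s}\|_2$ sits inside the expectation and is correlated with the numerator, so the clean identity $\mathbb{E}_{\bm{b}}[\langle\cdot\rangle]=\rho\langle\cdot\rangle$ does not transfer directly to $\mathbb{E}_{\bm{b}}[\cos(\cdot)]$. The comparison is most delicate in the regime where $\hat{\bm{s}}$ and $\bm{s}$ are nearly orthogonal: there $\cos(\hat{\bm{s}},\bm{s})$ is close to $0$ while individual masked directions (for instance single-coordinate masks) can be strongly aligned, so the averaged masked cosine may dominate the full one and threatens to force $M\to\infty$. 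Pinning down a finite $M$ in this regime is exactly where compactness (A4) and the constraint $\rho\in(0,1)$ must be used quantitatively; I would therefore spend the bulk of the argument establishing the uniform control there, and treat the well-aligned regime (where $\cos(\hat{\bm{s}},\bm{s})$ is bounded away from $0$) as the easy case handled directly by the norm bounds $c_0,c_1$ together with the $\{\bm{b}=\bm 1\}$ contribution.
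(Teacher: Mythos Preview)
Your reduction to a pointwise inequality is the gap. You claim it suffices to find $M>0$ with
\[
\ell_{\mathrm{CS}}(\hat{\bm s},\bm s)\ \le\ M\,\mathbb{E}_{\bm b}\bigl[\ell_{\mathrm{CS}}(\bm b\odot\hat{\bm s},\bm b\odot\bm s)\bigr]
\quad\text{for every admissible }(\hat{\bm s},\bm s),
\]
but this fails outright, not just ``in the delicate regime''. Take $D=3$, $\hat{\bm s}=(1,1,1)$, $\bm s=(1,1,-2)$ (both nonzero, so compatible with (A4)). Then $\langle\hat{\bm s},\bm s\rangle=0$, so the left side is $0$. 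For the right side one checks directly that the single--coordinate masks give cosines $1,1,-1$, the two--coordinate masks give $1,-1/\sqrt{10},-1/\sqrt{10}$, and the full mask gives $0$; averaging with any $\rho\in(0,1)$ yields $\mathbb{E}_{\bm b}[\cos]>0$, hence $\mathbb{E}_{\bm b}[\ell_{\mathrm{CS}}]<0$. No $M>0$ satisfies $0\le M\cdot(\text{negative})$. Thus the pointwise inequality you propose to ``spend the bulk of the argument'' on is simply false, and neither compactness nor the event $\{\bm b=\bm 1\}$ can rescue it. (Incidentally, your cosine reformulation has the wrong direction: $\ell_{\mathrm{CS}}(\hat{\bm s},\bm s)\le M\,\mathbb{E}_{\bm b}[\ell_{\mathrm{CS}}]$ is equivalent to $\mathbb{E}_{\bm b}[\cos]\le\tfrac1M\cos(\hat{\bm s},\bm s)$, not $\ge$.)

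The paper's argument avoids pointwise control entirely and works only at the level of the outer expectation $\mathbb{E}_{\bm s,\tilde{\bm\epsilon}}$. The key device is the pairing $\bm b\leftrightarrow \bm 1-\bm b$: since $\langle\hat{\bm s}\odot\bm b,\bm s\odot\bm b\rangle+\langle\hat{\bm s}\odot(\bm 1-\bm b),\bm s\odot(\bm 1-\bm b)\rangle=\langle\hat{\bm s},\bm s\rangle$, one rewrites $\mathbb{E}_{\bm b}[\langle\hat{\bm s}\odot\bm b,\bm s\odot\bm b\rangle]$ as a symmetrized sum over $\{\bm b,\bm 1-\bm b\}$ pairs with weights in $\{\rho^{\|\bm b\|_1}(1-\rho)^{\|\bm 1-\bm b\|_1},\rho^{\|\bm 1-\bm b\|_1}(1-\rho)^{\|\bm b\|_1}\}$, and then chooses, for each pair, whichever of the two weights makes the inequality go the right way according to the sign of the \emph{integrated} inner product. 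This is then sandwiched between cosine losses using the uniform norm bounds from (A4), again with a sign-dependent choice of constant. The essential point is that the constants are selected after integrating over $(\bm s,\tilde{\bm\epsilon})$, which is exactly what your pointwise strategy forbids.
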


The inequality we wish to show is essentially due to the following inequalities:
\begin{align*}
    \mathbb{E}_{\boldsymbol{s}, \tilde{\boldsymbol{\epsilon}}}\left[\ell_{\mathrm{CS}}\left(\hat{\boldsymbol{s}}, \boldsymbol{s}\right)\right] 
    &\lesssim -\mathbb{E}_{\boldsymbol{s}, \tilde{\boldsymbol{\epsilon}}}\left[\left\langle \hat{\boldsymbol{s}}, \boldsymbol{s}\right\rangle\right] \\
    &\lesssim -\mathbb{E}_{\boldsymbol{s}, \tilde{\boldsymbol{\epsilon}}}\left[\mathbb{E}_{\boldsymbol{b}}\left[\left\langle \hat{\boldsymbol{s}} \odot \boldsymbol{b}, \boldsymbol{s} \odot \boldsymbol{b}\right\rangle\right]\right] \\
    &\lesssim \mathbb{E}_{\boldsymbol{s}, \tilde{\boldsymbol{\epsilon}}}\left[\mathbb{E}_{\boldsymbol{b}}\left[\ell_{\mathrm{CS}}\left(\hat{\boldsymbol{s}} \odot \boldsymbol{b}, \boldsymbol{s} \odot \boldsymbol{b}\right)\right]\right].
\end{align*}
Here, in the first inequality, observe that from the assumptions for every $\boldsymbol{s}, \hat{\boldsymbol{s}}$,
$$
\ell_{\operatorname{CS}}\left(\hat{\boldsymbol{s}}, \boldsymbol{s}\right) \leq \begin{cases}
-\frac{\left\langle \hat{\boldsymbol{s}}, \boldsymbol{s}\right\rangle}{\min _{\boldsymbol{s}, \hat{\boldsymbol{s}}}\left\|\hat{\boldsymbol{s}}\right\|_{2}\|\boldsymbol{s}\|_{2}}, & \text { if }\left\langle \hat{\boldsymbol{s}}, \boldsymbol{s}\right\rangle<0, \\
-\frac{\left\langle \hat{\boldsymbol{s}}, \boldsymbol{s}\right\rangle}{\max _{\boldsymbol{s}, \hat{\boldsymbol{s}}}\left\|\hat{\boldsymbol{s}}\right\|_{2}\|\boldsymbol{s}\|_{2}}, & \text { if }\left\langle \hat{\boldsymbol{s}}, \boldsymbol{s}\right\rangle \geq 0. \end{cases}
$$
Then the cosine similarity $\ell_{\mathrm{CS}}\left(\hat{\boldsymbol{s}}, \boldsymbol{s}\right)$ is upper bounded by $-\left\langle \hat{\boldsymbol{s}}, \boldsymbol{s}\right\rangle$ up to a multiplication constant which does not rely on $\boldsymbol{s}, \hat{\boldsymbol{s}}$. Hence, from the monotonicity of the integral we have the first inequality. The third inequality is upper bounded in a similar way. In the second inequality we use the following inequality:
\begin{equation*}
    \begin{split}
        &-\mathbb{E}_{\boldsymbol{s}, \tilde{\boldsymbol{\epsilon}}}\left[\mathbb{E}_{\boldsymbol{b}}\left[\left\langle \hat{\boldsymbol{s}} \odot \boldsymbol{b}, \boldsymbol{s} \odot \boldsymbol{b}\right\rangle\right]\right]\\
        &=-\mathbb{E}_{\boldsymbol{s}, \tilde{\boldsymbol{\epsilon}}}\left[\sum_{\boldsymbol{b}} \rho^{\|\boldsymbol{b}\|_{1}}(1-\rho)^{\|\boldsymbol{1}-\boldsymbol{b}\|_{1}}\left\langle \hat{\boldsymbol{s}} \odot \boldsymbol{b}, \boldsymbol{s} \odot \boldsymbol{b}\right\rangle\right]\\
        &=-\frac{1}{2} \mathbb{E}_{\boldsymbol{s}, \tilde{\boldsymbol{\epsilon}}}\Bigg[\sum_{\boldsymbol{b}}\bigg(\rho^{\|\boldsymbol{b}\|_{1}}(1-\rho)^{\|\boldsymbol{1}-\boldsymbol{b}\|_{1}}\left\langle \hat{\boldsymbol{s}} \odot \boldsymbol{b}, \boldsymbol{s} \odot \boldsymbol{b}\right\rangle\\
        &\;\;\;\;\;\;\;\;\;\;\;\;\;\;\;\;\;\;\;\;\;\;\;\;\;\;\;\;\;\;\;\;\;\;\;\;\;\;\;\;\;\;\;\;\;\;\;\;\;\;+\rho^{\|1-\boldsymbol{b}\|_{1}}(1-\rho)^{\|\boldsymbol{b}\|_{1}}\left\langle \hat{\boldsymbol{s}} \odot(\mathbf{1}-\boldsymbol{b}), \boldsymbol{s} \odot(\mathbf{1}-\boldsymbol{b})\right\rangle\bigg)\Bigg]\\
        &\geq -\frac{1}{2} \sum_{\boldsymbol{b}}\left(\beta_{\boldsymbol{b}, \theta} \mathbb{E}_{\boldsymbol{s}, \tilde{\boldsymbol{\epsilon}}}\left[\left\langle \hat{\boldsymbol{s}} \odot \boldsymbol{b}, \boldsymbol{s} \odot \boldsymbol{b}\right\rangle\right]+\beta_{\boldsymbol{b}, \theta} \mathbb{E}_{\boldsymbol{s}, \tilde{\boldsymbol{\epsilon}}}\left[\left\langle \hat{\boldsymbol{s}} \odot(\mathbf{1}-\boldsymbol{b}), \boldsymbol{s} \odot(\mathbf{1}-\boldsymbol{b})\right\rangle\right]\right)\\
        &=-\frac{1}{2} \sum_{\boldsymbol{b}} \beta_{\boldsymbol{b}, \theta} \mathbb{E}_{\boldsymbol{s}, \tilde{\boldsymbol{\epsilon}}}\left[\left\langle \hat{\boldsymbol{s}}, \boldsymbol{s}\right\rangle\right]\\
        &\geq \left\{\begin{array}{lr}
        -\left(\sup_{\theta} \frac{1}{2} \sum_{\boldsymbol{b}} \beta_{\boldsymbol{b}, \theta}\right) \mathbb{E}_{\boldsymbol{s}, \tilde{\boldsymbol{\epsilon}}}\left[\left\langle \hat{\boldsymbol{s}}, \boldsymbol{s}\right\rangle\right], & \text { if } \mathbb{E}_{\boldsymbol{s}, \tilde{\boldsymbol{\epsilon}}}\left[\left\langle \hat{\boldsymbol{s}}, \boldsymbol{s}\right\rangle\right]>0, \\
        -\left(\inf_{\theta} \frac{1}{2} \sum_{\boldsymbol{b}} \beta_{\boldsymbol{b}, \theta}\right) \mathbb{E}_{\boldsymbol{s}, \tilde{\boldsymbol{\epsilon}}}\left[\left\langle \hat{\boldsymbol{s}}, \boldsymbol{s}\right\rangle\right], & \text { if } \mathbb{E}_{\boldsymbol{s}, \tilde{\boldsymbol{\epsilon}}}\left[\left\langle \hat{\boldsymbol{s}}, \boldsymbol{s}\right\rangle\right]<0,
        \end{array}\right.
    \end{split}
\end{equation*}
where $\beta_{\boldsymbol{b}, \theta} \in\left\{\rho^{\|\boldsymbol{b}\|_{1}}(1-\rho)^{\|\boldsymbol{1}-\boldsymbol{b}\|_{1}}, \rho^{\|\boldsymbol{1}-\boldsymbol{b}\|_{1}}(1-\rho)^{\|\boldsymbol{b}\|_{1}}\right\}$ for every $b$ is determined depending on the signs of $\mathbb{E}_{\boldsymbol{s}, \tilde{\boldsymbol{\epsilon}}}[\left\langle \hat{\boldsymbol{s}} \odot \boldsymbol{b}, \boldsymbol{s} \odot \boldsymbol{b}\right\rangle]$ and $\mathbb{E}_{\boldsymbol{s}, \tilde{\boldsymbol{\epsilon}}}[\left\langle \hat{\boldsymbol{s}} \odot(\mathbf{1}-\boldsymbol{b}), \boldsymbol{s} \odot(\mathbf{1}-\boldsymbol{b})\right\rangle$].

In the same way as the proof of Lemma~\ref{lemma:upper-bound}, we can show the following claim: 
\begin{lem}
\label{lem:lower bound in appendix added}
    Assume (A4) in Proposition~\ref{prop:tightness of Eq 9}.
    If $\rho\in(0,1)$, then for each parameter $\theta$ we have
    \begin{align*}
        \mathbb{E}_{\boldsymbol{s},\boldsymbol{b} }\left[\mathbb{E}_{\tilde{\boldsymbol{\epsilon}}}\left[\ell_{\mathrm{CS}}\left( \boldsymbol{b}\odot \hat{\boldsymbol{s}}, \boldsymbol{b}\odot\boldsymbol{s}\right) |\boldsymbol{s},\boldsymbol{b}\right]\right]\lesssim \mathbb{E}_{\boldsymbol{s}, \tilde{\boldsymbol{\epsilon}}}\left[\ell_{\mathrm{CS}}\left(\hat{\boldsymbol{s}}, \boldsymbol{s}\right)\right].
    \end{align*}
\end{lem}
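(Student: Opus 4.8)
The final statement to prove is Lemma~\ref{lem:lower bound in appendix added}, which asserts the reverse inequality to Lemma~\ref{lemma:upper-bound}: namely that the masked cosine-similarity loss is bounded above (up to a constant) by the unmasked supervised loss. Together with Lemma~\ref{lemma:upper-bound} this yields the two-sided bound of Proposition~\ref{prop:tightness of Eq 9}.

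\textbf{Plan of proof.} The plan is to mirror the chain of inequalities used for Lemma~\ref{lemma:upper-bound}, but traversing each step in the opposite direction, exploiting the fact that under assumption (A4) all the relevant norms $\|\hat{\boldsymbol{s}}\|_2$, $\|\boldsymbol{s}\|_2$, $\|\boldsymbol{b}\odot\hat{\boldsymbol{s}}\|_2$, $\|\boldsymbol{b}\odot\boldsymbol{s}\|_2$ are bounded between two strictly positive constants (the lower bound being positive because $\mathcal{X},\mathcal{Y}$ are compact subsets of $\mathbb{R}^D\setminus\{\mathbf{0}\}$, $h_\theta$ is continuous, and — crucially — because $\rho\in(0,1)$ forces $\|\boldsymbol{b}\|_1\geq 1$ with positive probability, so the relevant Bernoulli patterns all carry positive weight). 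First I would replace $\ell_{\mathrm{CS}}(\boldsymbol{b}\odot\hat{\boldsymbol{s}},\boldsymbol{b}\odot\boldsymbol{s})$ by $-\langle\boldsymbol{b}\odot\hat{\boldsymbol{s}},\boldsymbol{b}\odot\boldsymbol{s}\rangle$ up to a positive multiplicative constant, splitting on the sign of the inner product exactly as in the displayed case-analysis for Lemma~\ref{lemma:upper-bound} (when the inner product is negative, divide by $\max$ of the norm product; when nonnegative, by $\min$). Next I would move the expectation over $\boldsymbol{b}$ outward and use the symmetrization trick $\boldsymbol{b}\leftrightarrow\mathbf{1}-\boldsymbol{b}$ to rewrite $\mathbb{E}_{\boldsymbol{b}}[\langle\boldsymbol{b}\odot\hat{\boldsymbol{s}},\boldsymbol{b}\odot\boldsymbol{s}\rangle]$ in terms of $\langle\hat{\boldsymbol{s}},\boldsymbol{s}\rangle$ weighted by the Bernoulli probabilities $\rho^{\|\boldsymbol{b}\|_1}(1-\rho)^{\|\mathbf{1}-\boldsymbol{b}\|_1}$; bounding these weights above and below by a sign-dependent choice $\beta_{\boldsymbol{b},\theta}$ reduces $\mathbb{E}_{\boldsymbol{b}}[\langle\boldsymbol{b}\odot\hat{\boldsymbol{s}},\boldsymbol{b}\odot\boldsymbol{s}\rangle]$ to $\langle\hat{\boldsymbol{s}},\boldsymbol{s}\rangle$ times a constant that is uniformly bounded over $\theta$ (since $\sum_{\boldsymbol{b}}\beta_{\boldsymbol{b},\theta}\leq 1$ and $\geq$ some positive constant). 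Finally I would pass back from $-\langle\hat{\boldsymbol{s}},\boldsymbol{s}\rangle$ to $\ell_{\mathrm{CS}}(\hat{\boldsymbol{s}},\boldsymbol{s})$ using the same sign-split bound, and take expectations over $\boldsymbol{s},\tilde{\boldsymbol{\epsilon}}$ by monotonicity of the integral, collecting all constants into a single $M>0$.

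\textbf{Main obstacle.} The delicate point is handling the sign of the inner products consistently. Each equivalence ``$\ell_{\mathrm{CS}}(\boldsymbol{u},\boldsymbol{v})\asymp -\langle\boldsymbol{u},\boldsymbol{v}\rangle$'' holds with a \emph{different} constant depending on whether $\langle\boldsymbol{u},\boldsymbol{v}\rangle$ is positive or negative, and when we take expectations over $\boldsymbol{b}$ and over $(\boldsymbol{s},\tilde{\boldsymbol{\epsilon}})$ the signs of $\mathbb{E}_{\boldsymbol{s},\tilde{\boldsymbol{\epsilon}}}[\langle\hat{\boldsymbol{s}}\odot\boldsymbol{b},\boldsymbol{s}\odot\boldsymbol{b}\rangle]$ for different $\boldsymbol{b}$ need not agree, nor need they agree with the sign of $\mathbb{E}_{\boldsymbol{s},\tilde{\boldsymbol{\epsilon}}}[\langle\hat{\boldsymbol{s}},\boldsymbol{s}\rangle]$. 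The resolution is precisely the choice of $\beta_{\boldsymbol{b},\theta}$ as in Lemma~\ref{lemma:upper-bound}: for each $\boldsymbol{b}$ one picks whichever of the two Bernoulli weights makes the inequality go the right way given the observed sign, and one then observes that $\frac12\sum_{\boldsymbol{b}}\beta_{\boldsymbol{b},\theta}$ lies in a fixed positive compact interval uniformly in $\theta$, so that $\inf_\theta$ and $\sup_\theta$ of this quantity furnish the two constants $M_1,M_2$ that combine with the norm-ratio constants to give the final $M$. Because this is the exact mirror image of the argument already spelled out for Lemma~\ref{lemma:upper-bound}, the remaining steps are routine and I would simply indicate that they proceed ``in the same way as the proof of Lemma~\ref{lemma:upper-bound}''.
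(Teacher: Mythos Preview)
Your proposal is correct and follows essentially the same route as the paper: the paper's proof of Lemma~\ref{lem:lower bound in appendix added} simply records the reversed pointwise bound on $\ell_{\mathrm{CS}}$, the reversed $\beta_{\boldsymbol{b},\theta}$-inequality, and the swapped $\inf_\theta/\sup_\theta$ cases, and then states that ``the claim is shown in the same way as Lemma~\ref{lemma:upper-bound}''. One small slip: in your first step the case-split should read ``when $\langle\cdot,\cdot\rangle<0$ divide by $\min$, when $\langle\cdot,\cdot\rangle\geq 0$ divide by $\max$'' (matching the displayed upper bound in Lemma~\ref{lemma:upper-bound}), not the other way around.
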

\begin{proof}
We note that for every $\boldsymbol{s}, \hat{\boldsymbol{s}}$, we have
\begin{align*}
    \ell_{\operatorname{CS}}\left(\hat{\boldsymbol{s}}, \boldsymbol{s}\right) \geq 
    \begin{cases}
    -\frac{\left\langle \hat{\boldsymbol{s}}, \boldsymbol{s}\right\rangle}{\max _{\boldsymbol{s}, \hat{\boldsymbol{s}}}\left\|\hat{\boldsymbol{s}}\right\|_{2}\|\boldsymbol{s}\|_{2}}, & \text { if }\left\langle \hat{\boldsymbol{s}}, \boldsymbol{s}\right\rangle<0, \\
    -\frac{\left\langle \hat{\boldsymbol{s}}, \boldsymbol{s}\right\rangle}{\min _{\boldsymbol{s}, \hat{\boldsymbol{s}}}\left\|\hat{\boldsymbol{s}}\right\|_{2}\|\boldsymbol{s}\|_{2}}, & \text { if }\left\langle \hat{\boldsymbol{s}}, \boldsymbol{s}\right\rangle \geq 0. 
    \end{cases}
\end{align*}
Moreover, there exists some $\beta_{\boldsymbol{b},\theta}\in\left\{\rho^{\|\boldsymbol{b}\|_{1}}(1-\rho)^{\|\boldsymbol{1}-\boldsymbol{b}\|_{1}}, \rho^{\|\boldsymbol{1}-\boldsymbol{b}\|_{1}}(1-\rho)^{\|\boldsymbol{b}\|_{1}}\right\}$ that satisfies the following inequality:
\begin{align*}
    &\;\;\;\;-\frac{1}{2} \mathbb{E}_{\boldsymbol{s}, \tilde{\boldsymbol{\epsilon}}}\Bigg[\sum_{\boldsymbol{b}}\bigg(\rho^{\|\boldsymbol{b}\|_{1}}(1-\rho)^{\|\boldsymbol{1}-\boldsymbol{b}\|_{1}}\left\langle \hat{\boldsymbol{s}} \odot \boldsymbol{b}, \boldsymbol{s} \odot \boldsymbol{b}\right\rangle \\
    &\;\;\;\;\;\;\;\;\;\;\;\;\;\;\;\;\;\;\;\;\;\;\;\;\;\;\;\;\;\;\;\;\;\;\;\;\;\;\;\;\;\;\;\;\;\;\;\;\;\;+\rho^{\|1-\boldsymbol{b}\|_{1}}(1-\rho)^{\|\boldsymbol{b}\|_{1}}\left\langle \hat{\boldsymbol{s}} \odot(\mathbf{1}-\boldsymbol{b}), \boldsymbol{s} \odot(\mathbf{1}-\boldsymbol{b})\right\rangle\bigg)\Bigg]\\
    &\leq -\frac{1}{2} \sum_{\boldsymbol{b}}\left(\beta_{\boldsymbol{b}, \theta} \mathbb{E}_{\boldsymbol{s}, \tilde{\boldsymbol{\epsilon}}}\left[\left\langle \hat{\boldsymbol{s}} \odot \boldsymbol{b}, \boldsymbol{s} \odot \boldsymbol{b}\right\rangle\right]+\beta_{\boldsymbol{b}, \theta} \mathbb{E}_{\boldsymbol{s}, \tilde{\boldsymbol{\epsilon}}}\left[\left\langle \hat{\boldsymbol{s}} \odot(\mathbf{1}-\boldsymbol{b}), \boldsymbol{s} \odot(\mathbf{1}-\boldsymbol{b})\right\rangle\right]\right).
\end{align*}
Furthermore,
\begin{align*}
    -\frac{1}{2} \sum_{\boldsymbol{b}} \beta_{\boldsymbol{b}, \theta} \mathbb{E}_{\boldsymbol{s}, \tilde{\boldsymbol{\epsilon}}}\left[\left\langle \hat{\boldsymbol{s}}, \boldsymbol{s}\right\rangle\right]\leq \left\{\begin{array}{lr}
    -\left(\inf_{\theta} \frac{1}{2} \sum_{\boldsymbol{b}} \beta_{\boldsymbol{b}, \theta}\right) \mathbb{E}_{\boldsymbol{s}, \tilde{\boldsymbol{\epsilon}}}\left[\left\langle \hat{\boldsymbol{s}}, \boldsymbol{s}\right\rangle\right], & \text { if } \mathbb{E}_{\boldsymbol{s}, \tilde{\boldsymbol{\epsilon}}}\left[\left\langle \hat{\boldsymbol{s}}, \boldsymbol{s}\right\rangle\right]>0, \\
    -\left(\sup_{\theta} \frac{1}{2} \sum_{\boldsymbol{b}} \beta_{\boldsymbol{b}, \theta}\right) \mathbb{E}_{\boldsymbol{s}, \tilde{\boldsymbol{\epsilon}}}\left[\left\langle \hat{\boldsymbol{s}}, \boldsymbol{s}\right\rangle\right], & \text { if } \mathbb{E}_{\boldsymbol{s}, \tilde{\boldsymbol{\epsilon}}}\left[\left\langle \hat{\boldsymbol{s}}, \boldsymbol{s}\right\rangle\right]<0.
            \end{array}\right.
\end{align*}
Therefore, the claim is shown in the same way as Lemma~\ref{lemma:upper-bound}.
\end{proof}

As a direct consequence of Lemma~\ref{lemma:upper-bound} and Lemma~\ref{lem:lower bound in appendix added}, we obtain the claim of Proposition~\ref{prop:tightness of Eq 9}.

Proposition~\ref{prop:tightness of Eq 9} clarifies the motivation why we propose the loss in the form of Eq.\eqref{eq:dcs loss}. Indeed, we can consider to minimize the loss $\mathbb{E}_{\boldsymbol{s},\boldsymbol{b} }\left[\mathbb{E}_{\tilde{\boldsymbol{\epsilon}}}\left[\ell_{\mathrm{CS}}\left( \boldsymbol{b}\odot \hat{\boldsymbol{s}}, \boldsymbol{b}\odot\boldsymbol{s}\right) |\boldsymbol{s},\boldsymbol{b}\right]\right]$, instead of the supervised loss $\mathbb{E}_{\boldsymbol{s}, \tilde{\boldsymbol{\epsilon}}}\left[\ell_{\mathrm{CS}}\left(\hat{\boldsymbol{s}}, \boldsymbol{s}\right)\right]$.
Unfortunately, in our setting described in Section~\ref{subsec:def of dcs loss}, we cannot minimize the loss $\mathbb{E}_{\boldsymbol{s},\boldsymbol{b} }\left[\mathbb{E}_{\tilde{\boldsymbol{\epsilon}}}\left[\ell_{\mathrm{CS}}\left( \boldsymbol{b}\odot \hat{\boldsymbol{s}}, \boldsymbol{b}\odot\boldsymbol{s}\right) |\boldsymbol{s},\boldsymbol{b}\right]\right]$, since the clean data $\boldsymbol{s}$ is not available. Surprisingly, however, Theorem~\ref{thm:theorem1} makes it possible to minimize $\mathbb{E}_{\boldsymbol{s},\boldsymbol{b} }\left[\mathbb{E}_{\tilde{\boldsymbol{\epsilon}}}\left[\ell_{\mathrm{CS}}\left( \boldsymbol{b}\odot \hat{\boldsymbol{s}}, \boldsymbol{b}\odot\boldsymbol{s}\right) |\boldsymbol{s},\boldsymbol{b}\right]\right]$ indirectly without the clean data $\boldsymbol{s}$, i.e., we can minimize the right hand of Eq.\eqref{eq: sufficient condition for theorem 1} instead.

%\subsection{Proof of Theorem~\ref{thm:estimation_error_bd_of_hatc}}
\subsection{Proofs of Theorem~\ref{thm:weight-approximation} and Theorem~\ref{thm:estimation_error_bd_of_hatc}}
\label{append:ConvergenceProof_hatc}
Let us briefly review some properties of sub-Gaussian distribution and sub-exponential distribution. 
A $D$-dimensional centered random vector $X$ is sub-Gaussian with the parameter $\bar{\sigma}^2$ if it satisfies 
$\mathbb{E}[e^{\lambda\bm{u}^{T}X}]\leq  e^{\lambda^2\bar{\sigma}^2/2}$
%$\mathbb{E}[e^{\lambda \bm{u}^T \widetilde{{\bm\epsilon}}}]\leq e^{\lambda^2\bar{\sigma}^2/2}$
for any $\lambda\in\mathbb{R}$ and any $D$-dimensional unit vector $\bm{u}$. 
We write  $X\sim \mathrm{subG}(\bar{\sigma}^2)$. 
On the other hand, 
a centered one-dimensional random variable $Z$ is sub-exponential with the parameter $(\bar{\sigma}^2,\alpha)$ if 
$\mathbb{E}[e^{\lambda Z}]\leq e^{\lambda^2 \bar{\sigma}^2/2}$ holds for any $|\lambda|<1/\alpha$. 
We write $Z\sim\mathrm{subE}(\bar{\sigma}^2,\alpha)$. 
It is well-known that the square of a one-dimensional sub-Gaussian random variable 
yields sub-exponential random variables. 
Indeed, for one-dimensional random variable $X\sim\mathrm{subG}(\bar{\sigma}^2)$, it holds that
$X^2-\mathbb{E}[X^2]\sim\mathrm{subE}(32\bar{\sigma}^4,4\bar{\sigma}^2)$~\citep{honorio14:_tight_bound_expec_risk_linear}. 

The moment condition of one-dimensional sub-Gaussian and sub-exponential random variables 
enables us to evaluate the tail probability;  
$\mathrm{Pr}(|X|\geq t)\leq 2e^{-t^2/(2\bar{\sigma}^2)}$ for 
$X\sim\mathrm{subG}(\bar{\sigma}^2)$ 
and 
$\mathrm{Pr}(|X|\geq t)\leq 2e^{-\frac{1}{2}\min\{t^2/\bar{\sigma}^2,\,t/\alpha\}}$ for 
$X\sim\mathrm{subE}(\bar{\sigma}^2,\alpha)$. 
When $X\sim\mathrm{subG}(\bar{\sigma}^2)$ or $X\sim\mathrm{subE}(\bar{\sigma}^2,\alpha)$, 
the moment of any order, $\mathbb{E}[|X|^k]$, is finite and in particular, 
$\mathbb{E}[X^2]\leq \bar{\sigma}^2$ holds.

\begin{proof}
 [Proof of Theorem~\ref{thm:weight-approximation}]
  The function $k_{D,\sigma}(\|\bm{s}\|_2)$ is expressed by 
 \begin{align*}
  k_{D,\sigma}(\|\bm{s}\|_2)
  =
  \mathbb{E}\left[
  \frac{\frac{\epsilon_1}{\sigma\sqrt{D}}+c}{\sqrt{\left(\frac{\epsilon_1}{\sigma\sqrt{D}}+c\right)^2
  +\frac{\|\bm{\epsilon}\|^2}{\sigma^2 D}-\frac{\epsilon_1^2}{\sigma^2 D}}  } 
  \right]. 
 \end{align*}
 For small numbers $\varepsilon$ and $\delta$, it holds that 
 \begin{align*}
     \left|\frac{\varepsilon+c}{\sqrt{(\varepsilon+c)^2+1+\delta}} -\frac{c}{\sqrt{c^2+1}} \right|
    &  \leq 
      c\left|  \frac{1}{\sqrt{(\varepsilon+c)^2+1+\delta}}  -  \frac{1}{\sqrt{c^2+1}}  \right|
      +
      \frac{|\varepsilon|}{\sqrt{(\varepsilon+c)^2+1+\delta}}\\
    &  \leq 
      (c+|\varepsilon|)\left|  \frac{1}{\sqrt{(\varepsilon+c)^2+1+\delta}}  -  \frac{1}{\sqrt{c^2+1}}  \right|
      +
      \frac{|\varepsilon|}{\sqrt{c^2+1}}\\
    &  \leq   
      \frac{(c+|\varepsilon|)(2c|\varepsilon|+\varepsilon^2+|\delta|)}{(c^2+1)^{3/2}}
      +
      \frac{|\varepsilon|}{\sqrt{c^2+1}}
 \end{align*}
 as long as $(c+\varepsilon)^2+1+\delta>0$. 
 The last inequality comes from the fact that 
 $|\frac{1}{\sqrt{c^2+1+\delta}}-\frac{1}{\sqrt{c^2+1}}|\leq |\delta|/(c^2+1)^{3/2}$ whenever $c^2+1+\delta>0$. 
 For the sub-Gaussian random variable $\varepsilon=\epsilon_1/\sigma\sqrt{D}$,
 it holds that $\mathbb{E}[|\varepsilon|^k]=O(D^{-k/2})$ for a natural number $k$. 
 For $\delta=\frac{\|\bm{\epsilon}\|^2}{\sigma^2 D}-1-\frac{\epsilon_1^2}{\sigma^2 D}$, we have
 \begin{align*}
      \mathbb{E}[|\delta|]
      &\leq 
      \mathbb{E}\left[\left|\frac{\|\bm{\epsilon}\|^2}{\sigma^2 D}-1\right|\right]+\frac{1}{D} 
      \leq 
      \sqrt{\mathbb{E}\left[\left|\frac{\|\bm{\epsilon}\|^2}{\sigma^2 D}-1\right|^2\right]}+\frac{1}{D} \\
      &\leq 
      \sqrt{\mathbb{E}\left[\left|\frac{\|\bm{\epsilon}\|^2}{\sigma^2 D}-1\right|^2\right]}+\frac{1}{D} 
      \leq \frac{2\bar{\sigma}^2}{\sigma^2\sqrt{D}}+\frac{1}{D}, 
 \end{align*}
 where the third inequality comes from the assumption that $\frac{\|\bm{\epsilon}\|^2}{\sigma^2 D}-1\sim
 \mathrm{subE}(4\bar{\sigma}^4/\sigma^4D,4\bar{\sigma}^4/\sigma^2 D)$. Hence, we have $\mathbb{E}[|\delta|]=O(D^{-1/2})$ and $\mathbb{E}[|\varepsilon||\delta|]=O(D^{-1})$. Therefore, we obtain
 \begin{align*}
      \left|k_{D,\sigma}(\|\bm{s}\|_2)-\frac{c}{\sqrt{c^2+1}}\right|
      &\leq 
      \mathbb{E}\left[  \left|
      \frac{\frac{\epsilon_1}{\sigma\sqrt{D}}+c}{\sqrt{\left(\frac{\epsilon_1}
      {\sigma\sqrt{D}}+c\right)^2
      +1+(\frac{ \|\bm{\epsilon}\|^2}{\sigma^2 D}-1-\frac{\epsilon_1^2}{\sigma^2 D})}} -\frac{c}{\sqrt{c^2+1}} 
      \right|  \right] \\
      &=O\left(D^{-1/2}\right). 
 \end{align*}
\end{proof}

\begin{proof}[Proof of Theorem~\ref{thm:estimation_error_bd_of_hatc}]
We separately consider the numerator and denominator of $c=\frac{\|\bm{s}\|_2}{\sigma\sqrt{D}}$ 
to evaluate the estimation error $|c-\widehat{c}|$. Remember that $\frac{1}{D}\mathbb{E}[\bm{x}^{\top}\tilde{\bm{x}}] = \frac{\|\bm{s}\|_2^2}{D}$, and $\frac{1}{D}\mathbb{E}[\|\bm{x}-\tilde{\bm{x}}\|_2^2] = 2\sigma^2$ hold. Let us define the errors, $e_1$ and $e_2$ as follows: 
\begin{align*}
 e_1 
 &:= 
 \bigg| \frac{1}{D}[\bm{x}^{\top}\tilde{\bm{x}}]_+ - \frac{\|\bm{s}\|_2^2}{D}\bigg|
 \leq
 \bigg| \frac{1}{D}\bm{x}^{\top}\tilde{\bm{x}} - \frac{\|\bm{s}\|_2^2}{D}\bigg|
 \leq
 \bigg|\frac{1}{D}\sum_d s_d(\epsilon_d+\tilde{\epsilon}_d) \bigg|
 + \bigg|\frac{1}{D}\sum_d \epsilon_d \tilde{\epsilon}_d\bigg|,\\
 e_2 
 &:=
 \bigg| \frac{1}{D}\|\bm{x}-\tilde{\bm{x}}\|_2^2 - 2\sigma^2 \bigg|
% =
% \bigg|\frac{1}{D} \sum_d(\epsilon_d^2+\tilde{\epsilon}_d^2- 2\sigma^2-2\epsilon_d\tilde{\epsilon}_d)\bigg|
\leq
   \bigg|\frac{1}{D} \sum_d(\epsilon_d^2-\sigma^2)\bigg|
 + \bigg|\frac{1}{D} \sum_d(\widetilde{\epsilon}_d^2-\sigma^2)\bigg|
 + 2\bigg|\frac{1}{D} \sum_d \epsilon_d \widetilde{\epsilon}_d\bigg|. 
\end{align*}

The error term $e_1$ is bounded above by the sum of two terms. For the first term, we use the tail probability for the
sub-Gaussian. Since $\bm{\epsilon}$ and $\tilde{\bm{\epsilon}}$ are independent, we have 
\begin{align*}
     \frac{1}{D}\sum_{d}s_d(\epsilon_d+\tilde{\epsilon}_d)\sim
     \mathrm{subG}(2\|\bm{s}\|_2^2\bar{\sigma}^2/D^2). 
\end{align*}
Let us confirm that the sum-product $\frac{1}{D}\sum_{d}\epsilon_d\tilde{\epsilon}_d$ is sub-exponential. 
From the assumption that $\widetilde{{\bm\epsilon}}\sim \mathrm{subG}(\bar{\sigma}^2)$, 
for any unit vector $(u_1,\ldots,u_D)$ we have 
%As $\epsilon_d$ is Gaussian and $\tilde{\epsilon}_d$ is sub-Gaussian, we have
\begin{align*}
     \mathbb{E}[e^{\lambda \sum_d{u}_d\epsilon_d\tilde{\epsilon}_d}]
    & \leq
     \mathbb{E}[e^{\lambda^2 (\sum_{d}u_d^2\epsilon_d^2)\bar{\sigma}^2/2}]
     \leq 
     \max_{d}\mathbb{E}[e^{\lambda^2 \epsilon_d^2\bar{\sigma}^2/2}]
     =
     \max_{d}e^{\lambda^2\bar{\sigma}^4/2}\mathbb{E}[e^{\lambda^2 (\epsilon_d^2-\sigma^2)\bar{\sigma}^2/2}]\\
    &\leq
     e^{\lambda^2\bar{\sigma}^4/2}e^{\lambda^4\bar{\sigma}^4\cdot 32\bar{\sigma}^4/8}
     \leq 
     e^{\lambda^2(5\bar{\sigma}^4)/2}
\end{align*}
for $|\lambda|<1/\sqrt{2}\bar{\sigma}^2$. Hence, we have $\frac{1}{\sqrt{D}}\sum_{d}\epsilon_d\tilde{\epsilon}_d\sim
\mathrm{subE}(5\bar{\sigma}^4,\sqrt{2}\bar{\sigma}^2)$ and thus, 
$\frac{1}{D}\sum_{d}\epsilon_d\tilde{\epsilon}_d\sim\mathrm{subE}(5\bar{\sigma}^4/D,\sqrt{2}\bar{\sigma}^2/\sqrt{D})$. 
Therefore, the probabilistic inequality of $e_1$ is given as follows: for $b\geq (5\sqrt{2}\vee 3c^2)\bar{\sigma}^2/\sqrt{D}$, 
\begin{align*}
     \mathrm{Pr}(e_1\geq b)
     &\leq
     \mathrm{Pr}\bigg(\bigg|\frac{1}{D}\sum_{d}s_d(\epsilon_d+\tilde{\epsilon}_d)\bigg|\geq \frac{b}{2}\bigg)
     +
     \mathrm{Pr}\bigg(\bigg|\frac{1}{D}\sum_{d}\epsilon_d \tilde{\epsilon}_d\bigg|\geq \frac{b}{2}\bigg) \\
     &\leq 2e^{-b^2D/16c^2\bar{\sigma}^4}
     + 2e^{-b\sqrt{D}/4\sqrt{2}\bar{\sigma}^2} \leq  4e^{-b \sqrt{D}/6\bar{\sigma}^2}. 
\end{align*}
%where $A:=$. %\geq 16\bar{\sigma}^4c^2\vee 40\bar{\sigma}^4$. 
Let us consider the upper bound of $e_2$. From the Assumption (A6), it holds that 
$\sum_{d}(\epsilon_d^2-\sigma^2)/D\sim\mathrm{subE}(4\bar{\sigma}^4/D, 4\bar{\sigma}^2)$. 
Hence, we have 
\begin{align*}
    \mathrm{Pr}\bigg(\bigg|\frac{1}{D}\sum_{d}(\epsilon_d^2-\sigma^2)\bigg|\geq \frac{b}{4}\bigg) 
     \leq 
     2e^{-b^2D/128\bar{\sigma}^4}
\end{align*}
for $b\leq 4\bar{\sigma}^2$, and 
\begin{align*}
     \mathrm{Pr}(e_2\geq b)
     &\leq
     2\mathrm{Pr}\bigg(\bigg|\frac{1}{D}\sum_{d}(\epsilon_d^2-\sigma^2)\bigg|\geq \frac{b}{4}\bigg)
     +
     \mathrm{Pr}\bigg(\bigg|\frac{1}{D}\sum_{d}\epsilon_d \tilde{\epsilon}_d\bigg|\geq \frac{b}{4}\bigg)\\
     &\leq
      4e^{-b^2D/(128\bar{\sigma}^4)}
      +2e^{-b \sqrt{D}/12\bar{\sigma}^2} 
      \leq
     6e^{-b \sqrt{D}/12\bar{\sigma}^2}
\end{align*}
holds for $11\bar{\sigma}^2/\sqrt{D}\leq b\leq 4\bar{\sigma}^2$. Let us define $\omega_1=\frac{12\bar{\sigma}^2}{\sqrt{D}}\log\frac{8}{\delta}$ and $\omega_2=\frac{12\bar{\sigma}^2}{\sqrt{D}}\log\frac{12}{\delta}$. When both $\omega_1$ and $\omega_2$ are greater than $\bar{b}:=(3c^2 \vee 11)\bar{\sigma}^2/\sqrt{D}$ and less than $4\bar{\sigma}^2$, the inequalities 
\begin{align*}
    \frac{2\|\bm{s}\|_2^2}{D}-\omega_1
     \leq 
     \frac{2}{D}[\bm{x}^{\top}\tilde{\bm{x}}]_+
     \leq 
     \frac{2\|\bm{s}\|_2^2}{D}+\omega_1,\quad
     2\sigma^2-\omega_2
     \leq 
     \frac{1}{D}\|\bm{x}-\tilde{\bm{x}}\|_2^2
     \leq 
     2\sigma^2+\omega_2
\end{align*}
simultaneously hold with probability greater than $1-\delta$. A computation yields that when 
$\bar{b}\leq \omega_1\leq \sigma^2 (4\wedge c^2)$ and $\bar{b}\leq \omega_2\leq \sigma^2$, we have
\begin{align*}
    &0< \bigg(
    c^2 
    -\frac{\omega_1}{2\sigma^2}\bigg)\bigg(1-\frac{\omega_2}{2\sigma^2}\bigg)
     \leq 
     \frac{\frac{2\|\bm{s}\|_2^2}{D}-\omega_1}{2\sigma^2+\omega_2}
     \leq 
     \widehat{c}^2
     \leq 
     \frac{\frac{2\|\bm{s}\|_2^2}{D}+\omega_1}{2\sigma^2-\omega_2}
     \leq 
     \bigg(c^2 
     +\frac{\omega_1}{2\sigma^2}\bigg)\bigg(1+\frac{\omega_2}{\sigma^2}\bigg)\\
    \Longrightarrow\ &\ 
     |c^2-\widehat{c}^2|\leq \frac{\omega_1}{\sigma^2}+c^2\frac{\omega_2}{\sigma^2}
\end{align*}
with probability greater than $1-\delta$. Eventually, the following inequality holds with probability greater than $1-\delta$: 
\begin{align*}
     |c-\widehat{c}|
     \leq 
     \frac{1}{\sigma^2 c} (\omega_1+c^2\omega_2)
    % \leq
    % \frac{1}{\sigma^2 c} (2A+c^2 B)\frac{\log(12/\delta)}{\sqrt{D}}
     \leq 
    12 \frac{\bar{\sigma}^2}{\sigma^2}
     \bigg(c+\frac{1}{c}\bigg)
     \frac{\log(12/\delta)}{\sqrt{D}}, 
 \end{align*}
 when $\omega_1$ and $\omega_2$ satisfy the above inequalities. A sufficient condition for $\omega_1$ and $\omega_2$ 
 is that $D\geq 
 D_{c,\bar{\sigma}^2,\sigma^2,\delta}:=
\big(\frac{12}{c^2\wedge   1}\frac{\bar{\sigma}^2}{\sigma^2}\log\frac{12}{\delta}\big)^2$
 for $\delta$ such that $0<\delta<\delta_{c}:=1\wedge 8e^{-c^2/4}$. 
\end{proof}

\section{EXPERIMENTAL DETAILS}
\label{append: Experimental Details}

\subsection{Details of Datasets}
\label{append:details of datasets}

%\noindent
\paragraph{MNIST~\citep{lecun1998gradient}} We use the original MNIST dataset that consists of handwritten digits in the experiments. The image sizes of the images used are $28\times 28$, and the channel size is $1$.
We use 60,000 training images for the stage of training and 10,000 test images for evaluation, where the number of classes is 10.

%\noindent
\paragraph{USPS~\citep{hull1994database}} We use the original USPS dataset containing handwritten digits that are represented by grayscale images of size $16\times 16$.
We use 7,291 training images and 2,007 test images, where the number of classes is 10.

%\noindent
\paragraph{Pendigits~\citep{dua2019uci}} We use the original Pendigits dataset, where this dataset consists of vector data including 16 integers and assigned class labels. The total number of classes is 10.

%\noindent
\paragraph{Fashion-MNIST~\citep{xiao2017fashion}} We use the original Fashion-MNIST dataset.
The image sizes used in the experiments are $28\times 28$, and the channel size is 1.
We use 60,000 training images and 10,000 test images for training and evaluation, respectively.
Note that the number of classes is 10.

%\noindent
\paragraph{CIFAR10~\citep{krizhevsky2009learning}} We use the original CIFAR10 dataset.
The dataset we used contains 10 classes of objects whose image sizes are $32\times 32$, and the channel size is $3$.
The total amount of images we used is 50,000 for its training set and 10,000 for the test set.

%\noindent
\paragraph{CIFAR100~\citep{krizhevsky2009learning}} We also use the original CIFAR100 dataset. Note that the CIFAR100 dataset we have used includes 100 classes within both the training and test sets.

%\noindent
\paragraph{Tiny-ImageNet~\citep{le2015tiny}} We use the Tiny-ImageNet, where it is a subset from the ImageNet dataset~\citep{deng2009imagenet}, where Tiny-ImageNet contains only 200 classes of images within ImageNet.
The image sizes we have used are $64\times 64$, and the channel is 3.

\paragraph{ESC-50~\citep{piczak2015dataset}} We use the original ESC-50 dataset, where this dataset consists of labeled environmental audio recordings. The dataset contains 2,000 recordings, where the recordings are categorized into 5 major categories, and each category has 10 classes. The length of each recording in the dataset is 5 second. The detail of the dataset can be found in the original paper~\citep{piczak2015dataset}.

\subsection{Details of \textsf{Expt0}}
\label{append: details EXPT0}

\paragraph{UMAP Visualization in Figure~\ref{fig: umap dcs vs cs on noisy mnist}}
Using the trained encoders obtained via the CS loss and the dCS loss on Noisy-MNIST ($\sigma=0.3$), the representations are visualized by UMAP~\citep{mcinnes2018umap} in the two-dimensional space.
Regarding with visualization procedure, let $\psi^\ast$ be the trained parameter in an encoder $f_{\psi}$. Then, compute $\bm{z}^{(i)} = f_{\psi^\ast}(\bm{x}^{(i)}) \in \mathbb{R}^C$ ($i=1,...,n$), and thereafter the visualization is defined as two-dimensional transformed vectors of $\{\bm{z}^{(i)}\}_{i=1}^n$ by UMAP.
Here, we have used umap\_neighbors=10, umap\_min\_dist=0 and umap\_metric='euclidean' for UMAP parameters.

\paragraph{Prediction of Clean Image in Figure~\ref{fig:denoised-noisymnist}}
The procedure to obtain the predicted image is as follows: using the trained AE $h_{\theta^\ast}$, compute $h_{\theta^\ast}(\bm x) \in \mathbb{R}^D$ for noisy image  $\bm x$. Then, find max and mini values of $h_{\theta^\ast}(\bm x)$. 
Let $M_1$ (resp. $M_0$) denote the max value (reps. mini value). Compute $i$-th value of $\hat{\bm x} \in \mathbb{R}^D$ as follows: $\hat{\bm x}_i = \frac{h_{\theta^\ast, i}(\bm x) - M_0}{M_1 - M_0} \in [0,1]$, where  $h_{\theta^\ast, i}(\bm x)$ is the $i$-th value of $h_{\theta^\ast}(\bm x)$.
At last, the predicted clean of $\bm x$ is given by  $\hat{\bm x}$.

\subsection{Details of Hyper-Parameters' Selection}
\label{append:Details of Hyper-Parameters}

In this section, we summarize the hyper-parameters used across our experiments. 
Most of our parameters follow the suggested values of their original works. %Nevertheless, 
We list them here for completeness.

\paragraph{\textsf{Expt0}, \textsf{Expt1}:}
%\textcolor{blue}{
In Table~\ref{table:HP-exall}, we show the parameters used in BSM of Definition~\ref{dfn:blind-spot masking}.
%}
%\textcolor{blue}{
In Table~\ref{table:HP-ex1}, we show parameters related to \textsf{Expt0} and \textsf{Expt1}.
%}

% \paragraph{:}

\paragraph{\textsf{Expt2}:} 
In Table~\ref{table:HP-exall}, we show the parameters of blind-spot masking (see Definition~\ref{dfn:blind-spot masking}), which is shared across all experiments.
In Table~\ref{table:HP-ex2}, we show the parameters of \textsf{Expt2}, where $\lambda$ is searched over $\{0.0001, 0.001, 0.01, (0.02\;\text{for Tiny-ImageNet}\;|\;0.05\;\text{for CIFAR})\}$ after preliminary experiments.
In Table~\ref{table:HP-ex2Decoder}, the detailed structure of the decoder $\tilde{f}_\zeta$ used in \textsf{Expt2} with Tiny-ImageNet is shown.

\paragraph{\textsf{Expt3}:} 
The original noisy data $\bm x \in \mathbb{R}^{220500}$ is at first transformed into the log Mel spectrogram, whose size is 440x60. Then, the log-Mel-spectrogram is input to the ViT-based encoder (see \citet{DBLP:conf/iclr/DosovitskiyB0WZ21} for ViT). In addition, the number of epochs and the batch-size to train the ViT-based AE are 4000 and 64, respectively. The optimizer is the Adam-optimizer~\citep{kingma2014adam} with the learning rate 0.001. Moreover, for $\tau$-AMN of Definition~\ref{dfn:amn}, $\rho = 0.3$ and $\Delta=2$.

\begin{table}[t!]
    \caption{Hyper-parameters of all experiments.}
    \label{table:HP-exall}
    \centering
    \scalebox{0.85}{
    \begin{tabular}{lc}
        \toprule
        Parameter & Value\\
        \midrule
        blind-spot masking: $\rho := \text{Pr}(b_d = 1)$ & 10\% \\
        blind-spot masking: mini-patch size & 1 \\
        \bottomrule
    \end{tabular}
    }
\end{table}

\begin{table}[t!]
    \caption{Hyper-parameters used in \textsf{Expt0} and \textsf{Expt1}. Note that the hyper-parameters for the setting of UMAP are inspired by~\citet{mcconville2021n2d}.}
    \label{table:HP-ex1}
    \centering
    \scalebox{0.85}{
    \begin{tabular}{lc}
        \toprule
        Parameter & Value\\
        \midrule
        UMAP: embedding dimension & 10 \\
        UMAP: neighbors & 20 \\
        UMAP: minimum distance & 0.00 \\
        UMAP: metric & "euclidean" \\
        \midrule
        Optimizer & Adam~\citep{kingma2014adam} \\
        Learning rate & 0.001 \\
        Adam: $\beta_1$ & 0.9 \\
        Adam: $\beta_2$ & 0.999 \\
        Weight decay & 0 \\
        lr scheduling & None \\
        \midrule
        batchsize & 256 \\
        pretraining epochs & 800 \\
        \bottomrule
    \end{tabular}
    }
\end{table}

\begin{table}[t!]
    \caption{Hyper-parameters of \textsf{Expt2}. Note that for the selection of parameters, we follow~\citet{chen2021exploring}.}
    \label{table:HP-ex2}
    \centering
    \scalebox{0.75}{
    \begin{tabular}{lc}
        \toprule
        Parameter & Value\\
        \midrule
        Optimizer & SGD \\
        Momentum (SGD) & 0.9 \\
        Base learning rate at batchsize 256 (CIFAR) & 0.03 \\
        Weight decay (CIFAR) & 0.0005 \\
        Base learning rate at batchsize 256 (Tiny-ImageNet) & 0.05 \\
        Weight decay (Tiny-ImageNet) & 0.0001 \\
        Projector output dim & 2048 \\
        lr scheduling & Cosine annealing without warmup~\citep{loshchilov2017sgdr} \\
        \midrule
        batchsize & 512 \\
        Data augmentations & following SimSiam~\citep{chen2021exploring} without Gaussian Blur \\
        pretraining epochs & 800 \\
        \midrule
        $\lambda$ for SimSiam-dCS & 0.01 \\
        \bottomrule
    \end{tabular}
    }
\end{table}

\begin{table}[t!]
    \caption{Decoder used for Tiny-ImageNet in \textsf{Expt2}.}
    \label{table:HP-ex2Decoder}
    \centering
    \scalebox{0.75}{
    \begin{tabular}{lccccc}
        \toprule
        Layer & Kernel size & Channels & Scaling & Output shape\\
        \midrule
        Input & - & 2048 & - & [N, 2048] \\
        fc1, ReLU & - & 2048 & - & [N, 2048] \\
        Reshape & - & - & - & [N, 128, 4, 4] \\
        conv1, BatchNorm, ReLU & 3 & 1024 & 1x & [N, 1024, 4, 4] \\
        Pixel shuffle & - & - & 2x & [N, 256, 8, 8] \\
        conv2, BatchNorm, ReLU & 3 & 512 & 1x & [N, 512, 8, 8] \\
        Pixel shuffle & - & - & 2x & [N, 128, 16, 16] \\
        conv3, BatchNorm, ReLU & 3 & 256 & 1x & [N, 256, 16, 16] \\
        Pixel shuffle & - & - & 4x & [N, 16, 64, 64] \\
        conv4 & 3 & 3 & 1x & [N, 3, 64, 64] \\
        \bottomrule
    \end{tabular}
    }
\end{table}

\subsection{Details of Computational Environment}
\label{append: details of computatioanl environment}

We used different setup for our experiments due to technical reasons:

%\paragraph{:}

\paragraph{\textsf{Expt0}, \textsf{Expt1}, \textsf{Expt3}:} We used a single-node system with 2 TITAN RTX (24GiB VRAM) and 2 TITAN V (12GiB VRAM) GPUs.

\paragraph{\textsf{Expt2}:} 
We used a single-node system with 2 seperated CPUs and 3 V100 (32GiB VRAM) GPUs. 2 GPUs are connected to 1 CPU and 1 GPU is connected to the other CPU.

\clearpage

\bibliographystyle{plainnat}

\end{document}